\def\eqref#1{equation~\ref{#1}}
\def\1{\bm{1}}
\def\ve{{\bm{e}}}
\def\vr{{\bm{r}}}
\def\vv{{\bm{v}}}
\def\vx{{\bm{x}}}
\def\mA{{\bm{A}}}
\def\mG{{\bm{G}}}
\def\mI{{\bm{I}}}
\def\mL{{\bm{L}}}
\def\mM{{\bm{M}}}
\def\mR{{\bm{R}}}
\def\mS{{\bm{S}}}
\def\mU{{\bm{U}}}
\def\mV{{\bm{V}}}
\def\mW{{\bm{W}}}
\def\mX{{\bm{X}}}
\def\mY{{\bm{Y}}}
\DeclareMathAlphabet{\mathsfit}{\encodingdefault}{\sfdefault}{m}{sl}
\SetMathAlphabet{\mathsfit}{bold}{\encodingdefault}{\sfdefault}{bx}{n}
\def\sR{{\mathbb{R}}}
\def\sV{{\mathbb{V}}}
\DeclareMathOperator*{\argmin}{arg\,min}
\DeclareMathOperator{\sign}{sign}
\DeclareMathOperator{\tr}{\mathsf{tr}}
\title{
PRISM: Distribution-free Adaptive Computation of Matrix Functions for Accelerating Neural Network Training
}
\author{
  Shenghao Yang$^{1,2}$, 
  Zhichao Wang$^{1,2}$, 
  Oleg Balabanov$^{1,2}$, 
  N. Benjamin Erichson$^{1,3}$, and
  Michael W. Mahoney$^{1,2,3}$\\
  $^1$International Computer Science Institute\\
  $^2$University of California, Berkeley\\
  $^3$Lawrence Berkeley National Laboratory\\
  \vspace{3mm}
  \normalsize{
  \texttt{\{shenghao.yang, zhichao.wang, obalaban\}@berkeley.edu}\\
  \texttt{erichson@icsi.berkeley.edu}\\
  \texttt{mmahoney@stat.berkeley.edu}
  }
}
\date{}
\begin{document}

\maketitle

\begin{abstract}
Matrix functions such as square root, inverse roots, and orthogonalization play a central role in preconditioned gradient methods for neural network training. This has motivated the development of iterative algorithms that avoid explicit eigendecompositions and rely primarily on matrix multiplications, making them well suited for modern GPU accelerators.
We present PRISM (Polynomial-fitting and Randomized Iterative Sketching for Matrix functions computation), a general framework for accelerating iterative algorithms for computing matrix functions.
PRISM combines adaptive polynomial approximation with randomized sketching: at each iteration, it fits a polynomial surrogate to the current spectrum via a sketched least-squares problem, adapting to the instance at hand with minimal overhead.
We apply PRISM to accelerate Newton–Schulz-like iterations for matrix square roots and orthogonalization, which are core primitives in machine learning. Unlike prior methods, PRISM requires no explicit spectral bounds or singular value estimates; and it adapts automatically to the evolving spectrum. Empirically, PRISM accelerates training when integrated into Shampoo and Muon optimizers.
\end{abstract}
\section{Introduction}

Matrix functions are extensively used in scientific and engineering applications, and they are of increasing interest in machine learning (ML).
Applications range from computational fluid dynamics~\citep{ndjinga2008computing,castro2016approximate} and computational chemistry~\citep{lin2009fast,lin2009multipole} to optimal transport~\citep{janati2020entropic,minh2022alpha}, Gaussian processes and Bayesian inference~\citep{mallasto2017learning,pleiss2020fast}, uncertainty quantification~\citep{chen2019inference}, computer vision~\citep{wang2021deep,song2021approximate,song2023fast}, and fast optimizers for training deep neural networks~\citep{carlson2015preconditioned, gupta2018shampoo,yao2020pyhessian,yao2021adahessian,jordan2024muon,ahn2025dion}. As such, the ability to compute simple matrix functions, such as square root, inverse root, and orthogonalization (polar decomposition), in a fast and numerically stable manner, can lead to substantial improvements. 

Among the many alternatives for computing matrix functions, iterative algorithms that rely primarily on General Matrix Multiplications (GEMMs) are particularly attractive for GPU-accelerated computing environments~\citep{volkov2008benchmarking,markidis2018nvidia,yan2020demystifying,amsel2025polarexpress,grishina2025cans}. Compared with algorithms based on the singular value decomposition (SVD), or those that involve some form of matrix inversion, GEMMs have much better scaling with respect to the size of a matrix. Because of this, iterative algorithms of the form $\mX_{k+1} = F_k(\mX_k)$, where computing $F_k(\mX_k)$ is fast on accelerators, have recently received increasing interest~\citep{amsel2025polarexpress,grishina2025cans,kim2025matrl}. A simple example is to restrict $F_k$ to be a polynomial $p_k^{(d)}$ of fixed degree $d$ at $k$-th iteration, in which case evaluating $F_k(\mX_k)=p_k^{(d)}(\mX_k)$ only requires GEMMs.

This general approach has received interest in ML due to the effectiveness of the Muon optimizer~\citep{jordan2024muon} for training neural networks. Subsequent work~\citep{amsel2025polarexpress,grishina2025cans} explored the acceleration of the initial convergence of Newton-Schulz-like iterative methods for the polar factor $\mU\mV^T$ of a matrix $\mA$, where $\mU$ and $\mV$ consist of the left and right singular vectors of $\mA$, respectively. In particular, for $d\in\{3,5\}$ and any $K\ge1$, they showed how to construct a polynomial $p^*$ by composing degree-$d$ polynomials such that:
\[
    p^* = \argmin_{p=p_K^{(d)}\circ p_{K-1}^{(d)}\circ\cdots\circ p_1^{(d)}} \max_{\substack{\mA \in \sR^{m\times n}:\\ \boldsymbol\sigma(\mA) \subseteq [\ell,u]}} \|p(\mA) - \mU\mV^T\|_2,
\]
where $\boldsymbol\sigma(\mA)$ denotes the set of singular values of $\mA$. Thus, if the largest and smallest singular values of $\mA$ are known a priori (which, in general, is not the case), the methods proposed by \citet{amsel2025polarexpress} and \citet{grishina2025cans} provide optimal convergence with respect to the spectral norm error $\|\mX_k - \mU\mV^T\|_2$. In addition to polar decomposition, \citet{kim2025matrl} considers computing matrix roots and inverses via a Monte-Carlo Tree Search (MTCS) method to construct iterative algorithms of the form $\mX_{k+1} = r_k(\mX_k)$, where $r_k$ is either a polynomial (i.e., Newton-Schulz-like) or a rational function (i.e., Newton-like); and, when the underlying distribution of singular values is known, they demonstrated strong performance.

While promising, these recently-introduced methods suffer from several disadvantages, which currently limit the broader applicability of this approach.
First, they tend to be solved for a single problem, rather than for a broader class of problems, as is more common in numerical analysis~\citep{higham2005functions}. In ML, they are just applied and evaluated for one specific use case, e.g., within the Muon optimizer where one only needs to compute the polar factor of the gradient matrix. This can lead to strong performance in one setting, but it obscures the broader applicability of the methodology, and it prevents a ``cut-and-paste'' approach of using this methodology to new problem classes and application domains. Second, these recently-introduced methods tend to be parameterized in terms of parameters that are themselves as difficult to compute as solving the original problem. In particular, in practice, we typically do not have prior knowledge of the distribution of singular values or even a tight interval that contains them. Obtaining good estimates on the largest and, in particular, the smallest singular values of a matrix can be as costly as computing its polar factor by using the original Newton-Schulz method. 

To deal with this, \citet{amsel2025polarexpress} suggested fixing the range of singular values to $[\ell,u] = [10^{-3}, 1]$, when the computations are carried out in half precision. However, if the actual range of singular values is much narrower or wider than a predefined interval, which can easily happen, then the convergence behavior can degrade dramatically. An example of this is shown in \Cref{fig:speedup}, where we compare Newton-Schulz variants for computing polar factor (i.e., orthogonalization) and square root. Observe that PolarExpress \citep{amsel2025polarexpress} can even {\em slow down} the convergence of the classical Newton-Schulz, if there is a mismatch between the tightest interval that contains the initial singular values and the interval for which the method is optimized. 

\begin{figure}[t!]
  \centering
  \begin{subfigure}{0.45\textwidth}
    \centering
    \includegraphics[height=5cm]{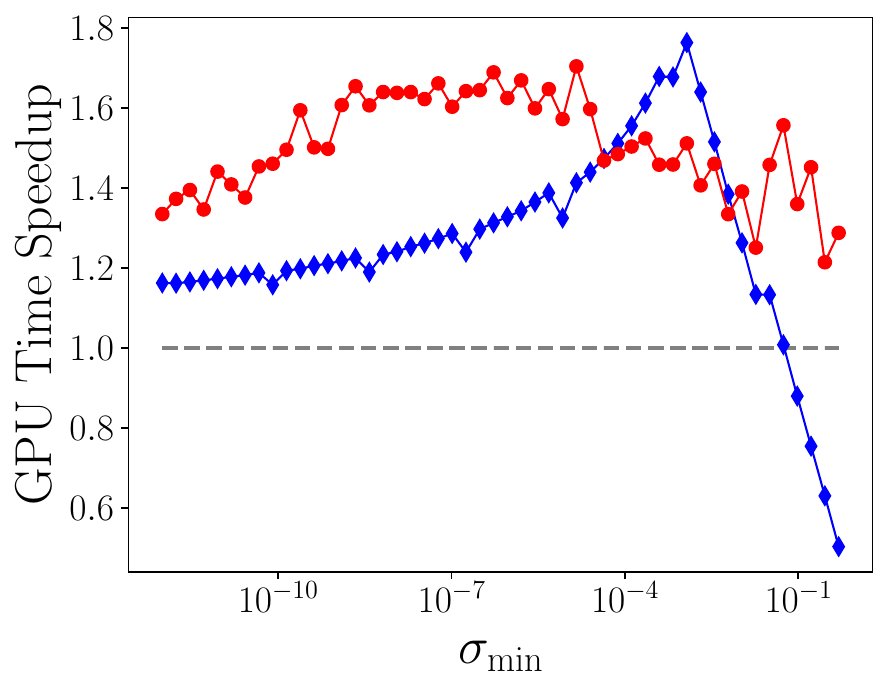}
  \end{subfigure}%
  \begin{subfigure}{0.45\textwidth}
    \centering
    \includegraphics[height=5cm]{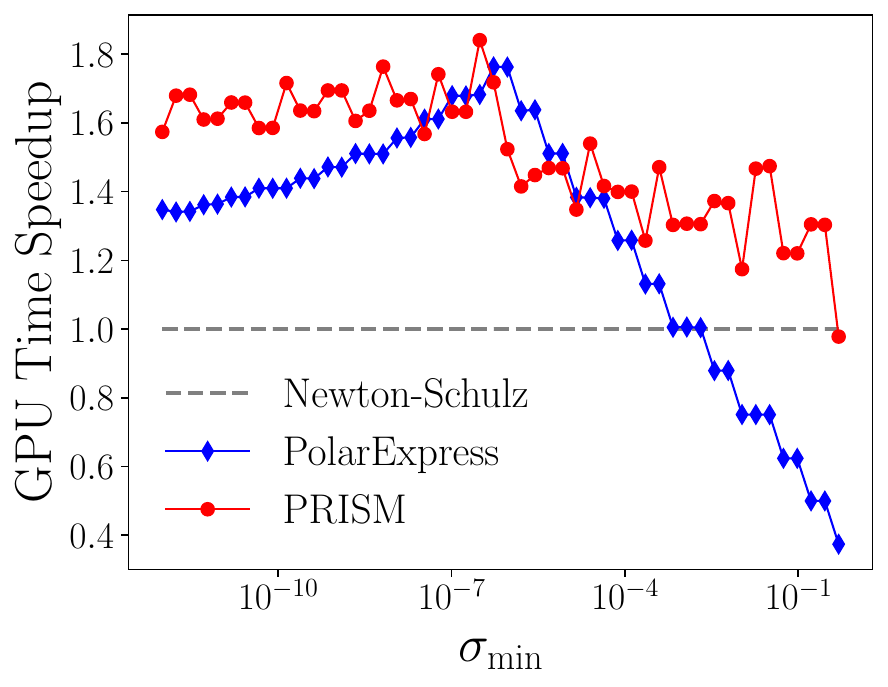}
  \end{subfigure}
  \caption{Speedup in GPU time over the classical Newton-Schulz for polar decomposition (left) and square root (right). We keep $\sigma_{\max}=1$ and vary $\sigma_{\min} \in [10^{-12}, 1/2]$. The PolarExpress variant we use is optimized for $\sigma_{\min} = 10^{-3}$ for polar decomposition (and hence it is optimized for $\sigma_{\min} = 10^{-6}$ for square root). All algorithms are run until convergence. In both cases, we see a performance degradation as $\sigma_{\min}$ deviates from the one PolarExpress is optimized for. PRISM (this work) does not require or assume $\sigma_{\min}$ and has a stable speedup across the entire range.}
  \label{fig:speedup}
\end{figure}

These shortcomings highlight the need for a principled adaptive approach for a broader class of problems that effectively generates polynomials $p_k^{(d)}(\cdot;\mX_0)$ whose coefficients do not assume properties of the spectrum of its input matrix, but instead dynamically adjust to the spectrum. We introduce PRISM (\textbf{P}olynomial-fitting and \textbf{R}andomized \textbf{I}terative \textbf{S}ketching for \textbf{M}atrix functions computation), a general framework for accelerating the computation of matrix functions via spectrum-adaptive polynomial updates~\citep{chen2011computing} and randomized sketching~\citep{randlapack_book_v2_arxiv}.

PRISM targets GPU-friendly iterations and is designed to be broadly applicable, computationally efficient, and robust to variations in spectral structure. 

\textbf{Contributions.} 
Here is a summary of our main contributions.
\begin{itemize}[leftmargin=4mm, itemsep=0pt,topsep=0pt]
\item {\bf General acceleration framework.} PRISM provides a unified, systematic approach for accelerating iterative algorithms for matrix functions, including Newton-Schulz methods for square roots, inverse roots, sign, and polar decomposition; Chebyshev method for the inverse; and inverse Newton for the inverse $p$-th root. See \Cref{tab:methods_for_matrix_functions} for some iterative algorithms accelerated by PRISM.

\item {\bf Spectrum-adaptive without prior spectral knowledge.} RRISM dynamically fits polynomial updates to the evolving spectrum of the current iterate without requiring explicit eigenvalue or singular-value information. This leads to an instance-specific and \emph{distribution-free} acceleration of classical iterative~algorithms.

\item {\bf Efficient randomized polynomial fitting with guarantees.} Using randomized sketching, PRISM reduces the overhead cost of polynomial fitting to $O(n^2\log n)$, which is nearly negligible compared to the $O(n^3)$ cost of matrix multiplications. At the same time, it preserves the convergence behavior of the underlying iterations, both theoretically and empirically.

\item {\bf Empirical validation in neural network optimizers.} We show that PRISM effectively accelerates Newton-Schulz-like algorithms for square roots and polar decomposition on various input matrices, including those with a Marchenko-Pastur law (e.g., neural network weight matrices at initialization) or a heavy-tailed distribution (e.g., pre-trained models~\citep{martin2021implicit,hodgkinson2025models}). When integrated into the Shampoo and Muon optimizers, PRISM efficiently accelerates training large neural networks in both cases.
\end{itemize}
\section{Related Work}

\textbf{Iterative algorithms for matrix function computation.} A large body of numerical linear algebra treats matrix functions via iterations that are dominated by matrix–matrix multiplies and thus well–suited to modern accelerators. Classical methods include Newton/Schulz and Padé-type schemes for matrix sign and polar decomposition \citep{kenney1991rational,higham2004computing,higham2005functions}, stable iterations for the matrix square root and its variants \citep{higham1997stable}, and later improvements using Zolotarev and Halley/QDWH-style rational approximants with careful stability analyses \citep{nakatsukasa2016computing,nakatsukasa2012backward}.
These methods motivate designing polynomial or rational updates that (i) avoid explicit inverses or SVD, (ii) converge quickly over a prescribed spectral interval, and (iii) map cleanly to GEMM-dominant kernels~\citep{fan2018spectrum,fan2020spectrum,arisaka2023principled,ndjinga2008computing,kim2025matrl}.

\textbf{Randomized/sketching methods for matrix functions and traces.}
Randomized numerical linear algebra (RandNLA) provides subspace embeddings and sketching primitives that reduce dimension, while approximately preserving the spectral structure \citep{Mah-mat-rev_BOOK,woodruff2014sketching,RandNLA_PCMIchapter_chapter,nelson2013osnap}. For quantities involving $f(A)$ or ${\rm tr}\,f(A)$, stochastic trace estimation and Krylov/Lanczos quadrature are now standard: Hutchinson/A\-vron–Toledo and sharper sample-complexity bounds \citep{avron2011randomized,roosta2015improved,cortinovis2022randomized}, and SLQ for ${\rm tr}\,f(A)$ with strong empirical and theoretical support \citep{ubaru2017fast,yao2020pyhessian}. Variance–reduced estimators, such as Hutch++, further improve sample complexity in the PSD case and beyond \citep{meyer2021hutch++}. Recently, \citet{huang2025limuon,refael2025sumo} applied randomized SVD to efficiently train neural~networks.

\textbf{Newton–Schulz iteration in deep learning.} In large-scale training, GEMM-only iterations have reappeared as inner loops inside optimizers. Shampoo constructs Kronecker–factored preconditioners using (inverse) matrix square roots \citep{gupta2018shampoo}; and AdaHessian uses randomized curvature (Hutchinson) to approximate Hessian–diagonals \citep{yao2021adahessian}. More recently, \emph{Muon} applied Newton-Schulz-based orthogonalization to momentum matrices to produce direction-only updates for matrix-shaped parameters \citep{jordan2024muon}. Two follow-ups design iterations expressly for ML constraints: \emph{PolarExpress} formulates a minimax-optimal polynomial update for the polar/sign problem with GPU-friendly stability and demonstrable Muon gains \citep{amsel2025polarexpress}; and \emph{CANS} (Chebyshev-Accelerated Newton–Schulz) uses Chebyshev polynomials and Remez optimization to accelerate early iterations and offer controlled approximate orthogonalization \citep{grishina2025cans}. At scale, \emph{DION} shows how to efficiently distribute orthonormalized updates in data-parallel/FSDP systems \citep{ahn2025dion}. This line of work motivates adaptive, spectrum-aware polynomial updates that retain the hardware efficiency prized in ML training. However, all of these prior methods only focus on polar decomposition. 

\textbf{Other ML applications of iterative matrix functions.} Beyond optimization, matrix functions (especially $A^{1/2}$ and $A^{-1/2}$) recur in areas such as computer vision and probabilistic ML. In global covariance pooling and second-order layers, Newton–Schulz/Taylor or Padé-based approximations are competitive with or superior to SVD in speed and accuracy \citep{song2021approximate,song2023fast}. In Gaussian–process and Bayesian–optimization pipelines, fast actions of $K^{\pm 1/2}$ via iterative quadrature provide scalable alternatives to dense factorizations \citep{pleiss2020fast}. Whitening/Coloring transforms for universal style transfer and de-correlated batch norm similarly rely on differentiable square roots/inverse square roots \citep{li2017universal,huang2018decorrelated,song2023fast}. Finally, Riemannian optimization on the Stiefel manifold uses polar-based retractions whose inner loops are identical to the iterations studied here \citep{absil2008optimization,grishina2025cans}.

\section{The PRISM Meta-algorithm}
\label{sec:prism}

We now lay out the high-level structure of PRISM in the form of a meta-algorithm. The PRISM meta-algorithm starts by framing the design of existing or new algorithms as that of iterative polynomial approximation (Part I: Basic setup). Then, a principled acceleration scheme naturally arises, in which randomized sketching is used to efficiently improve polynomial approximation, leading to accelerated convergence behavior (Part II: Acceleration). A variety of classical iterative algorithms for matrix functions fit into the basic setup of PRISM. Consequently, all of these algorithms can be accelerated by deploying the acceleration techniques outlined in Part II of PRISM. In the next paragraph, we demonstrate how Newton-Schulz iterations are derived by following Part I of PRISM. Analogous derivations for Newton (\Cref{sec:DBNewton}), Inverse Newton (\Cref{sec:inverse_newton}), and Chebyshev (\Cref{sec:chebyshev}) methods are deferred to the appendix.

\begin{figure}[t!]
\begin{tcolorbox}[left=1mm,right=1mm]
{\bf PRISM Meta-algorithm for Computing $T(\mA)$}
\begin{enumerate}[leftmargin=4mm,itemsep=0pt,topsep=0pt]

  \vspace{2mm}
  \item[]{\em \hspace{-4mm}Part I: Basic setup}
  \vspace{2mm}
  
  \item Let $x$ be an estimate of $T(a)$ where $a\in\mathbb{R}$. Write $T(a) = x f(\xi)$ for a residual function $\xi = \xi(x, a)$. 
  
  \item Set up the scalar iteration $x_{k+1} = x_k \cdot f_d(\xi(x_k,a))$, where $f_d(\xi)$ is the $d$-th order Taylor's expansion of $f(\xi)$ around $\xi=0$.
  
  \item To compute $T(\mA)$, run the matrix version,
  \vspace{-2mm}
  \[
  \vspace{-2mm}
  \mX_{k+1} = \mX_k f_d(\mR_k),
  \]
  where $\mR_k = \xi(\mX_k,\mA)$ is the residual matrix.

  \vspace{2mm}
  \item[]{\em \hspace{-5mm} Part II: Acceleration}
  \vspace{2mm}
  
  \item {\bf Polynomial fitting:} To accelerate convergence, replace $f_d$ with $g_d(\xi;\alpha) = f_{d-1}(\xi)+\alpha \xi^d$, iterate
  \vspace{-2mm}
  \[
  \vspace{-2mm}
  \mX_{k+1} = \mX_k g_d(\mR_k;\alpha_k^*),
  \]
  where $\alpha_k^* = \argmin_\alpha \|\xi(\mX_{k+1}, \mA)\|_F^2$ minimizes the residual norm.
  
  \item {\bf Sketching:} To maintain low cost at every iteration, use $\tilde\alpha_k = \argmin_\alpha \|\mS\xi(\mX_{k+1},\mA)\|_F^2$ in place of $\alpha_k^*$, where $\mS$ is a low-dimensional sketch matrix.
\end{enumerate}
\end{tcolorbox}
\end{figure}

{\bf Deriving Newton-Schulz with PRISM Part I.} 
Let $x\neq0$ be such that $\sign(x) = \sign(a)$. 
Then, $\sign(a) = \sign(x) = x(x^2)^{-1/2} = x(1-\xi)^{-1/2} = xf(\xi)$, where $f(\xi) = (1-\xi)^{-1/2}$, and where $\xi = 1-x^2$ measures how close $x$ is to $\sign(x)=\sign(a)$. 
Therefore, the problem of approximating $\sign(a)$ leads to that of approximating $f(\xi)$. Using the $d$-th order Taylor polynomial $f_d(\xi)$ around $\xi=0$, we obtain an iterative procedure $x_{k+1} = x_kf_d(\xi(x_k))$. 
The matrix version, $\mX_{k+1} = \mX_k f_1(\mR_k)$, where $\mR_k = \mI - \mX_k^2$, gives the generalized Newton-Schulz iteration for matrix sign. 
When $d=1$, this reduces to the standard one, $\mX_{k+1} = \tfrac{1}{2}\mX_k - \tfrac{3}{2}\mX_k^3$, which convergences quadratically when initialized properly, e.g., when $\mX_0=\mA/\|\mA\|_F$.
When $d\ge2$, this leads to high-order variants with higher-order convergence rates. 
We note that this procedure was first described by \citet{kenney1991rational} to derive the more general Pad\'e family of iterative rational methods for computing the matrix sign. 
In addition, due to the close relationship between iterative algorithms for computing the matrix sign, square roots, and polar factor~\citep{higham2004computing,higham1997stable}, the Newton-Schulz variants for computing square roots and polar decomposition can be derived analogously.

The role of Part I in PRISM is to provide a common ground so that different algorithms for computing different matrix functions can all be accelerated in a similar way as outlined in Part II, which serves as PRISM's main algorithmic component. Once a new or existing iterative algorithm is fitted into Part I of PRISM, such as the Newton-Schulz iterations we discussed above, as well as additional examples in \Cref{sec:prism_algorithms}, one may apply Part II of PRISM to accelerate convergence at low cost.

Since Taylor's polynomial may not provide the best approximation of the target function at individual eigenvalues of the residual matrix $\mR_k$, a poor fit can consequently result in a slow initial convergence of the corresponding algorithm. PRISM Part II directly addresses this in the following way:
\begin{itemize}
\item
In order to improve convergence (across iterations) of the algorithm, Step 4 of PRISM replaces the Taylor polynomial with one that better fits to the spectrum. 
We will require that the residual matrix $\mR_k = \xi(\mX_k,\mA)$ be symmetric, and hence by minimizing the squared Frobenius norm, $\alpha_k^*$ effectively fits the candidate polynomial $g_d(\xi;\alpha)$ on the eigenvalues of $\mR_{k}$ by minimizing a (nonlinear) least-squares loss. 
We defer an in-depth discussion of this to \Cref{sec:prism_sign}, where we focus on the particular example of matrix sign computation. 
We will also explain why the candidate polynomial $g_d(\xi;\alpha)$ was chosen to take that particular form. 
\item
In order to speed up the run time (of each iteration) of the algorithm, in Step 5 of PRISM, we use randomized sketching methods from RandNLA to significantly reduce the cost of least-squares polynomial fitting. 
This is essential to ensure the practicality of PRISM: it accelerates convergence by automatically adapting to the spectrum--without requiring any knowledge on the spectral distribution of the input matrix--at comparably negligible additional cost. 
We will show that appropriately chosen sketch matrices do not compromise the performance.
\end{itemize}
\section{Matrix Sign: a Case Study on How PRISM Accelerates Convergence at Low Cost}
\label{sec:prism_sign}

In this section, we describe how the PRISM meta-algorithm can be applied to develop an accelerated Newton-Schulz iteration for computing matrix sign. 
PRISM applies more broadly, but we start here with the matrix sign function in order to present the core ideas in a single setting, without deviating to small algorithmic or technical differences. The matrix sign function is particularly interesting because iterative algorithms for matrix square roots and polar decomposition--two primitive matrix functions that arise in neural network optimizers--are closely related to sign computation, and analogous results readily hold for those algorithms (see \Cref{sxn:other_matrix_functions}). We will discuss how Part II of PRISM accelerates classical Newton-Schulz iterations. Although the derivations and illustrations presented in this section are specific to matrix sign, analogous results, such as why classical Newton-Schulz is slow and why PRISM accelerates them, hold more generally for other Newton-Schulz-like algorithms, including all those present in \Cref{tab:methods_for_matrix_functions}.

For a square matrix $\mA \in \sR^{n \times n}$, the matrix sign function is defined as $\sign(\mA) = \mA(\mA^2)^{-1/2}$. For our analysis we require that $\mA^2$ is symmetric, which practically covers all relevant use cases and hence we will assume it true throughout. To simplify notation, we also assume that $\|\mA\|_2 \le 1$, with the understanding that such condition is easily satisfied by normalization $\mA \mapsto \mA/\|\mA\|_F$. As derived in \Cref{sec:prism}, the Newton-Schulz iteration for matrix sign is 
\begin{equation}\label{eq:newton_schulz_sign_higher_order}
  \mX_0 = \mA, \; \mR_k = \mI - \mX_k^2, \; \mX_{k+1} = \mX_k f_d(\mR_k), 
\end{equation}
and $f_d(\xi)$ is the $d$-th order Taylor approximation of $f(\xi) = (1-\xi)^{-1/2}$ around $\xi=0$. While $f_d(\xi)$ provides an accurate estimate of $f(\xi)$ near $\xi = 0$, the error increases rapidly as $\xi$ approaches 1. In the early phase of Newton-Schulz, if $\mX_k$ has eigenvalues close to 0, then the matrix polynomial $f_d(\mI - \mX_k^2)$ is not a good approximant of $f(\mI - \mX_k^2)$, and thus the convergence of (\ref{eq:newton_schulz_sign_higher_order}) can be slow. To see this more clearly, consider $d=1$ and the scalar sequence
\[
x_{k+1} = x_k f_1(1-x_k^2) = x_k(1 + \tfrac{1}{2}(1-x_k^2)).
\]
It is easy to verify that, for $x_k$ close to 0,
\[
  1 - x_{k+1}^2 = \tfrac{3}{4}(1 - x_k^2)^2 + \tfrac{1}{4}(1 - x_k^2)^3 \approx 1 - \tfrac{9}{4}x_k^2,
\]
where we used Taylor approximation around $x_k=0$. This means that even though the sequence is still quadratically convergent since $|1-x_{k+1}^2| \le |1-x_k^2|^2$, the initial convergence rate behaves much like a linear one with a relatively small constant around 9/4. An illustration is provided in \Cref{fig:better_polynomial_gives_faster_convergence}. The same observation generalizes to high-order Taylor series for $d\ge1$ and matrix iterations.

\subsection{Fitting Polynomials to Matrix Spectrum}

In order to accelerate the convergence of (\ref{eq:newton_schulz_sign_higher_order}), we replace the Taylor polynomial $f_d$ with a different polynomial $g_d$ by iteratively fitting it to the target function $f(\xi) = (1-\xi)^{-1/2}$ over the spectrum of the current iterate. This is Step 4 of the PRIME meta-algorithm. Since the error $|f(\xi) - f_d(\xi)|$ is proportional to $\xi^{d+1}$, instead of fitting an entirely new polynomial, which can be difficult and costly, we consider the class of degree-$d$ polynomials of the form
\[
g_d(\xi;\alpha) = f_{d-1}(\xi) + \alpha \xi^d.
\]
That is, we keep all but the coefficient of $\xi^d$ the same and change $\alpha$ so that $g_d(\xi;\alpha)$ is a better fit to the data $\{\lambda_i,f(\lambda_i)\}_{i=1}^n$ in the least-squares sense, where $\lambda_i$ denotes the $i$-th eigenvalue of the residual matrix $\mR = \mI - \mX^2$. We will discuss in more detail shortly, but let us start with a simple example on how this can accelerate convergence. Consider again the case with $d=1$ and the scalar sequence $x_{k+1} = x_kf_1(1-\xi_k^2)$. We have seen that if $x_k$ is near 0 then $1 - x_{k+1}^2 \approx 1 - 2.25x_k^2$. If we replace $f_1(\xi) = 1 + \frac{1}{2}\xi$ with $g_1(\xi;1) = 1 + \xi$, then for $x_k$ close to 0,
\[
\resizebox{0.48\textwidth}{!}{
$1 - x_{k+1}^2
= (1-x_k^2)^2 + (1-x_k^2)^3 - (1-x_k^2) \approx 1 - 4x_k^2.$
}
\]
This shows that with $\alpha_k=1$, for $x_k$ close to 0, we still maintain quadratic convergence $|1-x_{k+1}^2| \le |1-x_k^2|^2$, and although the local convergence behavior is still much like a linear one, the error $1-x_{k+1}^2$ diminishes at a rate that is nearly \emph{twice} as rapid. An illustration is provided in \Cref{fig:better_polynomial_gives_faster_convergence}, where we see that a better fit of $f(\xi)$ for $\xi \gg 0$ leads to a much faster convergence of the resulting sequence.

\begin{figure}[t!]
  \centering
  \begin{subfigure}{0.45\textwidth}
    \centering
    \includegraphics[height=5cm]{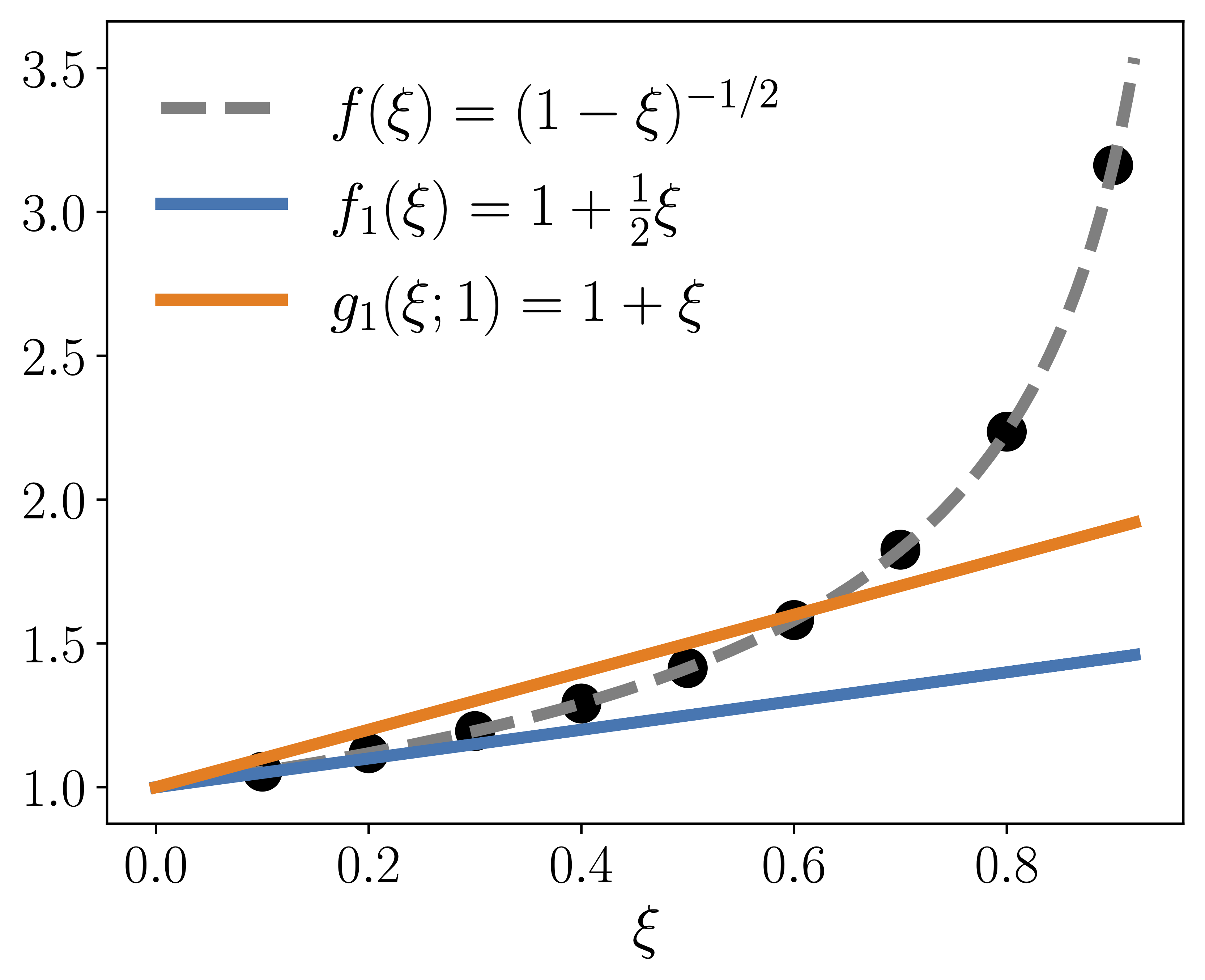}
  \end{subfigure}%
  \begin{subfigure}{0.45\textwidth}
    \centering
    \includegraphics[height=5cm]{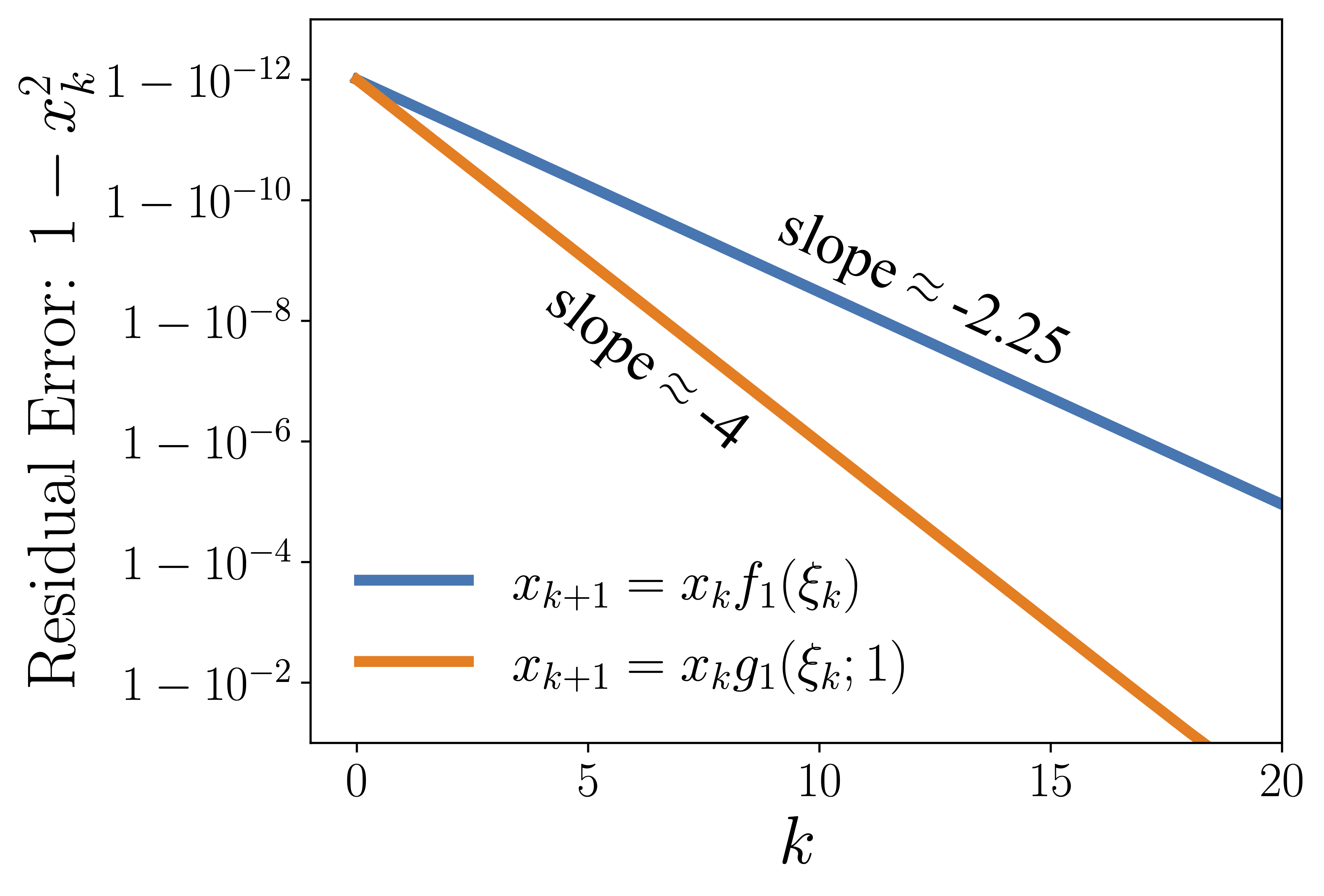}
  \end{subfigure}
  \caption{Better polynomial approximation leads to faster convergence. Left: Approximating $f(\xi)$ using its Taylor approximation $f_1(\xi)$ around $\xi=0$ versus the alternative $g_1(\xi;1)$. Right: The initial convergence behavior in residual error $\xi_k = 1-x_k^2$ using the standard and ``accelerated'' Newton-Schulz, respectively, for $x_0 = 10^{-6}$. Using $g_1(\xi;1)$ leads to an exponential speedup.}
  \label{fig:better_polynomial_gives_faster_convergence}
\end{figure}

Step 4 of PRISM changes (\ref{eq:newton_schulz_sign_higher_order}) into 
\begin{align}
\mX_0 &= \mA, \; \mR_k = \mI - \mX_k^2, \; \mX_{k+1} = \mX_k g_d(\mR_k; \alpha_k^*), \label{eq:newton_schulz_sign_adaptive}\\
\alpha_k^* &= \argmin_{\alpha\in[\ell,u]} \|\mI - \mX_k^2 g_d(\mR_k; \alpha)^2\|_F^2 \nonumber\\
&= \argmin_{\alpha\in[\ell,u]} \sum_{i=1}^n (1-(1-\lambda_{k,i})g_d(\lambda_{k,i};\alpha)^2)^2\label{eq:alpha},
\end{align}
where $\lambda_{k,1}, \lambda_{k,2}, \ldots, \lambda_{k,n}$ are the eigenvalues of $\mR_k$. The last equality follows because $\mA^2$ is symmetric and $\mX_0 = \mA$, so $\mR_k$ is symmetric for all $k$. Therefore, (\ref{eq:alpha}) fits the polynomial $g_d(x;\alpha)$ to the set of points $\{(\lambda_{k,i}, f(\lambda_{k,i})\}_{i=1}^n$, and recall that $f(\xi) = (1-\xi)^{-1/2}$. In (\ref{eq:alpha}), we impose an interval constraint $\alpha \in [\ell, u]$ to regularize the problem so that $\alpha_k^*$ will not be negatively affected by potential outliers. Without such a constraint, the resulting residual matrix still has a strictly smaller Frobenius norm, and hence the sequence generated by (\ref{eq:newton_schulz_sign_adaptive}) still converges to $\sign(\mA)$. However, it is important to ensure that our choice of $\alpha_k$ indeed accelerates the overall rate of convergence rather than making it slower. A natural condition we would like to guarantee is the sequence of residual matrices $\mR_k$ should have a strictly decreasing spectral norm. Since the Taylor polynomial $f_d(\xi)$ approximates $f(\xi)$ from below, i.e. $f(\xi)-f_d(\xi) > 0$ for all $\xi \in (0,1)$, if unconstrained, $\alpha^*$ can be unnecessarily large, causing an oscillating behavior in the spectral norm of the residual matrix, e.g. having $\|\mR_{k+1}\|_2 > \|\mR_{k}\|_2$, which can hurt the overall convergence rate. The interval $[\ell,u]$ we add to (\ref{eq:alpha}) should ensure that the resulting iteration in (\ref{eq:newton_schulz_sign_adaptive}) converges at least as fast as the original one in (\ref{eq:newton_schulz_sign_higher_order}). It turns out that one may choose $[\ell, u] = [1/2, 1]$ for $d=1$. We formally state the convergence result in \Cref{thm:ns_sign_convergence}.

\begin{theorem}
\label{thm:ns_sign_convergence}
Let $\mA \in \mathbb{R}^{n \times n}$ be such that $0 < \|\mA\|_2 \le 1$ and $\mA^2$ is symmetric. 
Let $\mX_0 = \mA$ and consider the sequence of matrices $\mX_1,\mX_2,\ldots$ generated by (\ref{eq:newton_schulz_sign_adaptive}) where $\alpha_k^*$ is determined by (\ref{eq:alpha}), with $d=1$, $\ell = 1/2$ and $u = 1$. We have that $\mX_k \rightarrow \sign(\mA)$ and $\|\mI - \mX_k^2\|_2 \le \|\mI - \mA^2\|_2^{2^{k-2}}$.
\end{theorem}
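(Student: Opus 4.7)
The proof hinges on a reduction to a scalar recursion on eigenvalues, then a comparison with the standard Newton--Schulz step ($\alpha = 1/2$) enabled by feasibility in the constrained optimization.

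First I would establish a scalar reduction. Since $\mA^2$ is symmetric and $\mX_0 = \mA$, a simple induction using $\mX_{k+1}^2 = \mX_k^2\, g_1(\mR_k;\alpha_k^*)^2$ shows that every $\mX_k^2$ (and hence $\mR_k = \mI - \mX_k^2$) is a polynomial in $\mA^2$, therefore symmetric. Diagonalizing, the matrix recursion reduces to the scalar iteration $\lambda_i^{(k+1)} = h_{\alpha_k^*}(\lambda_i^{(k)})$ on eigenvalues, where
\[
h_\alpha(\xi) \;:=\; 1 - (1-\xi)(1+\alpha\xi)^2 \;=\; (1-2\alpha)\xi + (2\alpha-\alpha^2)\xi^2 + \alpha^2\xi^3,
\]
and $\lambda_i^{(0)} = 1 - \sigma_i^2(\mA) \in [0,1]$ since $0 < \|\mA\|_2 \le 1$.

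Next, I would prove two pointwise scalar facts: (i) for every $\alpha \in [1/2,1]$, $h_\alpha$ maps $[-1,1]$ into $[-1,1]$, so $|\lambda_i^{(k)}| \le 1$ for all $k$ by induction; and (ii) factoring $h_{1/2}(\xi) = \xi^2(3+\xi)/4$, one has $|h_{1/2}(\xi)| \le \xi^2$ on $[-1,1]$. Feasibility of $\alpha = 1/2$ in the constrained least-squares problem~(\ref{eq:alpha}) together with (ii) then gives the Frobenius contraction
\[
\|\mR_{k+1}\|_F^2 \;=\; \sum_i h_{\alpha_k^*}(\lambda_i^{(k)})^2 \;\le\; \sum_i h_{1/2}(\lambda_i^{(k)})^2 \;\le\; \sum_i (\lambda_i^{(k)})^4 \;\le\; \|\mR_k\|_2^2\,\|\mR_k\|_F^2,
\]
so $\|\mR_{k+1}\|_F \le \|\mR_k\|_2 \|\mR_k\|_F$. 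Passing from Frobenius to spectral via $\|\cdot\|_2 \le \|\cdot\|_F$ and iterating this quadratic-type recursion yields the stated bound $\|\mR_k\|_2 \le \|\mR_0\|_2^{2^{k-2}}$. Convergence to $\sign(\mA)$ then follows: from $\mR_k \to 0$ we get $\mX_k^2 \to \mI$, and since every $\mX_k$ is a polynomial in $\mA$ and commutes with it, the factorization $\mX_{k+1} = \mX_k(\mI + \alpha_k^* \mR_k)$ shows the sign on each eigendirection is preserved throughout, yielding the limit $\mA(\mA^2)^{-1/2} = \sign(\mA)$.

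The main obstacle is the last step: the natural Frobenius contraction $\|\mR_{k+1}\|_F \le \|\mR_k\|_2 \|\mR_k\|_F$ transfers to the spectral norm only at the cost of $\sqrt{n}$ factors via $\|\cdot\|_F \le \sqrt{n}\|\cdot\|_2$. Getting the clean dimension-free exponent $2^{k-2}$ likely requires a sharper pointwise spectral argument showing that the minimizer $\alpha_k^*$ cannot push $\max_i |h_{\alpha_k^*}(\lambda_i^{(k)})|$ much beyond the standard $\alpha = 1/2$ value once the spectrum is concentrated—this is precisely the purpose of restricting $\alpha$ to $[1/2, 1]$ rather than allowing arbitrary values. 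Carefully quantifying this restriction (using that $p_\alpha(\xi) := h_\alpha(\xi)/\xi$ satisfies $|p_\alpha(\xi)| \le 1$ on $[0,1]$ for $\alpha \in [1/2,1]$, ensuring $\|\mR_{k+1}\|_2 \le \|\mR_k\|_2$ is never violated) and amortizing the first two iterations as overhead should produce the shifted exponent $k-2$.
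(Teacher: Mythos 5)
Your scalar reduction and the observation that feasibility of $\alpha=1/2$ gives $\sum_i h(\lambda_i^{(k)},\alpha_k^*)^2 \le \sum_i h(\lambda_i^{(k)},1/2)^2$ are both correct and match the paper's setup. But, as you yourself flag, this bounds the Frobenius norm of $\mR_{k+1}$, not its spectral norm: the minimizer $\alpha_k^*$ minimizes a \emph{sum} over eigenvalues, and there is nothing in the optimality condition preventing it from increasing $|h(\lambda_i,\alpha)|$ on a single eigenvalue while decreasing the aggregate. Passing from $\|\mR_{k+1}\|_F \le \|\mR_k\|_2\|\mR_k\|_F$ to the stated spectral bound costs $\sqrt n$ factors that compound under a quadratic recursion, so the dimension-free exponent $2^{k-2}$ does not follow.

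Your proposed patch is not enough to close this gap. First, showing $|p_\alpha(\xi)| = |h_\alpha(\xi)/\xi| \le 1$ on $[0,1]$ only yields the contraction $\|\mR_{k+1}\|_2 \le \|\mR_k\|_2$, which is linear monotone decrease, not the quadratic rate $\|\mR_{k+1}\|_2 \lesssim \|\mR_k\|_2^2$ the theorem requires. Second, eigenvalues do not stay in $[0,1]$: for $\alpha$ near $1$, $h_\alpha$ maps the interval into $[-1/5,1]$, so the analysis must handle negative eigenvalues (where $h_\alpha$ is actually \emph{increasing} in $\alpha$, which is what threatens the spectral bound in the first place). The missing ingredient is precisely the paper's Lemma~\ref{lem:polynomial_properties}, and in particular Claim~\ref{claim:quadratic_conv_for_small_x}: once $M:=\max_i|\lambda_i^{(k)}|\le 1/4$, the constrained minimizer satisfies $\alpha_k^*\le \beta(M) := (1/\sqrt{1-M}-1)/M$, because for $\alpha>\beta(M)$ each summand $h(x_i,\alpha)^2$ is nondecreasing. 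Combining this bound on $\alpha_k^*$ with the monotonicity of $|h(x,\alpha)|$ in $\alpha$ and $x$, the worst-case outlier is at $x_o=-M$, and $|h(-M,\beta(M))|/M^2 < 1.71$ with a dimension-free constant. Only then does one get $\|\mR_{k+1}\|_2 \le 1.71\|\mR_k\|_2^2$, and the phase-splitting (separate handling of $\|\mR_k\|_2 \ge 1/2$, $\in[1/4,1/2]$, and $\le 1/4$) produces the shifted exponent $2^{k-2}$. Your outline identifies where the difficulty lives but does not supply this argument, so the proof is incomplete at its central step.
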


\paragraph{\em Remark.} The proof, which we leave to \Cref{sec:proof_of_thm1}, is based on demonstrating that the polynomial $g_d(\xi;\alpha_k^*)$ maintains good quadratic convergence behavior for all $k$ and all possible initial eigenvalues $\{\lambda_{0,i}\}_{i=1}^n \subseteq [0,1]^n$. The assumption that $\mA^2$ is symmetric covers the case $\mA = \bigl[\begin{smallmatrix}0 & \mA' \\ \mI & 0 \end{smallmatrix}\bigr]$ where $\mA'$ is symmetric. This will be useful later to apply  \Cref{thm:ns_sign_convergence} and get an analogous result for computing matrix square root. \Cref{thm:ns_sign_convergence} indicates that for $d=1$ and $[\ell,u]=[1/2,1]$ in the computation of $\alpha_k^*$, (\ref{eq:newton_schulz_sign_higher_order}) converges at least as fast as the classical Newton-Schulz~\citep{higham2005functions}. The bounds $[\ell,u]$ depend solely on some favorable polynomial properties of $g_d(\xi;\alpha)$ and are independent of the spectrum of the input matrix. We refer the reader to \Cref{lem:polynomial_properties} for details.
For the case $d=2$, the same line of arguments can be applied to find reasonable choices for $[\ell,u]$. Empirically, we find that $[\ell,u]=[3/8,29/20]$ is a good choice for $d=2$. 

\subsection{Fast Approximate Polynomial Fitting via Randomized Sketching}

For the matrix sign iteration (as well as square roots and polar decomposition which we discuss later) in (\ref{eq:newton_schulz_sign_adaptive}), the loss function in (\ref{eq:alpha}) is a degree-4 polynomial in $\alpha_k$, i.e.,
\begin{align*}
m(\alpha) 
&:= \|\mI - \mX_k^2 g_d(\mI - \mX_k^2; \alpha)^2\|_F^2\\
&= c_0 + c_1\alpha + c_2\alpha^2 + c_3\alpha^3 + c_4\alpha^4.
\end{align*}
The coefficients of $m(\alpha)$ depend on $d$. For example, for $d=1$ and $g_1(\xi;\alpha) = 1+\alpha\xi$ we have $c_1 = \tr(4\mR_k^3 - 4\mR_k^2)$, $c_2 = \tr(6\mR_k^4 - 10\mR_k^3 + 4\mR_k^2)$, $c_3 = \tr(4\mR_k^5 - 8\mR_k^4 + 4\mR_k^3)$, $c_4 = \tr(\mR_k^6 - 2\mR_k^5 + \mR_k^4)$. For general $d\ge1$, computing these coefficients requires access to the diagonal entries of $\mR_k^i$ for $i$ up to $4d+2$. We will discuss how to speed up this computation using randomized sketching in the following paragraphs, but once we know $c_1,c_2,c_3,c_4$, minimizing $m(\alpha)$ can be done analytically by solving the cubic equation $m'(\alpha) = 0$. We provide more details in \Cref{sec:newton_schulz}.

In order to obtain the coefficients $c_i$'s of $m(\alpha)$, naively computing $\mR_k^{4d+2}$ and then evaluating its trace requires at least $\log_2(4d+2) \ge 2 + \log_2(d)$ matrix multiplications. This can be more expensive than executing a full iteration of classical Newton-Schulz, and thus rendering our acceleration scheme too costly. Ideally, we would like to obtain a good polynomial $g_d(\xi;\alpha_k)$ in sub-cubic time with respect to $n$. To accomplish this, we use randomized sketching and approximately minimize $m(\alpha)$ with a controlled error rate. A matrix $\mS \in \sR^{p \times n}$ is an $(p,\epsilon,\delta)$-Oblivious Subspace Embedding (OSE)\footnote{Subspace embeddings were first introduced by~\citet{DMM06}; they were first used in data-oblivious form by~\citet{Sarlos06,DMMS07_FastL2_NM10}; and they were popularized in RandNLA by~\citet{woodruff2014sketching}.} if for any fixed $p$-dimensional subspace $\sV \subseteq \sR^n$, with probability at least $1-\delta$, for all $\vx \in \sV$ we have $(1-\epsilon)\|\vx\|_2^2 \le \|\mS\vx\|_2^2 \le (1+\epsilon)\|\vx\|_2^2$. Instead of (\ref{eq:alpha}), we choose $\alpha_k$ by solving the following problem:
\begin{equation}\label{eq:alpha_sketch}
    \tilde{\alpha}_k = \argmin_{\alpha \in [\ell,u]} \left\|\mS_k\left(\mI - \mX_k^2g_d(\mR_k; \alpha)^2\right)\right\|_F^2 ,
\end{equation}
where $\mS_k \in \sR^{p \times n}$ is an OSE. 
The loss function in (\ref{eq:alpha_sketch}) is a degree-4 polynomial in $\alpha$ whose coefficients are linear functions of $\tr(\mS_k\mR_k^i\mS_k^T)$ for $1\le i \le 4d+2$. 
Therefore, computing these coefficients now requires $O(n^2p)$ time, as opposed to $O(n^3)$ in the previous case. 
This means that computing $\tilde{\alpha}_k$ takes $O(n^2p)$ time in total, which can be much less than the $O(n^3)$ complexity of one iteration of Newton-Schulz.
There are many plausible choices for the sketch matrix $\mS_k$, and here simple random Gaussian matrices appear to be sufficient. 

One might ask whether we will lose the convergence speed when we replace the exact minimization in (\ref{eq:alpha}) with the approximate minimization in (\ref{eq:alpha_sketch}). For $d=1$, \Cref{thm:ns_sign_convergence_with_sketch} says that the worst-case convergence rate is essentially the same when $p = O(\log n)$. The proof uses Johnson-Lindenstrauss property of OSE and the strong convexity of $m(\alpha)$ to bound the distance between $\alpha_k^*$ from (\ref{eq:alpha}) and $\tilde{\alpha}_k$ from (\ref{eq:alpha_sketch}), and then shows that the resulting polynomial $g_d(\xi;\tilde{\alpha}_k)$ still induces a similar quadratic convergence property as $g_d(\xi;\alpha_k^*)$. We leave the proof to \Cref{sec:proof_of_thm2}. A similar line of arguments should generalize to the case $d\ge2$. Empirically, for both $d=1$ and $d=2$, we observed that the dimension $p$ can be as small as 5 and still the sequence $\mX_k$ converges as fast as if $\alpha_k^*$ were computed in (\ref{eq:alpha}) without sketching.

\begin{theorem}
\label{thm:ns_sign_convergence_with_sketch}
Let $\mA \in \mathbb{R}^{n \times n}$ be such that $0 < \|\mA\|_2 \le 1$ and $\mA^2$ is symmetric. Let $\mS_k \in \mathbb{R}^{p\times n}$ be random matrices consisting of i.i.d Gaussian entries $[\mS_k]_{i,j} \sim \mathcal{N}(1, 1/p)$ and $p \ge 48(\log n + \log(1/\delta) + \log k + 27.6)$. Let $\mX_0 = \mA$ and consider the sequence of matrices $\mX_1,\mX_2,\ldots$ generated by (\ref{eq:newton_schulz_sign_adaptive}) where $\alpha_k^*$ is determined by (\ref{eq:alpha_sketch}), with $d=1$, $\ell = 1/2$ and $u = 1$. With probably at least $1-\delta$, we have that $\mX_k \rightarrow \sign(\mA)$ and $\|\mI - \mX_k^2\|_2 \le \|\mI - \mA^2\|_2^{2^{k-3}}$.
\end{theorem}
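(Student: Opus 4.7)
The goal is to show that replacing $\alpha_k^*$ by the sketched surrogate $\tilde\alpha_k$ perturbs the iteration polynomial $g_d(\xi;\cdot)$ so little that the double-exponential convergence of \Cref{thm:ns_sign_convergence} survives, up to a one-iteration loss in the exponent (from $2^{k-2}$ to $2^{k-3}$). The argument has three stages: a uniform concentration bound for the sketched quartic loss $\tilde m_k-m_k$, a strong-convexity argument translating that loss gap into a parameter gap, and a Lipschitz propagation of the parameter gap through the spectral iteration polynomial $h(\xi;\alpha)=1-(1-\xi)g_d(\xi;\alpha)^2$.

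\textbf{Steps 1 and 2: sketching and minimizer perturbation.} Write $\mN_k(\alpha):=\mI-\mX_k^2 g_d(\mR_k;\alpha)^2$ as a matrix polynomial $\mN_k(\alpha)=\mN_{k,0}+\alpha\mN_{k,1}+\alpha^2\mN_{k,2}$ (for $d=1$: $\mN_{k,0}=\mR_k$, $\mN_{k,1}=2\mR_k(\mR_k-\mI)$, $\mN_{k,2}=\mR_k^2(\mR_k-\mI)$). Then $\tilde m_k(\alpha)-m_k(\alpha)$ is a degree-$4$ polynomial in $\alpha$ whose nine scalar coefficients are Hutchinson-type trace estimators $\tr(\mS_k\mN_{k,j}\mN_{k,i}^\top\mS_k^\top)-\tr(\mN_{k,j}\mN_{k,i}^\top)$. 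Hanson--Wright for the Gaussian sketch bounds each deviation by $\epsilon\|\mN_{k,i}\|_F\|\mN_{k,j}\|_F\le 4\epsilon\|\mR_k\|_F^2$ with probability $1-2\exp(-cp\epsilon^2)$; the stated bound on $p$, combined with a union bound over the nine coefficients and the first $k$ iterations, then yields $\sup_{\alpha\in[\ell,u]}|\tilde m_k(\alpha)-m_k(\alpha)|\le C_1\epsilon\|\mR_k\|_F^2$ simultaneously across iterations with overall probability at least $1-\delta$, provided $\epsilon$ is a sufficiently small absolute constant. On $[\ell,u]=[1/2,1]$, writing the coefficients of the quartic $m_k(\alpha)$ explicitly in the eigenbasis of $\mR_k$ and using $0\preceq\mR_k\preceq\mI$ yields the strong-convexity bound $m_k''(\alpha)\ge \mu_k = \Omega(\|\mR_k\|_F^2)$. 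The two-sided inequality $\tfrac{\mu_k}{2}(\tilde\alpha_k-\alpha_k^*)^2 \le m_k(\tilde\alpha_k)-m_k(\alpha_k^*) \le 2\sup_\alpha|\tilde m_k-m_k|$ then gives the crucial \emph{scale-invariant} perturbation $|\tilde\alpha_k-\alpha_k^*|\le C_2\sqrt\epsilon$, independent of $n$ and of $\|\mR_k\|$.

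\textbf{Step 3: convergence with perturbed polynomial.} The spectral iteration satisfies $\|\mR_{k+1}\|_2=\max_i|h(\lambda_{k,i};\tilde\alpha_k)|$. For $d=1$, $\partial_\alpha h(\xi;\alpha)=-2\xi(1-\xi)(1+\alpha\xi)$, so $|\partial_\alpha h(\xi;\alpha)|\le 2\xi$ on $[0,1]\times[1/2,1]$. Combining with the bound $|h(\xi;\alpha_k^*)|\le\xi^2$ from \Cref{thm:ns_sign_convergence} via the mean value theorem yields $|h(\lambda;\tilde\alpha_k)|\le \lambda^2+2C_2\sqrt\epsilon\,\lambda$. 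Choosing $\epsilon$ small enough that $2C_2\sqrt\epsilon\le 1/2$ gives $\|\mR_{k+1}\|_2\le \|\mR_k\|_2^2+\tfrac12\|\mR_k\|_2$; an induction, allowing at most one ``burn-in'' iteration to enter the quadratic-dominated regime, restores geometric doubling of the exponent and yields $\|\mI-\mX_k^2\|_2\le\|\mI-\mA^2\|_2^{2^{k-3}}$.

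\textbf{Main obstacle.} The delicate point is Step 3: unlike $h(\xi;\alpha_k^*)$, which vanishes quadratically at $\xi=0$, any nonzero bias $\tilde\alpha_k-\alpha_k^*$ injects a residual linear term of order $\sqrt\epsilon\cdot\xi$ into $h(\xi;\tilde\alpha_k)$. Naively this would degrade the small-spectrum rate to linear and destroy the double-exponential guarantee. What rescues the argument is the \emph{absolute} (not $\|\mR_k\|$-dependent) perturbation bound $|\tilde\alpha_k-\alpha_k^*|=O(\sqrt\epsilon)$, which is in turn a consequence of the precise scale matching between the concentration bound on $\tilde m_k-m_k$ (of order $\epsilon\|\mR_k\|_F^2$) and the strong-convexity modulus of $m_k$ (also of order $\|\mR_k\|_F^2$). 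Establishing the $\Omega(\|\mR_k\|_F^2)$ lower bound on $m_k''(\alpha)$ throughout $[1/2,1]$, using the positivity and spectral bounds on $\mR_k$, is the most technically demanding ingredient; the remaining pieces are standard bookkeeping of absolute constants.
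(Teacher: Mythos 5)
Your Steps 1--2, as written, derive the wrong form of perturbation bound and your ``Main obstacle'' paragraph misidentifies what rescues the argument; the result is a genuine gap in Step~3. The paper's proof does not use an \emph{additive} concentration bound of the form $\sup_\alpha|\tilde m_k(\alpha)-m_k(\alpha)|\le C_1\epsilon\|\mR_k\|_F^2$; it uses the oblivious-subspace-embedding property column by column (each column of $h(\mR_\ell,\alpha)$ lies in a fixed 6-dimensional subspace, independent of $\alpha$) to obtain the \emph{multiplicative} bound $(1-\epsilon)m_k(\alpha)\le \tilde m_k(\alpha)\le(1+\epsilon)m_k(\alpha)$ simultaneously over $\alpha$, hence $m_k(\tilde\alpha_k)\le\frac{1+\epsilon}{1-\epsilon}m_k(\alpha_k^*)$. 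Combined with the separate estimate $m_k(\alpha_k^*)\le \frac{169}{256}M^2 S$ (where $M=\|\mR_k\|_2$, $S=\|\mR_k\|_F^2$) and the strong-convexity modulus $\mu=\Omega(S)$, this yields a \emph{scale-dependent} gap $|\tilde\alpha_k-\alpha_k^*|\le D\sqrt\gamma\,M$ (this is exactly Claim~4 of \Cref{lem:polynomial_properties}). Your additive route divides $O(\epsilon S)$ by $\Omega(S)$ and gets the constant bound $|\tilde\alpha_k-\alpha_k^*|=O(\sqrt\epsilon)$, which you then call the thing that ``rescues the argument.'' It is in fact exactly what breaks it.

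The reason the distinction matters is your Step~3. With a perturbation of magnitude $O(\sqrt\epsilon)$ that does not shrink with $M$, the mean-value estimate gives $|h(\lambda;\tilde\alpha_k)|\le C\lambda^2 + c\sqrt\epsilon\,\lambda$ with $c\sqrt\epsilon$ a fixed positive constant, and the recursion $\|\mR_{k+1}\|_2\le\|\mR_k\|_2^2+\tfrac12\|\mR_k\|_2$ is only linearly convergent once $\|\mR_k\|_2<1/2$: it gives $\|\mR_k\|_2\gtrsim 2^{-k}\|\mR_0\|_2$, not $\|\mR_0\|_2^{2^{k-3}}$. No number of ``burn-in'' iterations can repair this --- the linear term is present at every scale. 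With the correct $M$-proportional bound $|\tilde\alpha_k-\alpha_k^*|\le D\sqrt\gamma\,M$, the mean-value estimate becomes $|h(\lambda;\tilde\alpha_k)|\le C\lambda^2+2D\sqrt\gamma\,M\lambda\le(C+2D\sqrt\gamma)M^2$, which is a genuine quadratic contraction and gives the doubly-exponential rate (Claim~5 in the paper makes this uniform with the constant $E<2.95$). To fix your argument you would need to replace the additive Hanson--Wright bound on the quartic's trace coefficients with a uniform multiplicative guarantee on $\|\mS_k h(\mR_k,\alpha)\|_F$ over $\alpha\in[\ell,u]$ --- which is precisely what the paper's column-subspace OSE argument delivers and is not obviously recoverable from coefficient-wise concentration of the Hutchinson estimators alone.
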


\section{PRISM-based Computation of Square Roots, Orthogonalization and Others}
\label{sxn:other_matrix_functions}

\begin{table*}[t!]
\caption{PRISM-accelerated algorithms for computing a few primitive matrix functions that arise in neural network optimizers}
\label{tab:methods_for_matrix_functions}
  \centering
  \small
  \makebox[\textwidth][c]{%
  \begin{tabular}{lcccc}
    \toprule
    Method & Target & Initialization & Iteration$^*$ & Residual \\
    \midrule
    \multirow{2}{*}{Newton-Schulz (3rd-order)} & $\mA^{1/2}$ & $\mX_0=\mA$ & $\mX_{k+1}=\mX_k(\mI + \alpha_k\mR_k)$ & \multirow{2}{*}{$\mR_k=\mI-\mX_k\mY_k$} \\
    & $\mA^{-1/2}$ & $\mY_0=\mI$ & $\mY_{k+1}=(\mI + \alpha_k\mR_k)\mY_k$ &   \\
    \midrule
    \multirow{2}{*}{Newton-Schulz (5th-order)} & $\mA^{1/2}$ & $\mX_0=\mA$ & $\mX_{k+1}=\mX_k(\mI + \tfrac{1}{2}\mR_k + \alpha_k\mR_k^2)$ & \multirow{2}{*}{$\mR_k=\mI-\mX_k\mY_k$} \\
    & $\mA^{-1/2}$ & $\mY_0=\mI$ & $\mY_{k+1}=(\mI + \tfrac{1}{2}\mR_k + \alpha_k\mR_k^2)\mY_k$ &  \\
    \midrule
    Newton-Schulz (3rd-order) & $\mU\mV^T$ & $\mX_0=\mA$ & $\mX_{k+1} = \mX_k(\mI + \alpha_k\mR_k)$ & $\mR_k = \mI - \mX_k^T\mX_k$\\
    \midrule
    Newton-Schulz (5th-ordNeer) & $\mU\mV^T$ & $\mX_0=\mA$ & $\mX_{k+1} = \mX_k(\mI + \tfrac{1}{2}\mR_k + \alpha_k\mR_k^2)$ & $\mR_k = \mI - \mX_k^T\mX_k$\\
    \midrule
    \multirow{2}{*}{Coupled Inverse Newton} & $\mA^{-1/p}$ & $\mX_0=\mI$ & $\mX_{k+1} = \mX_k(\mI + \alpha_k\mR_k)$ & \multirow{2}{*}{$\mR_k = \mI - \mX_k^p\mA$}\\
    & $(p\ge1)$ & $\mM_0=\mA$ & $\mM_{k+1} = (\mI + \alpha_k\mR_k)^p\mM_k$\\
    \midrule
    \multirow{2}{*}{DB Newton} & $\mA^{1/2}$ & $\mX_0=\mA$ & $\mX_{k+1} = (1-\alpha_k)\mX_k + \alpha_k\mY_k^{-1}$ & \multirow{2}{*}{$\mR_k = \mI-\mX_k\mY_k$} \\
    & $\mA^{-1/2}$ & $\mY_0=\mI$ & $\mY_{k+1}=(1-\alpha_k)\mY_k + \alpha_k\mX_k^{-1}$\\
    \midrule
    Chebyshev & $\mA^{-1}$ & $\mX_0 = \mA^T$ & $\mX_{k+1} = \mX_k(\mI + \mR_k + \alpha_k\mR_k^2)$ & $\mR_k = \mI-\mA\mX_k$\\
    \bottomrule
  \end{tabular}%
  }
  \begin{tablenotes}
  \item {\footnotesize $^*$The value of $\alpha_k$ depends on both the input data and the underlying algorithm, see PRISM meta-algorithm and \Cref{sec:prism_algorithms} for how it is defined/computed.}
  \end{tablenotes}
\end{table*}

The following results of \citet{higham2004computing} and \citet{higham1997stable} imply that what we derived for matrix sign computation readily extend to square root and orthogonalization.

\begin{theorem}[\citep{higham1997stable}]\label{thm:sign2root}
Let $\mA \in \mathbb{R}^{n \times n}$ have no eigenvalues on $\mathbb{R}_-$. Consider any iteration of the form $\mX_{k+1}=\mX_kh(\mX_k^2)$ that converges to $\sign(\mX_0)$ for $\mX_0 = \bigl[\begin{smallmatrix}0 & \mA \\ \mI & 0 \end{smallmatrix}\bigr]$ with order of convergence $q$. Then in the coupled iteration $\mX_{k+1} = \mX_kh(\mY_k\mX_k)$, $\mY_{k+1}=h(\mY_k\mX_k)\mY_k$, with $\mX_0 = \mA$ and $\mY_0 = \mI$, we have $\mX_k \rightarrow \mA^{1/2}$ and $\mY_k \rightarrow \mA^{-1/2}$, both with order of convergence $q$.
\end{theorem}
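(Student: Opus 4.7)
The approach is to embed the coupled iteration into a standard sign iteration on the $2n \times 2n$ block matrix $\mZ_0 = \bigl[\begin{smallmatrix} 0 & \mA \\ \mI & 0\end{smallmatrix}\bigr]$ and read off $\mA^{1/2}$ and $\mA^{-1/2}$ from the off-diagonal blocks of its iterates. First I would verify that $\sign(\mZ_0)$ is well-defined and identify it explicitly. The eigenvalues of $\mZ_0$ are $\pm\sqrt{\mu}$ for each eigenvalue $\mu$ of $\mA$; since $\mA$ has no eigenvalue on $\mathbb{R}_-$, none of these lie on the imaginary axis, so $\sign(\mZ_0)$ exists. Using $\mZ_0^2 = \bigl[\begin{smallmatrix} \mA & 0 \\ 0 & \mA\end{smallmatrix}\bigr]$ and the defining relation $\sign(\mX) = \mX(\mX^2)^{-1/2}$, a direct block computation yields
\[
\sign(\mZ_0) = \begin{bmatrix} 0 & \mA^{1/2} \\ \mA^{-1/2} & 0 \end{bmatrix}.
\]

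Next I would show by induction that the sign iterates $\mZ_k$ preserve the anti-block-diagonal form $\mZ_k = \bigl[\begin{smallmatrix} 0 & \mX_k \\ \mY_k & 0\end{smallmatrix}\bigr]$ and that the resulting off-diagonal recursions coincide with the coupled iteration in the statement. Since $\mZ_k^2 = \bigl[\begin{smallmatrix} \mX_k\mY_k & 0 \\ 0 & \mY_k\mX_k\end{smallmatrix}\bigr]$ is block-diagonal, $h$ acts block-wise, giving $h(\mZ_k^2) = \bigl[\begin{smallmatrix} h(\mX_k\mY_k) & 0 \\ 0 & h(\mY_k\mX_k)\end{smallmatrix}\bigr]$. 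Multiplying out,
\[
\mZ_{k+1} = \mZ_k\, h(\mZ_k^2) = \begin{bmatrix} 0 & \mX_k\, h(\mY_k\mX_k) \\ \mY_k\, h(\mX_k\mY_k) & 0 \end{bmatrix},
\]
so the off-diagonal blocks evolve by $\mX_{k+1} = \mX_k\, h(\mY_k\mX_k)$ and $\mY_{k+1} = \mY_k\, h(\mX_k\mY_k)$. Invoking the commutation identity $\mY_k(\mX_k\mY_k)^j = (\mY_k\mX_k)^j \mY_k$ for monomials (and extending to analytic $h$ on a neighborhood of the spectrum of $\mX_k\mY_k$ and $\mY_k\mX_k$, which coincide up to zero eigenvalues) gives $\mY_k\, h(\mX_k\mY_k) = h(\mY_k\mX_k)\,\mY_k$, matching the symmetric form in the theorem. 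With $\mX_0 = \mA$ and $\mY_0 = \mI$, the base case is exactly $\mZ_0$, closing the induction.

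The convergence conclusion is then essentially immediate: since $\mZ_k \to \sign(\mZ_0)$ with order $q$ in any submultiplicative norm, and each off-diagonal block of $\mZ_k - \sign(\mZ_0)$ is controlled by $\|\mZ_k - \sign(\mZ_0)\|$, we obtain $\mX_k \to \mA^{1/2}$ and $\mY_k \to \mA^{-1/2}$ at the same order $q$. The main obstacle I would expect in writing this out carefully is justifying that $h$ respects the block-diagonal structure of $\mZ_k^2$ and that the polynomial commutation identity extends to the function $h$ at hand; this is automatic whenever $h$ is a polynomial (which covers all Newton–Schulz-type iterations used in this paper), and otherwise follows from a standard Cauchy-integral or minimal-polynomial argument using that $\mX_k\mY_k$ and $\mY_k\mX_k$ have identical nonzero spectra.
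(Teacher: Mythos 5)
Your proof is correct and is essentially the standard argument from \citet{higham1997stable}, which the paper cites without reproducing: you lift the coupled iteration to the ordinary sign iteration on the $2n\times 2n$ anti-block-diagonal matrix $\mZ_0 = \bigl[\begin{smallmatrix}0 & \mA \\ \mI & 0\end{smallmatrix}\bigr]$, verify that $\sign(\mZ_0) = \bigl[\begin{smallmatrix}0 & \mA^{1/2} \\ \mA^{-1/2} & 0\end{smallmatrix}\bigr]$ and that the anti-block-diagonal structure is preserved so the off-diagonal blocks track $\mX_k,\mY_k$, and then read off convergence and order from the block norms. The one step worth spelling out is the commutation identity $\mY_k h(\mX_k\mY_k) = h(\mY_k\mX_k)\mY_k$ needed to match the theorem's stated form of the $\mY$-update; you correctly note it is immediate for polynomial $h$ (which covers all iterations in this paper) and extends to general matrix functions since $\mX_k\mY_k$ and $\mY_k\mX_k$ are similar square matrices.
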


\begin{theorem}[\citep{higham2004computing}]\label{thm:sign2polar}
Let $\mA \in \mathbb{R}^{m \times n}$ with $m\ge n$ be of rank $n$ and have SVD $\mA = \mU\boldsymbol\Sigma\mV^T$. Consider any iteration of the form $\mX_{k+1}=\mX_kh(\mX_k^2)$ that converges to $\sign(\mX_0)$ for $\mX_0 = (\mA^T\mA)^{1/2}$ with order of convergence $q$. Then $\mX_{k+1} = \mX_kh(\mX_k^T\mX_k)$ with $\mX_0 = \mA$ converges to $\mU\mV^T$ with order of convergence $q$.
\end{theorem}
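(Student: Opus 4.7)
The plan is to exploit the SVD structure of $\mA$ to reduce the rectangular polar iteration to the same scalar recursion on singular values as the square-matrix sign iteration, and then transfer the convergence (including the order $q$) from one to the other. Write $\mA = \mU\boldsymbol{\Sigma}\mV^T$ with $\boldsymbol{\Sigma} \in \sR^{n\times n}$ positive diagonal, which is possible by the rank-$n$ hypothesis.

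The first step is to prove by induction that every rectangular iterate has the form $\mX_k = \mU\boldsymbol{\Sigma}_k\mV^T$ with $\boldsymbol{\Sigma}_k$ diagonal and $\boldsymbol{\Sigma}_0 = \boldsymbol{\Sigma}$. The base case is immediate. For the induction step, $\mX_k^T\mX_k = \mV\boldsymbol{\Sigma}_k^2\mV^T$, so $h(\mX_k^T\mX_k) = \mV\, h(\boldsymbol{\Sigma}_k^2)\,\mV^T$ by the spectral mapping identity. Therefore
\[
\mX_{k+1} \;=\; \mU\boldsymbol{\Sigma}_k\mV^T \cdot \mV\, h(\boldsymbol{\Sigma}_k^2)\mV^T \;=\; \mU\bigl(\boldsymbol{\Sigma}_k h(\boldsymbol{\Sigma}_k^2)\bigr)\mV^T,
\]
which establishes the scalar recursion $\boldsymbol{\Sigma}_{k+1} = \boldsymbol{\Sigma}_k\, h(\boldsymbol{\Sigma}_k^2)$ and preserves the SVD structure.

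The second step is to repeat the same computation for the square-matrix iteration $\mY_{k+1} = \mY_k h(\mY_k^2)$ started from $\mY_0 = (\mA^T\mA)^{1/2} = \mV\boldsymbol{\Sigma}\mV^T$. The same induction gives $\mY_k = \mV\widetilde{\boldsymbol{\Sigma}}_k\mV^T$ with $\widetilde{\boldsymbol{\Sigma}}_{k+1} = \widetilde{\boldsymbol{\Sigma}}_k h(\widetilde{\boldsymbol{\Sigma}}_k^2)$ and $\widetilde{\boldsymbol{\Sigma}}_0 = \boldsymbol{\Sigma}$, so $\widetilde{\boldsymbol{\Sigma}}_k \equiv \boldsymbol{\Sigma}_k$ for every $k$. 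Because $\mY_0$ is positive definite, $\sign(\mY_0) = \mV\,\sign(\boldsymbol{\Sigma})\mV^T = \mI$, and the hypothesis yields $\widetilde{\boldsymbol{\Sigma}}_k \to \mI$ with order of convergence $q$. Consequently $\boldsymbol{\Sigma}_k \to \mI$ at the same rate, and for any unitarily invariant norm
\[
\|\mX_k - \mU\mV^T\| \;=\; \|\mU(\boldsymbol{\Sigma}_k - \mI)\mV^T\| \;=\; \|\boldsymbol{\Sigma}_k - \mI\|,
\]
so the rectangular iteration inherits exactly the order-$q$ convergence.

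The main obstacle, though largely conceptual, is to justify the identity $h(\mV\mD\mV^T) = \mV h(\mD)\mV^T$ for the function $h$ at hand. When $h$ is a polynomial or rational function whose poles avoid the eigenvalues of $\mX_k^T\mX_k$, this reduces to elementary algebra; for more general $h$ one would invoke the functional calculus together with the fact that $\mX_k^T\mX_k$ stays positive definite along the trajectory (which follows from the scalar recursion together with $\mA$ being full column rank). Since every iteration in \Cref{tab:methods_for_matrix_functions} is polynomial or rational in its argument, this technicality causes no difficulty in the settings of interest.
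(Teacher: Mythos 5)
The paper does not actually supply a proof of this theorem --- it is stated as a citation to \citet{higham2004computing} and used as an external result --- so there is no in-paper argument to compare against. Your proof, however, is correct and is essentially the standard argument (it matches the one in Higham's treatment of polar-factor iterations, e.g.\ \emph{Functions of Matrices}, Theorem~8.13): reduce both the rectangular polar iteration and the symmetric square-matrix sign iteration to the same scalar recursion on the singular values via the SVD, observe that the initial diagonal data coincide because $\mX_0^T\mX_0 = \mA^T\mA$ in both cases, and transfer the order-$q$ convergence using the unitary invariance $\|\mU(\boldsymbol{\Sigma}_k - \mI)\mV^T\| = \|\boldsymbol{\Sigma}_k - \mI\| = \|\mY_k - \mI\|$ (the padding by zero rows when extending $\mU$ to a full orthogonal matrix does not change singular values, so the identity indeed holds for every unitarily invariant norm). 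The spectral-mapping identity $h(\mV\mD\mV^T) = \mV h(\mD)\mV^T$ that you flag as the only technical point is immediate here since every $h$ in \Cref{tab:methods_for_matrix_functions} is a polynomial, and your remark about the holomorphic functional calculus covers the general case. One small refinement worth noting: you do not actually need positive definiteness of $\mX_k^T\mX_k$ along the trajectory for the induction to go through --- the SVD-reduction step only requires that $h$ be evaluable at the diagonal matrix $\boldsymbol{\Sigma}_k^2$, which is automatic for polynomial $h$ --- so the parenthetical remark at the end could be dropped without loss.
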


Let $\mA \in \mathbb{R}^{n \times n}$ be a symmetric matrix with positive real eigenvalues, and let $\mX_0 = \bigl[\begin{smallmatrix}0 & \mA \\ \mI & 0 \end{smallmatrix}\bigr]$. Then $\mX_0^2$ is symmetric, and thus we can use PRISM to accelerate Newton-Schulz for $\sign(\mX_0)$. \Cref{thm:sign2root} guarantees the following iteration converges to the square roots of $\mA$:
\begin{align*}
  &\mX_0 = \mA, \ \mY_0 = \mI, \ \mR_k = \mI - \mX_k\mY_k\\
  &\mX_{k+1} = \mX_kg_d(\mR_k;\tilde{\alpha}_k), \ \mY_{k+1} = g_d(\mR_k;\tilde{\alpha}_k)\mY_k, 
\end{align*}
where $g_d(\xi;\alpha) = f_{d-1}(\xi) + \alpha \xi^d$ and $f_{d}$ is the $d$-order Taylor polynomial of $f(\xi) = (1-\xi)^{-1/2}$, and $\tilde\alpha_k$ is computed according to (\ref{eq:alpha_sketch}). For $d=1$, \Cref{thm:ns_sign_convergence_with_sketch} and \Cref{thm:sign2root} imply that it has quadratic convergence in the worst case. Similarly, one may obtain PRISM-accelerated Newton-Schulz for orthogonalization. \Cref{tab:methods_for_matrix_functions} shows these methods for $d\in\{1,2\}$ in explicit forms, which correspond to accelerated variants of the 3rd and 5th order Newton-Schulz iterations, respectively. See \Cref{sec:newton_schulz} for details.

Adaptive polynomial acceleration of other algorithms, such as those provided in \Cref{tab:methods_for_matrix_functions}, can be obtained analogously to how Part II of PRISM applies to accelerate matrix sign computation. In \Cref{sec:prism_algorithms}, we provide detailed derivations for every algorithm from \Cref{tab:methods_for_matrix_functions} along with explicit formulas on how to compute $\alpha_k$ in each case.

\section{Experiments}
\label{sec:experiments}

\subsection{Empirical Evaluation of Fast Convergence}
We empirically test the accelerated convergence of PRISM-based Newton-Schulz (5th-order, cf.~\Cref{tab:methods_for_matrix_functions}) for computing the polar factor of matrix $\mA\in\mathbb{R}^{n\times m}$, comparing it with the classical Newton-Schulz and PolarExpress~\citep{amsel2025polarexpress}. Since PRISM has an additional overhead to dynamically compute $\alpha_k$, to make the comparison fair, we measure the wall-clock time used by each algorithm. Convergence with respect to the number of iterations is shown in \Cref{app:experiments}. In \Cref{fig:MP_compare}, we use standard Gaussian random matrices with different aspect ratios $\gamma=n/m$ and compare the convergence of the Frobenius norm error for 5th-order Newton-Schulz, PolarExpress, and PRISM. In \Cref{fig:HTMP_compare}, we carry out the same experiment for matrices with heavy-tailed spectra. Many recent works~\citep{mahoney2019traditional,martin2020heavy,wang2023spectral}
have observed that the weight and kernel matrices in well-trained neural networks have heavy-tailed spectra. The spectra of gradient matrices in well-trained models often inherit heavy tails. We follow \citet{hodgkinson2025models} to generate high-temperature Marchenko-Pastur (HTMP) random matrices to mimic the heavy-tailed gradient matrices in well-trained neural networks as the input matrix. In \Cref{fig:MP_compare,fig:HTMP_compare} we also plot the evolutions of the coefficient $\alpha_k$ found by PRISM, which exhibit very different trends for different input matrices. Automatically adapting to the input spectra allows PRISM to converge the fastest in our experiments. For additional experiments on square roots, see \Cref{app:experiments}.

\begin{figure*}[t!]
  \centering
  \begin{subfigure}{0.245\textwidth}
    \centering
    \includegraphics[width=\textwidth]{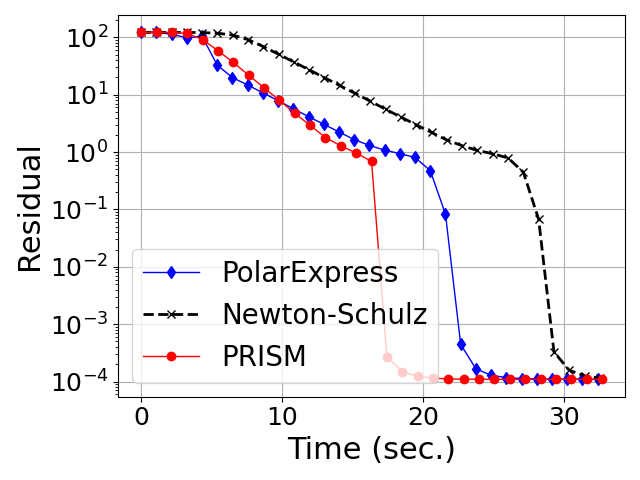}
  \end{subfigure}
  \begin{subfigure}{0.245\textwidth}
    \centering
    \includegraphics[width=\textwidth]{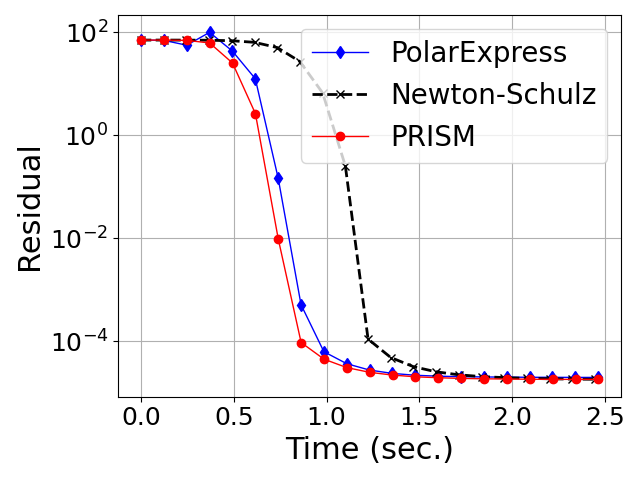}
  \end{subfigure}%
  \begin{subfigure}{0.245\textwidth}
    \centering
    \includegraphics[width=\textwidth]{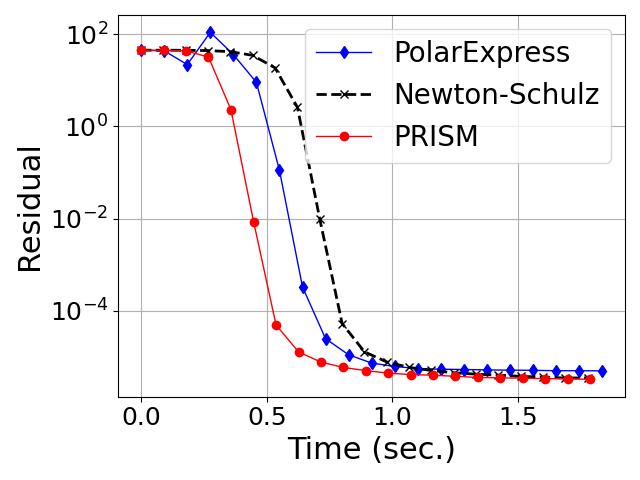}
  \end{subfigure}
  \begin{subfigure}{0.245\textwidth}
    \centering
    \includegraphics[width=\textwidth]{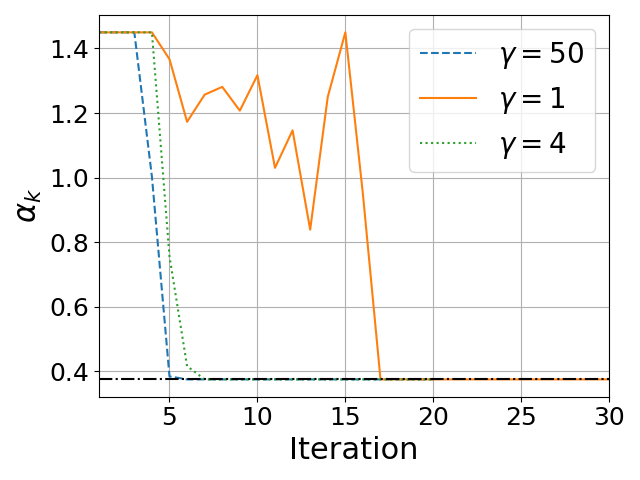}
  \end{subfigure}
  \caption{\small Convergence of degree-5 polynomial methods for othogonalizing a Gaussian random matrix $\mA \in \mathbb{R}^{n \times m}$ with varying aspect ratio $\gamma = n/m$. The figures from left to right show the Frobenius norm error $\|\mI - \mX_k^T\mX_k\|_F$ for $\gamma = 1, 4, 50$, respectively. The last figure on the right shows the $\alpha_k$'s computed by (\ref{eq:alpha_sketch}) in PRISM for different aspect ratios at each iteration.
  }
  \label{fig:MP_compare}
\end{figure*}
\begin{figure*}[t!]
  \centering
  \begin{subfigure}{0.245\textwidth}
    \centering
    \includegraphics[width=\textwidth]{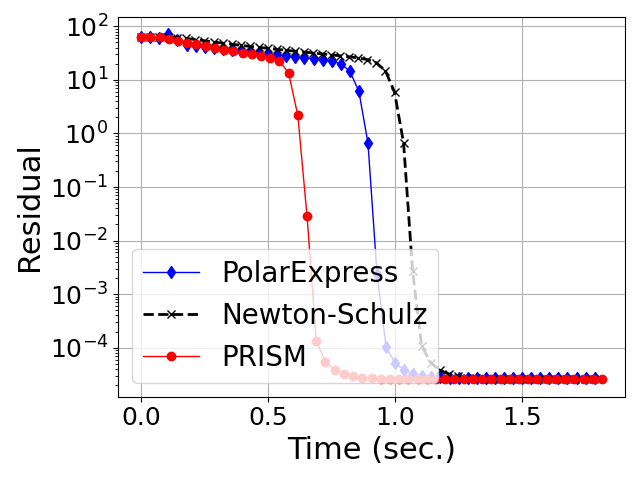}
  \end{subfigure}%
  \begin{subfigure}{0.245\textwidth}
    \centering
    \includegraphics[width=\textwidth]{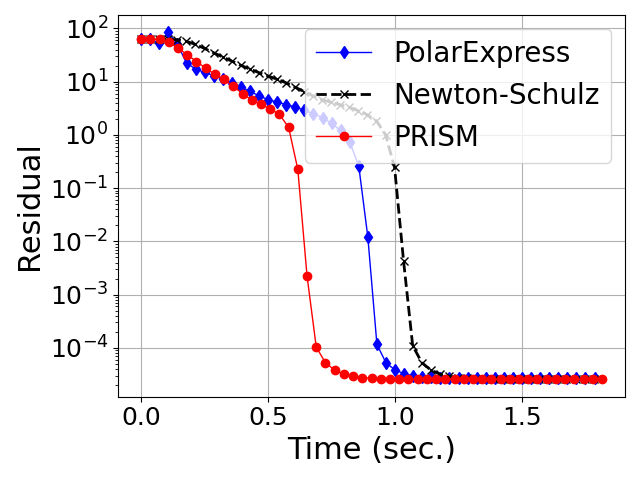}
  \end{subfigure}
  \begin{subfigure}{0.245\textwidth}
    \centering
    \includegraphics[width=\textwidth]{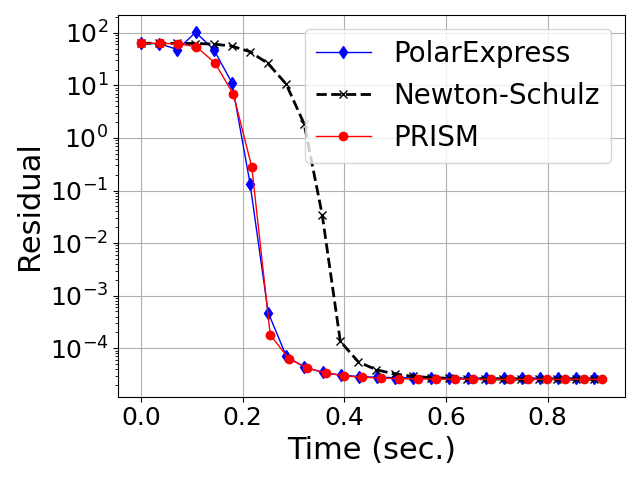}
  \end{subfigure}
    \begin{subfigure}{0.245\textwidth}
    \centering
    \includegraphics[width=\textwidth]{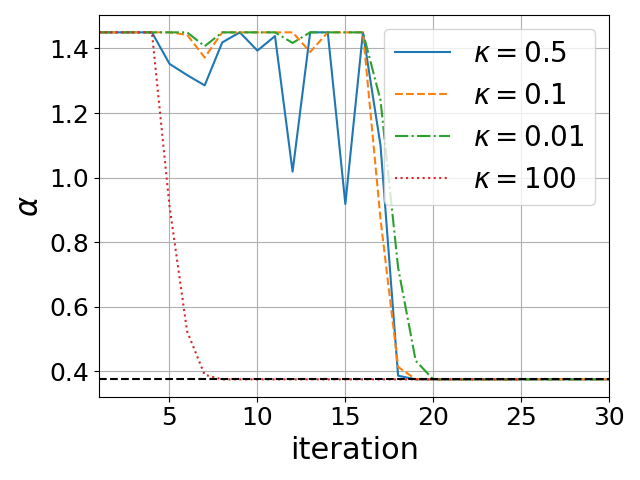}
  \end{subfigure}
  \caption{\small Convergence of degree-5 polynomial methods for othogonalizing random matrices generated by HTMP~\citep{hodgkinson2025models} with different parameter $\kappa$. Smaller $\kappa$ indicates a heavier tail in the spectra. The figures from left to right show the Frobenius norm error $\|\mI-\mX_k^T\mX_k\|_F$ for $\kappa = 0.1, 0.5, 100$, respectively. The rightmost figure shows the $\alpha_k$'s computed by (\ref{eq:alpha_sketch}) in PRISM.
  }
  \label{fig:HTMP_compare}
\end{figure*}

\subsection{Applications to Neural Network Training}

We integrate PRISM into neural network optimizers that require frequent computation of matrix functions and compare performance with existing methods. We carry out experiments using the Shampoo~\citep{shi2023distributed} and Muon~\citep{jordan2024muon,amsel2025polarexpress} optimizers. 

Shampoo is a preconditioned stochastic gradient method that generalizes AdaGrad~\citep{duchi2011adaptive}. For a given weight matrix $\mW_t$ and gradient $\mG_t$, the update step with learning rate $\eta$ is $\mW_{t+1} = \mW_t - \eta\mL_t^{-1/p}\mG_t\mR_t^{-1/p}$,
where $\mL_t,\mR_t$ are two preconditioners maintained by Shampoo. Recent work recommended using $p=2$~\citep{shi2023distributed,morwani2025new} and this is what we use in our experiment. Previous implementations use eigen-decomposition to compute inverse roots $\mL_t^{-1/2}$ and $\mR_t^{-1/2}$. We replace this part with PRISM and PolarExpress and compare their performance with eigen-decomposition.\footnote{Using \Cref{thm:sign2root}, PolarExpress~\citep{amsel2025polarexpress} can be used in a coupled form to compute matrix (inverse) square root faster than the classical Newton-Schulz.} We train slightly larger variants of ResNet-20 and ResNet-32~\citep{he2016deep} for the CIFAR10 and CIFAR100 datasets. We run 5 iterations for both PolarExpress and PRISM (accelerated 5-th order Newton-Schulz). The validation accuracy throughout the first 50 epochs is shown in \Cref{fig:resnet_cifar}. The ranking stays the same when we keep training longer.

\begin{figure}[t!]
  \centering
  \begin{subfigure}{0.45\textwidth}
    \centering
    \includegraphics[height=5cm]{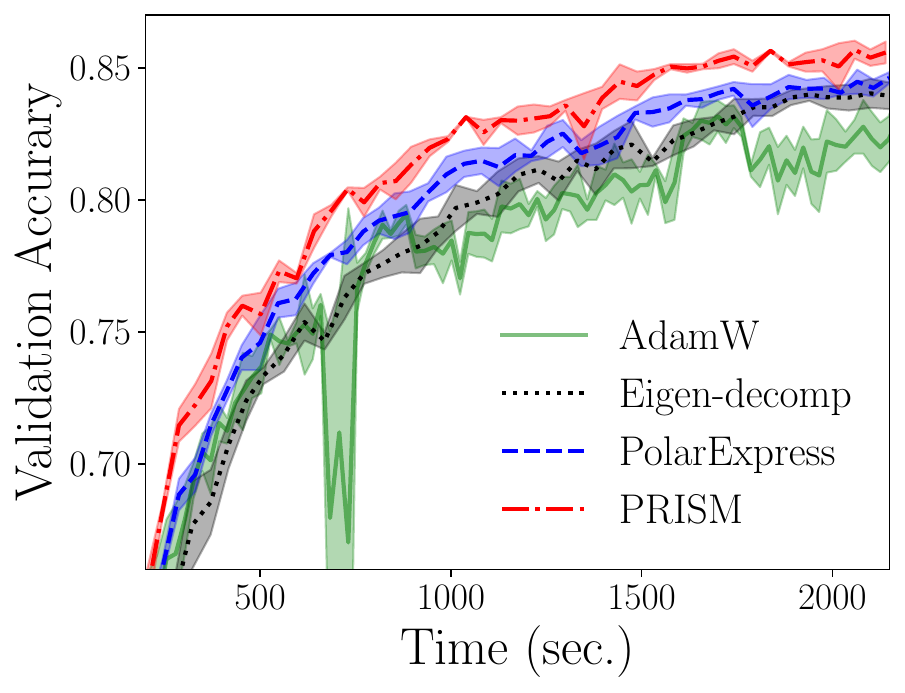}
  \end{subfigure}
  \begin{subfigure}{0.45\textwidth}
    \centering
    \includegraphics[height=5cm]{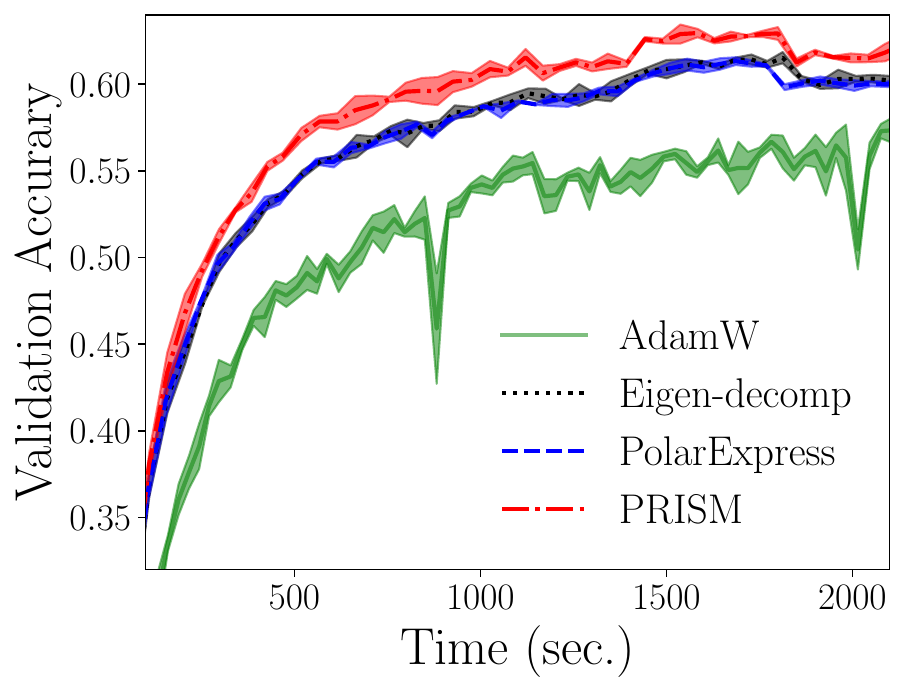}
  \end{subfigure}
  \caption{\small Improvement to the Shampoo optimizer in terms of training speed. We compare three methods to compute the inverse root preconditioner inside Shampoo. Left: ResNet-20 on CIFAR10. Right: ResNet-32 on CIFAR100.}\label{fig:resnet_cifar}
\end{figure}

The Muon optimizer belongs to the family of spectral descent algorithms~\citep{carlson2015preconditioned,riabinin2025gluon,su2025isotropic,davis2026spectral}. It gained popularity as an alternative for training large language models \citep{liu2025muon,shah2025practical,wen2025fantastic}. In Figure \ref{fig:gpt2}, we train a GPT-2 Large model from random initialization with 10 layers, 16 attention heads, and an embedding dimension of 1024, using 200M tokens from the FineWeb dataset. We implement the polar decomposition of gradient matrices inside Muon with PolarExpress, PRISM-based Newton-Schulz with degree-3 and degree-5 polynomials. Additional experimental details can be found in \Cref{app:empirical_setup}.

\begin{figure}[t!]
  \centering
  \begin{subfigure}{0.45\textwidth}
    \centering
    \includegraphics[height=5cm]{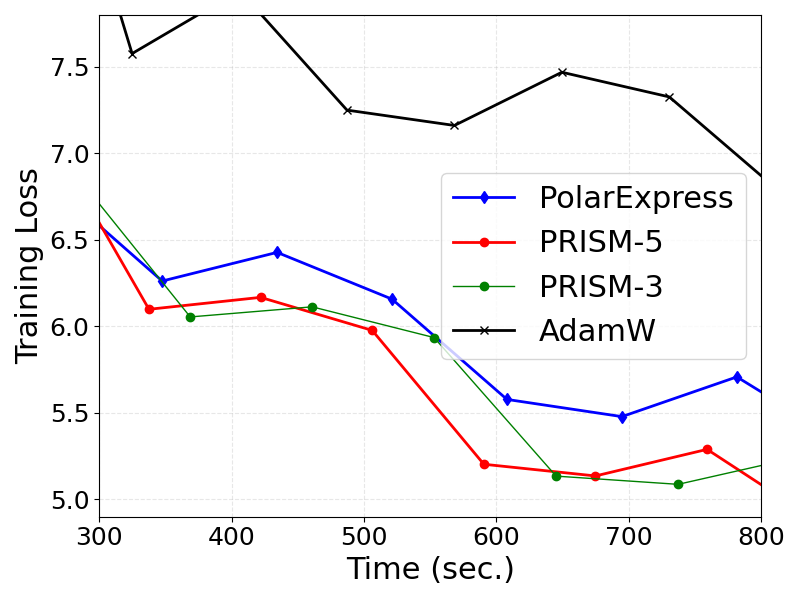}
  \end{subfigure}%
  \begin{subfigure}{0.45\textwidth}
    \centering
    \includegraphics[height=5cm]{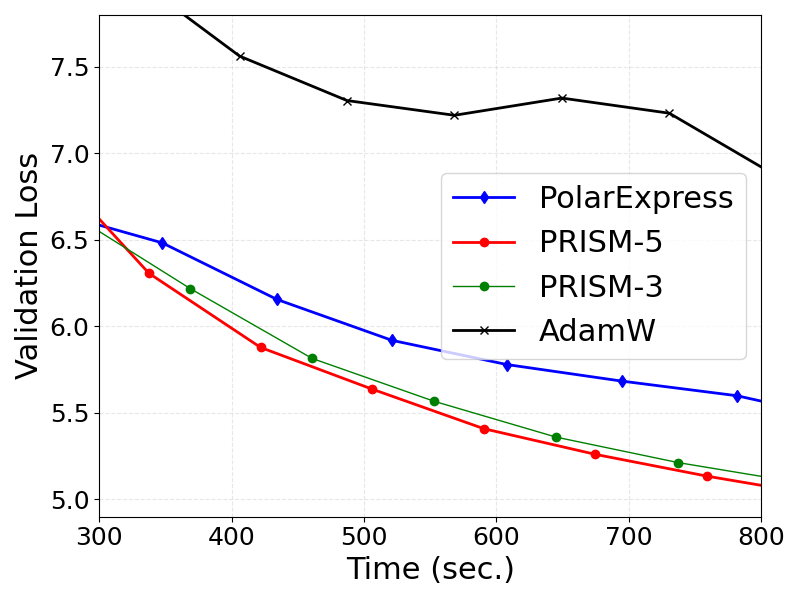}
  \end{subfigure}
  \caption{\small Improvement of Muon in terms of training and validation losses for the GPT-2 model. The final validation losses for PolarExpress, PRISM with degree-5 (PRISM-5) and degree-3 (PRISM-3) polynomials, and AdamW are 5.4523, 5.0251, 4.9886, and 6.8689.}\label{fig:gpt2}
\end{figure}

\section{Conclusion}

PRISM frames a wide range of classical iterations under a single meta-algorithmic template (Part I), then accelerates them (Part II) by dynamically fitting polynomial updates to the evolving spectrum of the current residual, without requiring a priori spectral bounds or distributional assumptions on singular values. 
Empirically, PRISM consistently delivers robust speedups across spectra that are common in ML practice, including Marchenko–Pastur-like and heavy-tailed regimes. 
As a result, PRISM effectively accelerates training when integrated into methods such as Shampoo and Muon. 
Overall, this provides a practical and general route to instance-adaptive, GPU-friendly matrix function computation, turning spectral adaptivity--previously a source of tuning burden--into a reliable algorithmic primitive for large-scale computations.

\bibliography{reference}
\bibliographystyle{apalike2}

\newpage
\appendix
\section{PRISM-based Algorithms}\label{sec:prism_algorithms}

\subsection{Newton-Schulz iteration for matrix sign, square roots and polar decomposition}\label{sec:newton_schulz}

For completeness, we first apply Part I of the PRISM meta-algorithm to derive the Newton-Schulz iteration for computing the matrix sign function, and then we apply Part II to accelerate its convergence. Afterwards, we will invoke \Cref{thm:sign2root} and \Cref{thm:sign2polar} to obtain PRISM-accelerated Newton-Schulz iteration for square roots and polar decomposition, respectively. For the computation of the matrix sign function, we will assume that the input matrix $\mA$ is such that $\mA^2$ is symmetric and $\|\mA\|_2 \le 1$.

Let $x\neq0$ be such that $\sign(x) = \sign(a)$. Then we can write
\[
\sign(a) = \sign(x) = x(x^2)^{-1/2} = x(1-\xi)^{-1/2} = xf(\xi),
\]
where $f(\xi) = (1-\xi)^{-1/2}$, and where $\xi = 1-x^2$ measures how close $x$ is to $\sign(x)=\sign(a)$. Therefore, the problem of approximating $\sign(a)$ leads to that of approximating $f(\xi)$. Using the $d$-th order Taylor polynomial $f_d(\xi)$ around $\xi=0$, we obtain an iterative procedure
\[
x_{k+1} = x_kf_d(\xi(x_k)).
\]
The matrix version,
\[
\mX_0 = \mA, \; \mR_k = \mI - \mX_k^2, \; \mX_{k+1} = \mX_k f_d(\mR_k),
\]
is the (generalized) Newton-Schulz iteration for matrix sign~\citep{higham2005functions}. To accelerate the convergence, Part II of PRISM defines
\[
g_d(\xi;\alpha) = f_{d-1}(\xi) + \alpha\xi^d
\]
and the following iteration
\begin{align*}
\mX_0 = \mA, \; \mR_k = \mI - \mX_k, \; \mX_{k+1} = \mX_kg_d(\mX_k;\alpha_k),\\
\text{where } \alpha_k = \argmin_{\alpha \in [\ell, u]} \left\|\mS_k\Big(\mI - \mX_k^2g_d(\mR_k;\alpha)^2\Big)\right\|_F^2,
\end{align*}
and $\mS_k \in \mathbb{R}^{p \times n}$ with $m \ll n$ is a sketch matrix, e.g., consisting of i.i.d Gaussian entries. For $d=1$, \Cref{thm:ns_sign_convergence_with_sketch} guarantees quadratic convergence if we set $[\ell,u] = [1/2,1]$ in the computation of $\alpha_k$. For $d=2$, empirically we find that $[\ell,u]=[3/8,29/20]$ always leads to fast convergence.

Denote the optimization objective function in the definition of $\alpha_k$ as
\[
m(\alpha) = \left\|\mS_k\Big(\mI - \mX_k^2g_d(\mR_k;\alpha)^2\Big)\right\|_F^2.
\]
Then $m(\alpha)$ is a degree-4 polynomial with respect to $\alpha$, that is
\[
m(\alpha) = c_0 + c_1\alpha + c_2\alpha^2 + c_3\alpha^3 + c_4\alpha^4.
\]
The coefficients $c_0,c_1,c_2,c_3,c_4$ depend on $d$. For $d=1$ we have $g_1(\xi;\alpha) = 1+\alpha\xi$, a simple calculation yields that
\begin{align*}
  c_1 &= 4\tr(\mS_k\mR_k^3\mS_k^T) - 4\tr(\mS_k\mR_k^2\mS_k^T), \\
  c_2 &= 6\tr(\mS_k\mR_k^4\mS_k^T) - 10\tr(\mS_k\mR_k^3\mS_k^T) + 4\tr\mS_k\mR_k^2\mS_k^T), \\
  c_3 &= 4\tr(\mS_k\mR_k^5\mS_k^T) - 8\tr(\mS_k\mR_k^4\mS_k^T) + 4\tr(\mS_k\mR_k^3\mS_k^T), \\
  c_4 &= \tr(\mS_k\mR_k^6\mS_k^T) - 2\tr(\mS_k\mR_k^5\mS_k^T) + \tr(\mS_k\mR_k^4\mS_k^T).
\end{align*}
For $d=2$ and $g_2(\xi;\alpha) = 1 + \frac{1}{2}\xi + \alpha \xi^2$, we have 
\begin{align*}
    c_1 &= \frac{1}{2}\tr(\mS_k\mR_k^7\mS_k^T) + 2\tr(\mS_k\mR_k^6\mS_k^T) + \frac{1}{2}\tr(\mS_k\mR_k^5\mS_k^T) - 3\tr(\mS_k\mR_k^4\mS_k^T), \\
    c_2 &= \frac{3}{2}\tr(\mS_k\mR_k^8\mS_k^T) + 3\tr(\mS_k\mR_k^7\mS_k^T) - \frac{9}{2}\tr(\mS_k\mR_k^6\mS_k^T) - 4\tr(\mS_k\mR_k^5\mS_k^T) + 4\tr(\mS_k\mR_k^4\mS_k^T), \\
    c_3 &= 2\tr(\mS_k\mR_k^9\mS_k^T) - 6\tr(\mS_k\mR_k^7\mS_k^T) + 4\tr(\mS_k\mR_k^6\mS_k^T), \\
    c_4 &= \tr(\mS_k\mR_k^{10}\mS_k^T) - 2\tr(\mS_k\mR_k^9\mS_k^T) + \tr(\mS_k\mR_k^8\mS_k^T).
\end{align*}
For general $d\ge1$, computing these coefficients requires access to the diagonal entries of $\mS_k\mR_k^{i}\mS_k^T$ for $i$ up to $4d+2$. Computing $\mS_k\mR_k^{i}\mS_k^T$ from right to left (or equivalently, from left to right) as
\[
\mS_k\mR_k^{i}\mS_k^T = \mS_k\mR_k(\cdots(\mR_k(\mR_k\mS_k^T)))
\]
takes $O(n^2p)$ time.

Using \Cref{thm:sign2root}, we get the following PRISM-accelerated Newton-Schulz iteration for computing the square root,
\begin{align*}
\mX_0 = \mA, \ \mY_0 = \mI, \ \mR_k = \mI - \mX_k\mY_k\\
\mX_{k+1} = \mX_kg_d(\mR_k;\alpha_k), \ \mY_{k+1} = g_d(\mR_k;\alpha_k)\mY_k, \\
\text{where } \alpha_k = \argmin_{\alpha \in [\ell, u]} \left\|\mS_k\Big(\mI - \mX_k^2g_d(\mR_k;\alpha)^2\Big)\right\|_F^2.
\end{align*}
It is straightforward to see that $\alpha_k$ can be computed in the same way as in the sign computation, i.e., by solving the cubic equation $m'(\alpha)=0$. The coefficients $c_0, c_1, c_2, c_3, c_4, c_5$ of the function $m(\alpha)$ have identical formulas; the only difference is that $\mR_k = \mI-\mX_k\mY_k$ rather than the previous $\mR_k = \mI-\mX_k^2$. Again, $[\ell,u] = [1,1/2]$ is recommended for $d=1$, which corresponds to the 3rd-order Newton-Schulz iteration in \Cref{tab:methods_for_matrix_functions}; and $[\ell,u] = [3/8, 29/20]$ is recommended for $d=2$, which corresponds to the 5th-order Newton-Schulz iteration in \Cref{tab:methods_for_matrix_functions}.

Similarly, using \Cref{thm:sign2polar}, we get the following PRISM-accelerated Newton-Schulz iteration for computing the polar factor $\mU\mV^T$, where $\mA = \mU\boldsymbol\Sigma\mV^T$ is an SVD. Assume $\mA \in \mathbb{R}^{m\times n}$ with $m\ge n$,
\begin{align*}
\mX_0 = \mA, \; \mR_k = \mI - \mX_k^T\mX_k, \; \mX_{k+1} = \mX_k g_d(\mR_k;\alpha_k),\\
\text{where } \alpha_k = \argmin_{\alpha \in [\ell, u]} \left\|\mS_k\Big(\mI - \mX_k^2g_d(\mR_k;\alpha)^2\Big)\right\|_F^2.
\end{align*}
Again, $\alpha_k$ is computed in the same way, that is, by solving the cubic equation $m'(\alpha) = 0$, where the coefficients of $m(\alpha)$ have the same formulas as in the sign computation; the only difference is that now we have $\mR_k = \mI - \mX_k^T\mX_k$ instead of $\mR_k = \mI - \mX_k^2$. Since Newton-Schulz for polar decomposition shares identical convergence behavior as Newton-Schulz for sign computation, the same constraint on $\alpha$ is recommended. That is, $[\ell,u] = [1,1/2]$ for $d=1$, which corresponds to the 3rd-order Newton-Schulz iteration in \Cref{tab:methods_for_matrix_functions}; and $[\ell,u] = [3/8, 29/20]$ for $d=2$, which corresponds to the 5th-order Newton-Schulz iteration in \Cref{tab:methods_for_matrix_functions}.

\subsection{DB Newton iteration for matrix square roots}\label{sec:DBNewton}

We first apply Part I of the PRISM meta-algorithm to derive the Newton iteration for computing the $p$-th root of a matrix, and consequently we obtain the Newton iteration for matrix square root as a special case for $p=2$. Then we apply Part II to accelerate its computation. We will assume that the input matrix $\mA$ is symmetric since this is the most relevant case, for example, arising as the Hessian or the covariance matrix. 

Let $a,x>0$, and write
\[
a^{1/p} = x(x^{-p}a)^{1/p} = xf(\xi),
\]
where $f(\xi) = (1-\xi)^{1/p}$ and  $\xi = 1-x^{-p}a$. This reduces the problem of approximating $a^{1/p}$ to that of approximating $f(\xi)$. Using the first-order Taylor approximation $f_1(\xi)$ around $\xi=0$, i.e., $f_1(\xi) = 1 - \tfrac{1}{p}\xi$, we get an iterative procedure
\[
x_{k+1} = x_k f_1(1-x^{-p}a) = \frac{1}{p}((p-1)x_k + x_k^{1-p}a).
\]
The matrix version, 
\[
\mX_{k+1} = \frac{1}{p}((p-1)\mX_k+\mX_k^{1-p}\mA),
\]
is the Newton iteration for the $p$-th root~\citep{higham2005functions}. Part II of PRISM can be applied to accelerate Newton iteration for matrix $p$-th root for any $p\ge2$, but here we focus on the special case $p=2$, with the understanding that analogous results hold more generally for $p \ge 2$. The Newton iteration for matrix square root is thus
\[
\mX_0 = \mA, \; \mX_{k+1} = \frac12 \mX_k + \frac12 \mX_k^{-1}\mA.
\]
By applying PRISM, we can accelerate its convergence by executing the following iteration
\[
\mX_0 = \mA, \; \mX_{k+1} = (1-\alpha_k)\mX_k + \alpha_k\mX_k^{-1}\mA, \text{ where } \alpha_k = \argmin \|\xi(\mX_{k+1},\mA)\|_F^2
\]
and $\xi(\mX_k,\mA) = \mI - \mX_{k+1}^{-1}\mA$ is the residual matrix at iteration $k+1$. In practice, coupled versions of Newton iteration are often preferred to improve numerical stability. By introducing $\mY_k = \mA^{-1}\mX_k$ to the standard Newton iteration, we obtain the DB Newton iteration due to \citet{denman1976matrix},
\begin{align*}
\mX_{k+1} = \frac12 \mX_k + \frac12 \mY_k^{-1}, \; \mX_0 = \mA,\\
\mY_{k+1} = \frac12 \mY_k + \frac12 \mX_k^{-1}, \; \mY_0 = \mI.
\end{align*}
DB Newton requires performing two matrix inversions at each iteration. To reduce the number of matrix inversions at each iteration from two to one, we may introduce $\mM_k = \mX_k\mY_k$ and obtain the following product form of DB Newton~\citep{cheng2001approximating},
\begin{align*}
\mM_{k+1} = \frac12\mI + \frac14\mM_k + \frac14\mM_k^{-1}, \; \mM_0 = \mA,\\
\mX_{k+1} = \frac12\mX_k + \frac12\mX_k\mM_k^{-1}, \; \mX_0 = \mA,\\
\mY_{k+1} = \frac12\mY_k + \frac12\mY_k\mM_k^{-1}, \; \mY_0 = \mI.
\end{align*}
Introducing $\mM_k = \mX_k\mY_k$ to the PRISM-accelerated version of Newton iteration, we get
\begin{align*}
\mM_{k+1} = 2\alpha_k(1-\alpha_k)\mI + (1-\alpha_k)^2\mM_k + \alpha_k^2\mM_k^{-1}, \; \mM_0 = \mA,\\
\mX_{k+1} = (1-\alpha_k)\mX_k + \alpha_k\mX_k\mM_k^{-1}, \; \mX_0 = \mA,\\
\mY_{k+1} = (1-\alpha_k)\mY_k + \alpha_k\mY_k\mM_k^{-1}, \; \mY_0 = \mI,\\
\text{where } \alpha_k = \argmin \|\mI - \mM_{k+1}\|_F^2.
\end{align*}
Given $\mM_k$, the function $\|\mI - \mM_{k+1}\|_F^2$ is a degree-4 polynomial with respect to $\alpha_k$, i.e.
\[
\|\mI - \mM_{k+1}\|_F^2 = m(\alpha_k) = c_0 + c_1 \alpha_k + c_2 \alpha_k^2 + c_3 \alpha_k^3 + c_4 \alpha_k^4,
\]
where
\begin{align*}
c_1 &= \tr(-4\mI + 8\mM_k - 4\mM_k^2),\\
c_2 &= \tr(10\mI - 14\mM_k + 6\mM_k^2 - 2\mM_k^{-1}),\\
c_3 &= \tr(-12\mI + 12\mM_k - 4\mM_k^2 + 4\mM_k^{-1}),\\
c_4 &= \tr(6\mI - 4\mM_k + \mM_k^2 - 4\mM_k^{-1} + \mM_k^{-2}).
\end{align*}
Using the linearity of matrix trace and the fact that for symmetric matrix $\mA$,
\[
\tr(\mA^2) = \sum_{i,j} \mA_{i,j}^2,
\]
all of these coefficients can be efficiently computed in $O(n^2)$ time, without having to perform matrix multiplications. Therefore, the optimal $\alpha_k$ that minimizes the Frobenius norm of the residual matrix for the next iterate can be efficiently computed without sketching. We note that this is a distinct difference compared with Newton-Schulz-like algorithms for computing square roots. In addition, unlike Newton-Schulz iteration for square root which has a local convergence region, since Newton iteration for matrix square root is globally convergent, we do not need to impose any interval constraint on the coefficient $\alpha_k$ when we solve the optimization problem.

{\bf Remark on computing the matrix inverse $\mM_k^{-1}$.} When the input matrix $\mA$ is symmetric, one can easily verify that $\mM_k$ is symmetric for all $k$. Therefore, $\mM_k^{-1}$ can be computed via triangular solve from the Cholesky factorization of $\mM_k$. This can greatly improve the practical runtime of the Newton iteration. In \Cref{fig:NEWTON_SQRT_compare}, we show that PRISM-based Newton iteration can outperform PRISM-based Newton-Schulz by a good margin.

\subsection{Inverse Newton iteration for inverse $p$-th root}\label{sec:inverse_newton}

We first apply Part I of the PRISM meta-algorithm to derive inverse Newton iteration for computing the inverse $p$-th root of a matrix, and then we apply Part II to accelerate its computation. We will again assume that the input matrix $\mA$ is symmetric.

Let $a,x > 0$, and write
\[
a^{-1/p} = x(x^pa)^{-1/p} = xf(\xi),
\]
where $f(\xi) = (1-\xi)^{-1/p}$ and $\xi = 1-x^pa$. This reduces the problem of approximating $a^{1/p}$ to that of approximating $f(\xi)$. 
Using the first-order Taylor approximation $f_1(\xi)$ around $\xi=0$, i.e., $f_1(\xi)=1+\tfrac{1}{p}\xi$, we get an iterative procedure
\[
x_{k+1} = x_kf_1(1-x^pa) = \frac{1}{p}((p+1)x_k - x_k^{p+1}a).
\]
The matrix version,
\[
\mX = \mA, \; \mX_{k+1} = \frac{1}{p}((p+1)\mX_k - \mX_k^{p+1}\mA),
\]
is the inverse Newton iteration for the inverse $p$-th root~\citep{higham2005functions}. When $p=1$, this is a variant of the Newton-Schulz iteration for matrix inverse~\citep{higham2005functions}. In practice, by introducing $\mM_k = \mX_k^p\mA$, the following coupled inverse Newton iteration is preferred to improve numerical stability~\citep{higham2005functions},
\begin{align*}
\mX_{k+1} = \mX_k\left(\frac{(p+1)\mI - \mM_k}{p}\right), \; \mX_0 = \frac{1}{c}\mI,\\
\mM_{k+1} = \left(\frac{(p+1)\mI - \mM_k}{p}\right)^p \mM_k, \; \mM_0 = \frac{1}{c^p}\mA,
\end{align*}
where a good choice of $c$ to guarantee convergence is
\[
c = \left(\frac{2\|\mA\|_F}{p+1}\right)^{1/p}.
\]
By noting that the residual matrix
\[
\mR_k = \xi(\mX_k,\mA) = \mI - \mX_k^p\mA = \mI - \mM_k,
\]
the above coupled inverse Newton iteration can be equivalently written as
\begin{align*}
\mR_k = \mI - \mM_k, \; \mX_{k+1} = \mX_k\left(\mI + \frac1p\mR_k\right), \; \mX_0 = \frac{1}{c}\mI,\\
\mM_{k+1} = \left(\mI + \frac1p\mR_k\right)^p \mM_k, \; \mM_0 = \frac{1}{c^p}\mA.
\end{align*}
Applying PRISM to this, we get
\begin{align*}
\mR_k = \mI - \mM_k, \; \mX_{k+1} = \mX_k(\mI + \alpha_k\mR_k), \; \mX_0 = \frac{1}{c}\mI,\\
\mM_{k+1} = (\mI + \alpha_k\mR_k)^p \mM_k, \; \mM_0 = \frac{1}{c^p}\mA,\\
\text{where } \alpha_k = \argmin_{\alpha\in[\ell,u]} \left\|\mS_k\left(\mR_k + \sum_{i=1}^p \begin{pmatrix}p \\ i\end{pmatrix}\alpha^i(\mR_k^{i+1}-\mR_k^i)\right)\right\|_F^2,
\end{align*}
and $\mS_k\in\mathbb{R}^{m \times n}$ where $m\ll n$ is a sketch matrix, e.g., consisting of i.i.d Gaussian entries. Denote the optimization objective function in the definition of $\alpha_k$ as
\[
m(\alpha) = \left\|\mS_k\left(\mR_k + \sum_{i=1}^p \begin{pmatrix}p \\ i\end{pmatrix}\alpha^i(\mR_k^{i+1}-\mR_k^i)\right)\right\|_F^2.
\]
The function $m(\alpha)$ is a polynomial of degree $2p$ with respect to $\alpha$. For $p=1$, we have
\[
m(\alpha) = c_0 + c_1\alpha + c_2\alpha^2
\]
where
\begin{align*}
c_1 &= 2\tr(\mS_k\mR_k^3\mS_k^T) - 2\tr(\mS_k\mR_k^2\mS_k^T),\\
c_2 &= \tr(\mS_k\mR_k^4\mS_k^T) - 2\tr(\mS_k\mR_k^3\mS_k^T) + \tr(\mS_k\mR_k^2\mS_k^T).
\end{align*}
For $p=2$, we have
\[
m(\alpha) = c_0 + c_1\alpha + c_2\alpha^2 + c_3\alpha^3 + c_4\alpha^4,
\]
where
\begin{align*}
  c_1 &= 4\tr(\mS_k\mR_k^3\mS_k^T) - 4\tr(\mS_k\mR_k^2\mS_k^T), \\
  c_2 &= 6\tr(\mS_k\mR_k^4\mS_k^T) - 10\tr(\mS_k\mR_k^3\mS_k^T) + 4\tr(\mS_k\mR_k^2\mS_k^T), \\
  c_3 &= 4\tr(\mS_k\mR_k^5\mS_k^T) - 8\tr(\mS_k\mR_k^4\mS_k^T) + 4\tr(\mS_k\mR_k^3\mS_k^T), \\
  c_4 &= \tr(\mS_k\mR_k^6\mS_k^T) - 2\tr(\mS_k\mR_k^5\mS_k^T) + \tr(\mS_k\mR_k^4\mS_k^T).
\end{align*}
In both cases, computing the coefficients of $m(\alpha)$ takes $O(mn^2)$ time. Once the coefficients are known, $\alpha_k$ that minimizes $m(\alpha)$ can be computed analytically by solving $m'(\alpha) = 0$.

For $p\ge3$, the coefficients of $m(\alpha) = \sum_{i=0}^{2p} c_i\alpha^i$ can be computed similarly in $O(mn^2)$ time, but minimizing $m(\alpha)$ requires a numerical optimization algorithm. Since this is a scalar polynomial function, this can be done by numerically solving $m'(\alpha) = 0$ and then evaluating $m(\alpha)$ at the roots of its derivative. Numerically computing the roots of $m'(\alpha)$ by computing the eigenvalues of the companion matrix takes $O(p^3)$ times.

\subsection{Chebyshev's iteration for inverse}\label{sec:chebyshev}

We first apply Part I of the PRISM meta-algorithm to derive Chebyshev's iteration for computing matrix inverse, and then we apply Part II to accelerate its computation. Here, we do not require the input matrix $\mA$ to be symmetric, but we will assume that $\|\mA\|_2 \le 1$, which is easily satisfied by normalizing $\mA \mapsto \mA/\|\mA\|_F$ for a general full-rank square matrix $\mA$.
 
Let $a,x\neq0$, and write
\[
a^{-1} = x(ax)^{-1} = xf(\xi),
\]
where $f(\xi) = (1-\xi)^{-1}$ and $\xi=1-ax$. Using the second-order Taylor approximation $f_2(\xi)$ around $\xi=0$, i.e., $f_2(\xi) = 1 + \xi + \xi^2$, we get an iterative procedure
\[
x_{k+1} = x_kf_2(1-ax)= 3x_k-3x_kax_k+x_kax_ka_k.
\]

The matrix version,
\[
\mX_0 = \mA^T, \; \mX_{k+1} = \mX_k(\mI + \mR_k + \mR_k^2) = 3\mX_k - 3\mX_k\mA\mX_k + \mX_k\mA\mX_k\mA\mX_k,
\]
gives Chebyshev's iteration, where $\mR_k = \mI - \mA\mX_k$.

Applying Part II of PRISM, we get the following accelerated version,
\begin{align*}
\mX_0 = \mA^T, \; \mR_k = \mI - \mA\mX_k, \; \mX_{k+1} = \mX_k(\mI + \mR_k + \alpha_k\mR_k^2), \\
\text{where } \alpha_k = \argmin_{\alpha\in[\ell,u]} \left\|\mS_k\Big(\mR_k^2 - \alpha(\mR_k^2-\mR_k^3)\Big)\right\|_F^2.
\end{align*}
Write
\[
m(\alpha) = \left\|\mS_k\Big(\mR_k^2 - \alpha(\mR_k^2-\mR_k^3)\Big)\right\|_F^2 = c_0 + c_1\alpha + c_2\alpha^2,
\]
we have
\begin{align*}
c_1 & = -2\tr(\mS_k\mR_k^4\mS_k^T) + 2\tr(\mS_k\mR_k^5\mS_k^T)\\
c_2 & = \tr(\mS_k\mR_k^4\mS_k^T) - 2\tr(\mS_k\mR_k^5\mS_k^T) + \tr(\mS_k\mR_k^6\mS_k^T).
\end{align*}
Therefore, in the PRISM-accelerated Chebyshev's iteration, $\alpha_k$ can be computed in closed-form by solving $m'(\alpha) = 0$. Empirically, we found that enforcing $\alpha_k \in [\ell,u] = [1/2,2]$ is sufficient to ensure fast convergence.
\section{Proofs}\label{app:proofs}

\subsection{Technical lemma}
The proofs rely heavily on the following lemma which summarizes some important properties of the polynomial
\[
  h(x,\alpha) = 1 - (1 - x)(1+\alpha x)^2.
\]

\begin{lemma}\label{lem:polynomial_properties}
The function $h(x,\alpha) = 1 - (1 - x)(1+\alpha x)^2$ has the following properties:
\begin{enumerate}
\item $h(x,\alpha) \in [-1/5, x^2]$ for all $x \in [1/2, 1]$ and for all $\alpha \in [1/2, 1]$;
\item $h(x,\alpha) \in [-1/5, 1/4]$ for all $x \in [-1/5, 1/2]$ and for all $\alpha \in [1/2, 1]$;
\item Let $n\ge1$ and $x_1, x_2, \ldots, x_n \in [-1/4, 1/4]$ and $\alpha^* = \argmin_{\alpha\in[1/2,1]} \sum_{i=1}^n h(x_i, \alpha)^2$, we have 
\[
\max_i |h(x_i, \alpha^*)| \le C \max_i x_i^2 \text{ for some constant } C<1.71.
\]
\item Let $n\ge1$ and $x_1, x_2, \ldots, x_n \in [-1/4, 1/4]$, not all 0, and $\alpha^* = \argmin_{\alpha\in[1/2,1]} \sum_{i=1}^n h(x_i, \alpha)^2$ and $\tilde{\alpha}\in[1/2,1]$ be such that $\sum_{i=1}^n h(x_i, \tilde\alpha)^2 \le (1+\gamma) \sum_{i=1}^n h(x_i, \alpha^*)^2$ for some $\gamma \ge 0$. Then
\[
|\alpha^* - \tilde\alpha| < \sqrt{\gamma} D \max_i |x_i| \text{ for some constant } D < 0.51.
\]
\item Let $n\ge1$ and $x_1, x_2, \ldots, x_n \in [-1/4, 1/4]$, and let $\tilde{\alpha}\in[1/2,1]$ be such that $\sum_{i=1}^n h(x_i, \tilde\alpha)^2 \le (1+\gamma) \sum_{i=1}^n h(x_i, \alpha)^2$ for all $\alpha\in[1/2,1]$, where $\gamma < 1.38$. Then
\[
\max_i |h(x_i, \tilde\alpha)| \le E \max_i x_i^2 \text{ for some constant } E<2.95.
\]
\end{enumerate}
\end{lemma}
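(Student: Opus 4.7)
All five parts are analytic facts about the concrete polynomial $h(x,\alpha)=1-(1-x)(1+\alpha x)^2$, cubic in $x$ and quadratic in $\alpha$. I would begin with the reparametrization $t=\alpha-1/2$, giving
\[
h(x,\alpha) \;=\; \tfrac{1}{4}x^2(3+x) \;+\; t(-2x+x^2+x^3) \;+\; t^2 x^2(x-1),
\]
so that $h(x,1/2)=O(x^2)$ and the coefficient of $t$ behaves like $-2x$ near $x=0$. This identity is the common backbone for all five bounds.

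For Part~1, the upper bound $h\le x^2$ reduces algebraically to $(1+\alpha x)^2-(1+x)=(2\alpha-1)x+\alpha^2 x^2\ge 0$, which is immediate on $[1/2,1]^2$; the lower bound $h\ge -1/5$ reduces to $(1-x)(1+\alpha x)^2\le 6/5$, and since the left side is increasing in $\alpha$ for $x\ge 0$, its maximum on the square is at $\alpha=1$, giving $\max_{x\in[1/2,1]}(1-x)(1+x)^2=9/8<6/5$ (attained at $x=1/2$). For Part~2, I would compute the interior critical points of $h$ on the rectangle $[-1/5,1/2]\times[1/2,1]$ by solving the low-degree system $\partial_x h=\partial_\alpha h=0$, then compare against the four edge extrema; all candidate values lie in $[-1/5,1/4]$.

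Parts~3--5 rest on two auxiliary pieces for $F(\alpha):=\sum_i h(x_i,\alpha)^2$. First, a slope bound at $\alpha=1/2$: using $h(x_i,1/2)=x_i^2(3+x_i)/4$ and $\partial_\alpha h(x_i,1/2)=-2x_i(1-x_i)(1+x_i/2)$, one gets $|F'(1/2)|\le C_1\sum_i|x_i|^3\le C_1(\max_i|x_i|)\sum_i x_i^2$ for an explicit $C_1$. Second, strong convexity on $[1/2,1]$: since $(\partial_\alpha h(x_i,\alpha))^2=4x_i^2(1-x_i)^2(1+\alpha x_i)^2\ge \mu_0 x_i^2$ on $|x_i|\le 1/4$, $\alpha\in[1/2,1]$ (via $\min(1-x)(1+\alpha x)=27/32$ on that rectangle), while the cross term $2\sum h(x_i,\alpha)\partial_\alpha^2 h(x_i,\alpha)$ is of order $\sum|x_i|^3\le\tfrac14\sum x_i^2$, one gets $F''(\alpha)\ge\mu\sum_i x_i^2$ for an explicit $\mu>0$. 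With these, Part~3 splits into a boundary case $\alpha^*=1/2$ (where $|h(x_i,1/2)|\le\tfrac{13}{16}x_i^2$ directly from the closed form) and the interior case $\alpha^*\in(1/2,1]$, where the mean-value theorem on $F'$ yields $|\alpha^*-1/2|\le(C_1/\mu)\max_i|x_i|$ and the triangle inequality
\[
|h(x_i,\alpha^*)|\;\le\;|h(x_i,1/2)|+|\alpha^*-1/2|\cdot\sup_{\alpha\in[1/2,1]}|\partial_\alpha h(x_i,\alpha)|,
\]
combined with $\sup_\alpha|\partial_\alpha h(x_i,\alpha)|\le L|x_i|$ and $(\max_j|x_j|)|x_i|\le\max_j x_j^2$, yields $|h(x_i,\alpha^*)|\le C\max_j x_j^2$. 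Part~4 applies strong convexity directly: $(\tilde\alpha-\alpha^*)^2\le 2\gamma F(\alpha^*)/(\mu\sum_i x_i^2)$, and $F(\alpha^*)\le F(1/2)\le(13/16)^2(\max_i x_i^2)\sum_i x_i^2$ together give $|\tilde\alpha-\alpha^*|\le D\sqrt\gamma\max_i|x_i|$. Part~5 chains Parts~3 and~4 by triangle inequality: $|h(x_i,\tilde\alpha)|\le|h(x_i,\alpha^*)|+|\tilde\alpha-\alpha^*|\cdot L|x_i|\le(C+\sqrt\gamma DL)\max_j x_j^2$, and $\gamma<1.38$ is exactly the slack the resulting expression needs to stay below $E<2.95$.

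\textbf{Main obstacle.} The strategy is elementary, but the numerical constants $1.71$, $0.51$, $2.95$, and $1.38$ are essentially tight and require disciplined bookkeeping. Each is the output of a chain of multiplicative slack that compounds nonlinearly through $C_1$, $\mu$, and $L$; I expect the hardest subtask to be the lower bound on the strong-convexity constant $\mu$, where the negative cross term $h\cdot\partial_\alpha^2 h$ must be controlled by exploiting its leading $O(x^3)$ cancellation rather than bounded term-by-term. The boundary possibility $\alpha^*=1$ should also be ruled out (or absorbed) by a direct pointwise comparison showing $F(1)>F(1/2)$ on the admissible rectangle $|x_i|\le 1/4$, since on this range the linear-in-$x$ term of $h(x,1)$ dominates the quadratic-in-$x$ term of $h(x,1/2)$ in squared $\ell_2$ norm.
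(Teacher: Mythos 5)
Your decompositions for Parts 1 and 2 are valid alternatives to the paper's (for Part 1, the reduction $h\le x^2 \Leftrightarrow (2\alpha-1)x+\alpha^2x^2\ge 0$ is cleaner than the paper's endpoint checks; for Part 2, the paper skips the interior-critical-point search by noting that $\partial_\alpha h$ has the sign of $x$ on the rectangle, so it suffices to compare $\alpha=1/2$ with $\alpha=1$). Your Part 4 is essentially the paper's proof, except for one consequential detail: the paper lower-bounds $L''(\alpha)=\sum_i x_i^2\,g(x_i,\alpha)$ directly, where $g(x,\alpha)=4(1-x)\bigl(3(1-x)(1+\alpha x)^2-1\bigr)$, and shows $g\ge 1419/256\approx 5.54$ by one-variable calculus. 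Your plan to split $F''$ into $2\sum(\partial_\alpha h)^2$ and the cross term $2\sum h\,\partial_\alpha^2 h$ and bound each separately gives $\mu\approx 4$, which already pushes $D$ above $0.51$; you would need the exact polynomial bound on $g$ to recover the stated constant.

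The real gap is Parts 3 and 5. You propose to move from $\alpha=1/2$ to $\alpha^*$ by the mean-value theorem (slope at $1/2$ divided by strong convexity) and then apply the triangle inequality
\[
|h(x_i,\alpha^*)|\le |h(x_i,1/2)|+|\alpha^*-\tfrac12|\cdot \sup_\alpha |\partial_\alpha h(x_i,\alpha)|.
\]
This chain accumulates slack at two independent points and, with the tightest constants I can extract from it, lands around $1.9$--$2.5$ rather than $1.71$ (and correspondingly above $2.95$ for Part 5). The paper's argument is structurally sharper: it shows, for each fixed $x$, that $f_x(\alpha)=h(x,\alpha)^2$ is increasing on $[\beta(x),1]$ with $\beta(x)=(1/\sqrt{1-x}-1)/x$, hence $\alpha^*\le\beta(M)$ where $M=\max_i|x_i|$; it then characterizes the monotone regions of $x\mapsto|h(x,\alpha)|$ to locate the worst-case point at $x_o=-M$; and it evaluates the ratio $h(-M,\beta(M))/M^2$ exactly, using the substitution $u=1/\sqrt{1-M}$ to show the ratio increases to its maximum at $M=1/4$, giving $16(-17/3+10/\sqrt3)<1.71$. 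This bypasses the triangle inequality entirely and captures the cancellation you would lose by routing through $\alpha=1/2$. Part 5 then reuses the same $-M$ worst case and the same ratio maximization, just with $\tilde\alpha\le\beta(M)+0.6M$ substituted for $\beta(M)$ (using Part 4 with $\gamma<1.38$), rather than the $(C+\sqrt\gamma D L)$ chain you propose. So the strong-convexity ingredient is shared, but the outer bound on $\max_i|h(x_i,\cdot)|$ requires the $\beta(M)$/worst-outlier argument to hit the stated constants.
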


We break the proof of \Cref{lem:polynomial_properties} into separate claims and prove each claim separately.

\begin{claim}\label{claim:quadratic_conv_for_large_x}
$h(x,\alpha) \in [-1/5, x^2]$ for all $x \in [1/2, 1]$ and for all $\alpha \in [1/2, 1]$.
\end{claim}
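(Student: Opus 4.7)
My plan is to prove the two bounds separately, working directly from the factored form $h(x,\alpha) = 1 - (1-x)(1+\alpha x)^2$ rather than expanding to a cubic. On the domain $[1/2,1] \times [1/2,1]$ every factor has a fixed sign, which keeps the inequality manipulations transparent and avoids any case analysis.

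For the upper bound $h(x,\alpha) \le x^2$, I would rewrite the desired inequality as $1 - x^2 \le (1-x)(1+\alpha x)^2$, i.e., $(1-x)(1+x) \le (1-x)(1+\alpha x)^2$. Since $1-x \ge 0$ on $[1/2,1]$, it suffices (with the boundary case $x=1$ holding as equality) to show $(1+\alpha x)^2 \ge 1+x$. Using $\alpha \ge 1/2$ I obtain $1+\alpha x \ge 1 + x/2$, and then $(1+x/2)^2 = 1 + x + x^2/4 \ge 1+x$. This closes the upper side.

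For the lower bound $h(x,\alpha) \ge -1/5$, I would equivalently bound $(1-x)(1+\alpha x)^2 \le 6/5$. The partial derivative in $\alpha$ equals $2x(1-x)(1+\alpha x)$, which is nonnegative on the domain, so the maximum in $\alpha$ is attained at $\alpha = 1$, reducing the task to bounding $(1-x)(1+x)^2$ on $x \in [1/2,1]$. Differentiating in $x$ gives $(1+x)(1-3x)$, which is nonpositive throughout $[1/2,1]$, so this one-variable function is decreasing there and attains its maximum at $x=1/2$ with value $(1/2)(3/2)^2 = 9/8$. Since $9/8 < 6/5$, the lower bound follows, in fact with strict slack (so that even $h \ge -1/8$ is true).

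The main ``obstacle'' is scarcely one at all: the whole argument is elementary calculus on a low-degree polynomial. The only care needed is to handle the boundary point $x=1$ (where the factor $1-x$ vanishes and one cannot divide through) and to check the direction of monotonicity in $\alpha$ and then $x$ before reducing to an endpoint. No previously established result is required beyond the stated domain constraints, which is convenient since this claim serves as the base case that feeds the later parts of \Cref{lem:polynomial_properties}.
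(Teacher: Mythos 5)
Your proof is correct and takes essentially the same elementary route as the paper: both reduce to endpoint and monotonicity checks in $\alpha$ and then in $x$ on the product $(1-x)(1+\alpha x)^2$. The only cosmetic difference is on the upper bound, where the paper first establishes that $h$ is decreasing in $\alpha$ and then verifies $h(x,1/2)=\tfrac34 x^2+\tfrac14 x^3 \le x^2$, whereas you substitute the bound $\alpha\ge 1/2$ directly and reduce to $(1+x/2)^2\ge 1+x$ without the monotonicity step; the two are equivalent after expansion.
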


\begin{proof}
Differentiate $h(x,\alpha)$ with respect to \(\alpha\), we get that for all $x\in[1/2,1]$ and for all $\alpha\in[1/2,1]$,
\[
\frac{\partial h}{\partial \alpha}(x,\alpha) = -2x(1-x)(1+\alpha x) \le 0.
\]
Therefore, the function $\alpha \mapsto h(x,\alpha)$ is monotonically decreasing on the interval $\alpha\in[1/2,1]$. In particular,
\[
h(x,1)\le h(x,\alpha)\le h(x,1/2), ~\forall\alpha\in[1/2,1], ~\forall x\in[1/2,1].
\]
Thus, it suffices to show that
\[
h(x,1/2)\le x^2 \text{ and } h(x,1)\ge -1/5, ~ \forall x\in[1/2,1].
\]
It is easy to verify that for all $x\in[1/2,1]$,
\[
h(x,1/2) = \frac{3}{4}x^2 + \frac{1}{4}x^3 \le x^2.
\]
To see that for all $x\in[1/2,1]$ one also has
\[
h(x,1) = -x + x^2 + x^3 \ge -1/5,
\]
we note that the derivative with respect to $x$,
\[
\frac{\partial h}{\partial x}(x,1) = -1 + 2x + 3x^2 \ge 0, ~\forall x\in[1/2,1],
\]
so $h(x,1)$ is increasing on the interval $[1/2,1]$, and thus for all $x \in [1/2,1]$ one has
\[
h(x,1) \ge h(1/2,1) = -\frac{1}{2} + \frac{1}{4} + \frac{1}{8} = -\frac{1}{8} \ge -\frac{1}{5}.
\]
This completes the proof.
\end{proof}

\begin{claim}
$h(x,\alpha) \in [-1/5, 1/4]$ for all $x \in [-1/5, 1/2]$ and for all $\alpha \in [1/2, 1]$.
\end{claim}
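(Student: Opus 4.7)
The plan is to reduce the two-variable bound to two scalar bounds by monotonicity in $\alpha$, and then run a short one-variable calculus argument on each of the two resulting cubics over $x \in [-1/5, 1/2]$.

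First I would compute
\[
\frac{\partial h}{\partial \alpha}(x,\alpha) = -2x(1-x)(1+\alpha x).
\]
On the rectangle $(x,\alpha) \in [-1/5,1/2]\times[1/2,1]$, both $1-x \ge 1/2$ and $1+\alpha x \ge 4/5$ are strictly positive, so the sign of $\partial h/\partial \alpha$ is the opposite of the sign of $x$. Hence for fixed $x \ge 0$ the map $\alpha \mapsto h(x,\alpha)$ is nonincreasing, while for fixed $x \le 0$ it is nondecreasing. In either case $h(x,\alpha)$ lies between $h(x,1/2) = \tfrac{3}{4}x^2 + \tfrac{1}{4}x^3$ and $h(x,1) = -x + x^2 + x^3$, so it suffices to bound these two cubics on $[-1/5,1/2]$, pairing the correct $\alpha$-endpoint with the correct sign of $x$.

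For the upper bound $h \le 1/4$, on $[0,1/2]$ the active slice is $\alpha = 1/2$ and $h(x,1/2) = x^2(3/4 + x/4)$ is increasing there, so its maximum is $h(1/2,1/2) = 7/32 < 1/4$; on $[-1/5,0]$ the active slice is $\alpha = 1$, and since $-1 + 2x + 3x^2 < 0$ on this subinterval, $h(x,1)$ is decreasing, giving maximum $h(-1/5,1) = 29/125 < 1/4$. For the lower bound $h \ge -1/5$, on $[-1/5,0]$ the active slice is $\alpha = 1/2$ and $h(x,1/2) = x^2(3/4 + x/4) \ge 0$ since $3/4 + x/4 \ge 7/10$; on $[0,1/2]$ the active slice is $\alpha = 1$ and the minimum of $-x + x^2 + x^3$ is attained at the unique critical point $x = 1/3$ of $-1 + 2x + 3x^2 = 0$, with value $-5/27 > -1/5$. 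The only thing that requires care is the bookkeeping of pairing each $\alpha$-endpoint with the correct sign of $x$, but no delicate estimate is needed, since each extremum leaves a comfortable numerical margin against the required bounds $-1/5$ and $1/4$.
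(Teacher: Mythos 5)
Your proof is correct and takes essentially the same route as the paper: monotonicity in $\alpha$ (after observing $1-x>0$ and $1+\alpha x>0$ on the rectangle) reduces the problem to the two boundary cubics $h(x,1/2)$ and $h(x,1)$, which you then bound on $[-1/5,1/2]$ by elementary calculus, splitting by the sign of $x$ to pick the correct $\alpha$-endpoint. The only cosmetic difference is that the paper works with $g(x,\alpha)=1-h(x,\alpha)$ and proves $3/4\le g\le 6/5$, whereas you work with $h$ directly; all the extremal values you compute ($7/32$, $29/125$, $0$, $-5/27$) agree with the paper's after the substitution $g=1-h$.
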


\begin{proof}
Define
\[
g(x,\alpha) = (1-x)(1+\alpha x)^2,
\]
so that $h(x,\alpha)=1-g(x,\alpha)$. The required result is equivalent to
\[
\frac34 \le g(x,\alpha)\le \frac65,
\]
which we will show in the next. Differentiate $g(x,\alpha)$ with respect to $\alpha$,
\[
\frac{\partial g}{\partial \alpha}(x,\alpha) = 2x(1-x)(1+\alpha x).
\]
For $x\in[-1/5, 1/2]$ and $\alpha\in[1/2,1]$ we have $1-x \ge 1/2 > 0$ and $1 + \alpha x \ge 4/5 > 0$. Hence the sign of $\partial g/\partial\alpha$ is the sign of $x$. Consequently,
\begin{itemize}
\item if $x\in[0,1/2]$, then $g(x,\alpha)$ is increasing in $\alpha$;
\item if $x\in[-1/5,0]$, then $g(x,\alpha)$ is decreasing in $\alpha$.
\end{itemize}
Therefore, for fixed $x$, the extrema of $g(x,\alpha)$ on
$\alpha\in[1/2,1]$ are attained at $\alpha=1/2$ or $\alpha=1$. Define
\begin{align*}
\phi(x) &= g(x,1/2) = 1 - \frac{3}{4}x^2 - \frac{1}{4}x^3,\\
\psi(x) &= g(x,1) = 1 + x -x^2 - x^3. 
\end{align*}
We consider two cases depending on if $x\in[0,1/2]$ of $x\in[-1/5,0]$. If $x\in[0,1/2]$, then the minimum occurs at $\alpha=1/2$ and the maximum occurs at $\alpha=1$. That is,
\[
\phi(x) \le g(x,\alpha) \le \psi(x).
\]
We have
\[
\phi'(x) = -3x(2+x)/4 \le 0,~\forall x \in [0,1/2],
\]
so $\phi(x)$ is decreasing on $[0,1/2]$. Hence
\[
g(x,\alpha) \ge \phi(x) \ge \phi(1/2)
=\frac{25}{32} > \frac34, ~\forall x\in[0,1/2], ~\forall\alpha\in[1/2,1].
\]
Furthermore,
\[
\psi'(x) = 1 - 2x - 3x^2 = (1-3x)(1+x)
\]
has a root at $x=1/3$ in the interval $[0,1/2]$. This gives
\[
g(x,\alpha) \le \psi(x) \le \max\{\psi(0),\psi(1/3),\psi(1/2)\}
=\frac{32}{27} < \frac{6}{5}, ~\forall x\in[0,1/2], ~\forall\alpha\in[1/2,1].
\]
On the other hand, if $x\in[-1/5,0]$, then the minimum occurs at $\alpha=1$ and the maximum occurs at $\alpha=1/2$. That is,
\[
\psi(x) \le g(x,\alpha) \le \phi(x).
\]
Over the interval $[-1/5,0]$, the derivative $\psi'(x) = (1-3x)(1+x) \ge 0$, so $\psi(x)$ is increasing, and hence
\[
g(x,\alpha) \ge \psi(x) \ge \psi(-1/5) = \frac{96}{125} > \frac{3}{4}, ~\forall x\in[-1/5,0], ~\forall\alpha\in[1/2,1].
\]
Similarly, the derivative $\phi'(x) = -3x(2+x)/4 \le 0$, so $\phi(x)$ is decreasing over the interval $[-1/5,0]$, and hence 
\[
g(x,\alpha) \le \phi(x) \le \phi(0) = 1 < \frac65, ~\forall x\in[-1/5,0], ~\forall\alpha\in[1/2,1].
\]
We obtain the required result by combining both cases.
\end{proof}

\begin{claim}\label{claim:quadratic_conv_for_small_x}
Let $n\ge1$ and $x_1, x_2, \ldots, x_n \in [-1/4, 1/4]$ and $\alpha^* = \argmin_{\alpha\in[1/2,1]} \sum_{i=1}^n h(x_i, \alpha)^2$, we have 
\[
\max_i |h(x_i, \alpha^*)| \le C \max_i x_i^2 \text{ for some constant } C<1.71.
\]
\end{claim}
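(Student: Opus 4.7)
The plan is to reduce the general problem to a one-parameter extremal configuration and verify the bound there. Let $M := \max_i |x_i|$ (assume $M>0$, else trivial), and introduce the reference value
\[
\alpha_M := \frac{1/\sqrt{1-M} - 1}{M},
\]
which is the unique $\alpha$ solving $h(M, \alpha_M) = 0$, i.e.\ $(1-M)(1+\alpha M)^2 = 1$. For $M \in (0, 1/4]$ a direct check gives $\alpha_M \in [1/2,\, 8/\sqrt{3} - 4] \subset [1/2, 1]$. The strategy rests on two sub-claims: (i) the minimizer $\alpha^*$ always satisfies $\alpha^* \le \alpha_M$, and (ii) for every $\alpha \in [1/2, \alpha_M]$ and every $x \in [-M, M]$, one has $|h(x, \alpha)| \le h(-M, \alpha_M)$. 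Together they yield $\max_i |h(x_i, \alpha^*)| \le h(-M, \alpha_M)$, and a direct computation bounds the right-hand side by a constant $<1.71$ times $M^2$.

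For (i), I would examine the sign of $\tfrac12 J'(\alpha_M) = \sum_i h(x_i, \alpha_M)\,\partial_\alpha h(x_i, \alpha_M)$ where $J(\alpha) = \sum_i h(x_i,\alpha)^2$, and argue every summand is non-negative. Using $\partial_\alpha h(x,\alpha) = -2x(1-x)(1+\alpha x)$: for $x_i \in (0, M]$, the monotonicity of $x \mapsto \alpha_x := (1/\sqrt{1-x} - 1)/x$ gives $\alpha_{x_i} \le \alpha_M$, so $h(x_i, \alpha_M) \le 0$ and $\partial_\alpha h(x_i, \alpha_M) \le 0$; for $x_i \in [-M, 0)$, both $h(x_i, \alpha_M) > 0$ (since $h(x,1/2)>0$ and $\partial_\alpha h > 0$ for $x<0$) and $\partial_\alpha h(x_i, \alpha_M) > 0$. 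Either way the product is $\ge 0$, so $J'(\alpha_M) \ge 0$, forcing $\alpha^* \le \alpha_M$.

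For (ii), fix $\alpha \in [1/2, \alpha_M]$ and study $x \mapsto h(x,\alpha)$ on $[-M, M]$. On $[-M, 0]$, $\partial_x h = (1+\alpha x)(1 - 2\alpha + 3\alpha x)$ is non-positive (since $1-2\alpha \le 0$ and $x \le 0$), so $h$ is decreasing and $|h(x,\alpha)| \le h(-M, \alpha) \le h(-M, \alpha_M)$ by monotonicity of $h(-M, \cdot)$ in $\alpha$. On $[0, M]$, the only interior critical point is $x_\alpha^\star = (2\alpha-1)/(3\alpha) \in [0,M]$ (using $\alpha \le \alpha_M \le 4/5$ for $M \le 1/4$), where the closed form
\[
h(x_\alpha^\star, \alpha) = 1 - \frac{4(1+\alpha)^3}{27\alpha}
\]
is non-positive with magnitude $O((\alpha - 1/2)^2)$; the boundary values satisfy $h(M,\alpha) \in [0, q(M)]$ with $q(M) \le \tfrac{13}{16}M^2$, and $h(0,\alpha) = 0$. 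All of these are dominated by $h(-M, \alpha_M)$ for $M \le 1/4$.

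Finally, substituting $1 - \alpha_M M = 2 - 1/\sqrt{1-M}$ gives
\[
F(M) := \frac{h(-M, \alpha_M)}{M^2} = \frac{1 - (1+M)\bigl(2 - 1/\sqrt{1-M}\bigr)^2}{M^2},
\]
and Taylor expansion shows $F(M) = 3/2 + M/2 + O(M^2)$. Since $F$ is monotonically increasing on $(0, 1/4]$ (verifiable by direct differentiation), $F(M) \le F(1/4) = 16\bigl[1 - (5/4)(2 - 2/\sqrt{3})^2\bigr] \approx 1.7094 < 1.71$, yielding the desired constant. The hardest part will be step (ii): carefully verifying that the interior critical value $h(x_\alpha^\star, \alpha)$ does not dominate $h(-M, \alpha_M)$ uniformly over $\alpha \in [1/2, \alpha_M]$ and $M \in (0, 1/4]$, and establishing the monotonicity of $F$ on $(0, 1/4]$ so that the bound at $M = 1/4$ controls all smaller $M$; a naive triangle-inequality expansion of $h = q - 2ap - a^2 r$ gives constants closer to $1.87$, so the sharper sign-aware analysis above is essential.
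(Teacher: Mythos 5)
Your proposal is correct and takes essentially the same route as the paper: both reduce the problem to the anchor point $x_o = -M$ where $M=\max_i|x_i|$, both establish $\alpha^*\le\alpha_M=\beta(M)$ (the unique root of $h(M,\cdot)$), both exploit the monotone increase of $\alpha\mapsto h(-M,\alpha)$, and both optimize $h(-M,\beta(M))/M^2$ over $M\in(0,1/4]$ via the change of variable $u=1/\sqrt{1-M}$ to arrive at $16\bigl(-\tfrac{17}{3}+\tfrac{10}{\sqrt 3}\bigr)\approx 1.7094<1.71$. The one place that needs a small patch is sub-claim (i): showing $J'(\alpha_M)\ge 0$ (term-by-term sign analysis) at the single point $\alpha_M$ does not by itself force $\alpha^*\le\alpha_M$ without invoking convexity of $J$; but your own sign argument actually works for every $\alpha\ge\alpha_M$, so you can simply note that $J$ is nondecreasing on $[\alpha_M,1]$ — this is exactly the paper's route, where each $f_{x_i}(\alpha)=h(x_i,\alpha)^2$ is shown increasing on $[\beta(x_i),1]\supseteq[\beta(M),1]$. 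On sub-claim (ii), the paper also reduces to the same three candidate points $\{-M,\,\min\{x_+(\alpha^*),M\},\,M\}$ and likewise only explicitly works out the $-M$ case, so your honest flag about the interior critical point is the same gap the paper leaves to ``a similar line of reasoning''; your explicit formula $h(x^\star_\alpha,\alpha)=1-\tfrac{4(1+\alpha)^3}{27\alpha}$ and the observation that it is $O((\alpha-1/2)^2)$ with small constant is actually a cleaner way to dispatch that case than the paper gives.
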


\begin{proof}
We assume that $\max_i x_i^2 > 0$, as otherwise $x_i = h(x_i,\alpha) = 0$ for all $i$ and for all $\alpha$, and therefore the result holds trivially. To prove the result when not all $x_i$'s are 0, we will construct a worst-case configuration in which the ratio $\max_i |h(x_i,\alpha^*)| / \max_i x_i^2$ is maximized as much as possible, and then we will bound the maximum ratio based on the configuration. The proof uses various monotone behaviors of $h(x,\alpha)$ with respect to $x$ or $\alpha$ and reducing the analysis to characterizing the worst-case ratio around an ``outlier'', or ``anchor point'', $x_o = -1/4$.

We start by characterizing the regions on which $|h(x,\alpha)|$ is increasing or decreasing. Fix $\alpha$ in $[1/2,1]$ and consider
\[
g_\alpha(x) = h(x,\alpha). 
\]
Then we have that
\[
g'_\alpha(x) = 1 - 2\alpha + 4\alpha x - 2\alpha^2 x + 3\alpha^2 x^2 = (1+\alpha x) (1 - 2\alpha + 3\alpha x),
\]
and
\[
\frac{d}{dx}|h(x,\alpha)| = \sign(g_\alpha(x)) \cdot g'_\alpha(x).
\]
By factoring
\[
g_\alpha(x) = x\Big(\alpha^2 x^2 + (2\alpha - \alpha^2)x + (1-2\alpha)\Big)
\]
we see that the three roots of $g_\alpha(x)$ are
\[
r_-(\alpha) = \frac{\alpha - 2 - \sqrt{\alpha(\alpha+4)}}{2\alpha}, \quad
r_0(\alpha) = 0, \quad
r_+(\alpha) = \frac{\alpha - 2 + \sqrt{\alpha(\alpha+4)}}{2\alpha}.
\]
Because $\alpha \in [1/2,1]$ we get
\[
r_-(\alpha) < 0 < r_+(\alpha).
\]
Since $g_\alpha(x)$ is a cubic function with positive leading coefficient,
\[
\sign(g_\alpha(x)) = \left\{\begin{array}{ll}1, & \text{if $x \in (r_-(\alpha), 0) \cup (r_+(\alpha), +\infty)$}, \\ -1, & \text{if $x \in (-\infty, r_-(\alpha)) \cup (0, r_+(\alpha))$}.\end{array}\right.
\]
On the other hand, the two roots of $g'_\alpha(x)$ are
\[
x_-(\alpha) = -\frac{1}{\alpha}, \quad x_+(\alpha) = \frac{2\alpha-1}{3\alpha},
\]
and since $\alpha\in[1/2,1]$ we get that
\begin{align*}
g'_\alpha(x) &> 0 \text{ for } x \in (-\infty,x_-(\alpha)) \cup (x_+(\alpha), +\infty),\\
g'_\alpha(x) &< 0 \text{ for } x \in (x_-(\alpha), x_+(\alpha)).
\end{align*}
Combining these, we get for a fixed $\alpha\in[1/2,1]$,
\begin{itemize}
\item $|h(x,\alpha)|$ is decreasing with respect to $x$ on $(-\infty,r_-(\alpha)) \cup (x_-(\alpha),0) \cup (x_+(\alpha), r_+(\alpha))$;
\item $|h(x,\alpha)|$ is increasing with respect to $x$ on $(r_-(\alpha), x_-(\alpha)) \cup (0, x_+(\alpha)) \cup (r_+(\alpha), +\infty)$.
\end{itemize}
For $x_1,x_2,\ldots, x_n \in [-1/4,1/4]$, denote
\[
M = \max_i |x_i| \le 1/4,
\]
so we have
\[
x_i \in [-M, M] \text{ for all } i.
\]
Since $r_-(\alpha) \le x_-(\alpha) \le -1 < -M$ for every $\alpha \in [1/2,1]$, the monotone behavior of $|h(x,\alpha)|$ implies that
\begin{align*}
&\frac{\max_i |h(x_i, \alpha^*)|}{\max_i x_i^2} 
= \frac{\max_i |h(x_i, \alpha^*)|}{M^2} \\
& \le \max\left\{\frac{|h(-M, \alpha^*)|}{M^2}, \; \frac{|h(\min\{x_+(\alpha^*), M\}, \alpha^*)|}{M^2}, \; \frac{|h(M, \alpha^*)|}{M^2}\right\}.
\end{align*}
To bound the above quantity, for each choice of a potential ``outlier''
\[
x_o \in \left\{-M, \; \min\{x_+(\alpha^*), M\}, \; M\right\},
\]
we find an upper bound on the maximum possible value of $|h(x_o,\alpha^*)|$ as a function of $M$, and then maximize the ratio $|h(x_o, \alpha^*) / M^2$ over $M \in(0,1/4]$. It turns out that, the case
\[
x_o = -M.
\]
gives rise to the worst-case ratio. We will focus on this case for the rest of the proof. The other cases all follow from a similar line of reasoning.

Since $-1/4 \le x_o < 0$, it is straightforward to check that
\[
\frac{d}{d\alpha}h(x_o,\alpha)^2 = -4x_o(1-x_o)(1+\alpha x_o)h(x_o,\alpha) \ge 0
\]
for all $\alpha\in[1/2,1]$. This means that the function $|h(x_o,\alpha)|$ is increasing with respect to $\alpha$ for $\alpha \in [1/2,1]$, and hence $|h(x_o,\alpha^*)|$ is maximized if $\alpha^*$ is away from 1/2 as far as possible. Recall that
\[
\alpha^* = \argmin_{\alpha\in[1/2,1]} \sum_{i=1}^n h(x_i,\alpha)^2.
\]
Using the definition of $\alpha^*$, we now determine how large $\alpha^*$ can be, and consequently we derive an upper bound on $|h(x_o,\alpha^*)|$. Fix an arbitrary $x\in[-M,M]$ and consider
\[
f_x(\alpha) = h(x,\alpha)^2.
\]
We will show that $f_x(\alpha)$ is increasing on the interval $[\beta(x),1]$ where
\[
\beta(x) = \frac{1/\sqrt{1-x}-1}{x},
\]
and hence conclude that $a^* \le \beta(M)$. Note that if $x=0$ then $f_x(\alpha) = 0$ for all $\alpha$. So we consider two cases depending on the sign of $x$. If $x \in (0,1/4]$, then by analyzing the sign of
\[
f_x'(\alpha) = -4x(1-x)(1+\alpha x)h(x,\alpha)
\]
we get that $\sign(f_x'(\alpha)) = -\sign(h(x,\alpha))$ for $\alpha\in[1/2,1]$. It then follows from a straightforward analysis of $\sign(h(x,\alpha))$ that
\[
\sign(f_x'(\alpha)) = \left\{\begin{array}{ll}-1, & \text{if $\alpha\in[1/2,\beta(x)]$}, \\ 1, & \text{if $\alpha\in[\beta(x),1]$}.\end{array}\right.
\]
Similarly, if $x\in[-1/4,0)$, then for all $\alpha\in[1/2,1]$ we have
\[
\sign(f_x'(\alpha)) = \sign(h(x,\alpha)) = 1.
\]
Combining both cases, we get that $f_x(\alpha)$ is monotonically increasing with respect to $\alpha$ for all $x\in[-1/4,1/4]$ and for all $\alpha \in [\beta(x), 1]$. Because $\beta(x) \le \beta(M)$ for all $x \in [-M, M]$, we must have
\[
\alpha^* \le \beta(M),
\]
as otherwise one may take $\alpha=\beta(M) < \alpha^*$ to get $\sum_{i=1}^n h(x_i, \alpha) < \sum_{i=1}^n h(x_i, \alpha^*)$, contradicting the definition of $\alpha^*$. Therefore, by combining the monotone increasing property of $|h(x_o,\alpha)|$ over the interval $\alpha\in[1/2,1]$, we get
\[
|h(x_o,\alpha^*)| \le |h(x_o,\beta(M)|.
\]
Since $x_o = -M$, all it left is to compute
\[
\max_{M \in (0,1/4]} \frac{|h(-M,\beta(M))|}{M^2}, \text{ where } \beta(M) = \frac{1/\sqrt{1-M}-1}{M}.
\]
Because $|h(-M,\beta(M))| \ge 0$ for all $M \in (0,1/4]$, we will equivalently consider $h(-M, \beta(M))$ in place of $|h(-M,\beta(M))|$. Define
\[
R(M)=\frac{h(-M,\beta(M))}{M^2}.
\]
Consider the change of variable
\[
u = \frac{1}{\sqrt{1-M}}
\]
which maps $M \in (0,1/4]$ to $u \in (1,2/\sqrt3]$, and
\begin{align*}
h(-M,\beta(M)) 
= 1 - (1+M)(1-\beta(M)M)
= 1-(1+M)(2-u)^2.
\end{align*}
Dividing by $M^2$ and then using $M=1-1/u^2$, we get
\[
R(M) = r(u) = -\frac{2u^2(u^2-2u-2)}{(u+1)^2}.
\]
And since
\[
r'(u) = -\frac{4u(u+2)(u^2-u-1)}{(u+1)^3} > 0
\]
for all $u\in[1,2/\sqrt3]$, the function $r(u)$ is strictly increasing. Therefore, the maximum of $R(M)$ is attained at
\[
M = 1 - \frac{1}{(2/\sqrt3)^2} = \frac14.
\]
Finally, for $M=1/4$, we have
\[
\beta(M) = \beta(1/4) = \frac{1/\sqrt{1-1/4}-1}{1/4} = \frac{8}{\sqrt3}-4 \in [1/2,1],
\]
and
\begin{align*}
R(M) = R(1/4) = \frac{h(-1/4, \beta(1/4))}{1/16} = 16\left(-\frac{17}{3} +
\frac{10}{\sqrt3}\right) < 1.71
\end{align*}
This is an upper bound of $R(M)$ for all $M \in (0,1/4]$, and hence the proof is complete.
\end{proof}

\begin{claim}\label{claim:dist_between_optimal_alpha}
Let $n\ge1$ and $x_1, x_2, \ldots, x_n \in [-1/4, 1/4]$, not all 0, and $\alpha^* = \argmin_{\alpha\in[1/2,1]} \sum_{i=1}^n h(x_i, \alpha)^2$ and $\tilde{\alpha}\in[1/2,1]$ is such that $\sum_{i=1}^n h(x_i, \tilde\alpha)^2 \le (1+\gamma) \sum_{i=1}^n h(x_i, \alpha^*)^2$ for some $\gamma \ge 0$. Then
\[
|\alpha^* - \tilde\alpha| < \sqrt{\gamma} D \max_i |x_i| \text{ for some constant } D < 0.51.
\]
\end{claim}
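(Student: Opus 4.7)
The plan is to exploit strong convexity of the quartic polynomial $L(\alpha) := \sum_{i=1}^n h(x_i,\alpha)^2$ on $[1/2,1]$. Applying Taylor's theorem at $\alpha^*$, together with the optimality-induced inequality $L'(\alpha^*)(\tilde\alpha-\alpha^*)\ge 0$ (valid whether $\alpha^*$ is interior or on the boundary of $[1/2,1]$), the near-optimality hypothesis $L(\tilde\alpha)\le (1+\gamma)L(\alpha^*)$ immediately gives
\[
\gamma\, L(\alpha^*) \;\ge\; L(\tilde\alpha) - L(\alpha^*) \;\ge\; \tfrac{\mu}{2}(\tilde\alpha - \alpha^*)^2, \qquad \mu := \min_{\alpha\in[1/2,1]} L''(\alpha).
\]
What remains is to produce an upper bound on $L(\alpha^*)$ and a lower bound on $\mu$, and then divide.

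\textbf{Upper bound on $L(\alpha^*)$.} The key observation is that at $\alpha = 1/2$ the linear-in-$x$ term of $h(x,\alpha) = (1-2\alpha)x + (2\alpha-\alpha^2)x^2 + \alpha^2 x^3$ vanishes, so $h(x,1/2) = \tfrac34 x^2 + \tfrac14 x^3 = x^2(3+x)/4$. Feasibility of $\alpha = 1/2$ together with the elementary inequalities $|(3+x)/4| \le 13/16$ on $[-1/4,1/4]$ and $x^4 \le M^2 x^2$ (with $M := \max_i |x_i|$) yield
\[
L(\alpha^*) \;\le\; L(\tfrac12) \;\le\; (13/16)^2\, M^2 \sum_i x_i^2.
\]

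\textbf{Lower bound on $\mu$.} Differentiating twice and simplifying via $h = 1-(1-x)(1+\alpha x)^2$ gives the clean identity
\[
L''(\alpha) \;=\; \sum_i x_i^2\, v(x_i,\alpha), \qquad v(x,\alpha) := 4(1-x)\bigl[3(1-x)(1+\alpha x)^2 - 1\bigr].
\]
Since $\partial_\alpha v(x,\alpha) = 24\, x(1-x)^2(1+\alpha x)$ inherits the sign of $x$ on $[-1/4,1/4]\times[1/2,1]$, the minimum of $v$ over the box is attained on the edge $\alpha = 1/2$ for $x > 0$ and on the edge $\alpha = 1$ for $x < 0$. A short monotonicity check on each of these two univariate slices places both candidate minima at the extremes $|x|=1/4$, yielding $\mu \ge v(1/4,1/2)\sum_i x_i^2 = (1419/256)\sum_i x_i^2$.

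\textbf{Combining and main obstacle.} Dividing the two estimates cancels $\sum_i x_i^2$ and delivers
\[
(\tilde\alpha - \alpha^*)^2 \;\le\; \frac{2(13/16)^2}{1419/256}\,\gamma M^2 \;=\; \frac{338}{1419}\,\gamma M^2,
\]
so $|\tilde\alpha - \alpha^*| \le \sqrt{338/1419}\,\sqrt{\gamma}\, M < 0.489\,\sqrt{\gamma}\, M$, proving the claim with $D = \sqrt{338/1419} < 0.51$. The hard part is that the bound is quite tight: $338/1419 \approx 0.238$ sits only modestly below $0.51^2 \approx 0.260$, so no intermediate estimate can be loose. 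The delicate step is the explicit bivariate minimization of $v(x,\alpha)$; the sign flip of $\partial_\alpha v$ across $x=0$ collapses the problem to two univariate minimizations, and a careful monotonicity analysis of $v(\cdot,1/2)$ on $[0,1/4]$ and $v(\cdot,1)$ on $[-1/4,0]$ then pins the global minimum to the corner $(1/4,1/2)$.
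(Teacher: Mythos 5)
Your argument is correct and follows the same route as the paper: bound $L''$ from below by exactly the same $v(x,\alpha)=4(1-x)\bigl[3(1-x)(1+\alpha x)^2-1\bigr]$, reduce the bivariate minimization to the two edges $\alpha=1/2$ (for $x>0$) and $\alpha=1$ (for $x<0$) via the sign of $\partial_\alpha v$, pin the minimum at the corner $(1/4,1/2)$ with value $1419/256$, then bound $L(\alpha^*)\le L(1/2)\le (13/16)^2 M^2\sum_i x_i^2$ and divide. Your explicit invocation of the first-order optimality condition $L'(\alpha^*)(\tilde\alpha-\alpha^*)\ge 0$ to cover the boundary case of $\alpha^*$ is a small but genuine improvement in rigor over the paper, which asserts the strong-convexity lower bound around $\alpha^*$ without comment.
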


\begin{proof}
Denote
\[
L(\alpha) = \sum_{i=1}^n h(x_i,\alpha)^2 \quad \text{and} \quad S=\sum_{i=1}^n x_i^2.
\]
Since not all $x_i$'s are 0, we know that $S > 0$. To bound the distance between $\alpha^*$ and $\tilde\alpha$, we will use the strong convexity of the function $L(\alpha)$. We start by computing a strong convexity parameter of $L(\alpha)$ by lower bounding its second-order derivative $L''(\alpha)$. The computation is elementary and relies on characterizing the behaviors of a couple of related polynomials in $x$ and $\alpha$.

For fixed $x$, define
\[
f_x(\alpha) = h(x,\alpha).
\]
We have
\[
f_x'(\alpha) = -2x(1-x)(1+\alpha x),\quad 
f_x''(\alpha) = -2x^2(1-x).
\]
Therefore,
\[
\frac{d^2}{d\alpha^2}\bigl(f_x(\alpha)^2\bigr)
=2f_x'(\alpha)^2+2f_x(\alpha)f_x''(\alpha)
=4x^2(1-x)\Bigl(3(1-x)(1+ax)^2-1\Bigr).
\]
Let us write this as
\[
\frac{d^2}{d\alpha^2}\bigl(f_x(\alpha)^2\bigr) = x^2 g(x,\alpha), \text{ where } g(x,\alpha) = 4(1-x)\Bigl(3(1-x)(1+\alpha x)^2-1\Bigr),
\]
and hence
\[
L''(\alpha) = \sum_{i=1}^n x_i^2 g(x_i,\alpha).
\]
We will show that
\[
g(x,\alpha) \ge g(1/4,1/2) = \frac{1419}{256}, \; \forall x \in [-1/4,1/4], \; \forall \alpha\in[1/2,1].
\]
Rewrite $g(x,\alpha)$ as
\[
g(x,\alpha)=12(1-x)^2(1+\alpha x)^2-4(1-x).
\]
Differentiate $g(x,\alpha)$ with respect to $\alpha$,
\[
\frac{\partial g}{\partial \alpha}(x,\alpha)
=24(1-x)^2\,x\,(1+\alpha x).
\]
For $x\in[-1/4,1/4]$ and $\alpha\in[1/2,1]$ we have $1-x\ge3/4>0$ and $1+\alpha x\ge 3/4>0$, so the sign of $\partial g/\partial\alpha$ is the sign of $x$:
\begin{itemize}
\item if $x>0$, then $\partial g/\partial\alpha>0$ and $g(x,\alpha)$ is increasing in $\alpha$, hence it attains minimum at $\alpha=1/2$;
\item if $x<0$, then $\partial g/\partial\alpha<0$ and $g(x,\alpha)$ is decreasing in $\alpha$, hence it attains minimum at $\alpha=1$.
\end{itemize}
This means that the minimum value of $g(x,\alpha)$ for $x\in[-1/4,1/4]$ and $\alpha\in[1/2,1]$ is attained on one of the following two sets
\[
\{(x,1/2): x\in[0,1/4]\} \text{ and } \{(x,1): x\in[-1/4,0]\}.
\]
We examine each case separately. Consider the case $\alpha=1/2$ and $x\in[0,1/4]$. Define
\[
g_1(x) = g(x,1/2) = 12(1-x)^2(1+x/2)^2-4(1-x) = 3x^4+6x^3-9x^2-8x+8.
\]
We have that
\[
g_1'(x) = 12x^3+18x^2-18x-8 \text{ and } g_1''(x)=36x^2+36x-18.
\]
The function $g_1''(x)$ has two roots $x_- = (-1-\sqrt3)/2$ and $x_+ = (-1-\sqrt3)/2$. Since $x_- < 0$ and $x_+ > 1/4$ and $g_1''$ is a convex quadratic function, we know that $g''(x) < 0$ for all $x \in [0,1/4]$, which implies that $g_1'$ is strictly decreasing on $[0,1/4]$. Therefore, the maximum of $g_1'(x)$ on $[0,1/4]$ occurs at $x=0$. Since
\[
g_1'(0) = -8 < 0
\]
we get $g_1'(x) < 0$ on $[0,1/4]$, so $g_1$ is strictly decreasing, and hence
\[
\min_{x\in[0,1/4]} g(x,1/2) = \min_{x\in[0,1/4]} g_1(x) = g_1(1/4) = \frac{1419}{256}.
\]
On the other hand, consider the case $\alpha = 1$ and $x\in[-1/4,0]$. Define
\[
g_2(x) = g(x,1) = 12(1-x)^2(1+x)^2-4(1-x) = 12x^4-24x^2+4x+8.
\]
Then
\[
g_2'(x) = 48x^3-48x+4 \text{ and } g_2''(x) = 144x^2 - 48.
\]
Since $g_2''(x) = 48(3x^2-1) < 0$ for $x \in [-1/4,0]$, we know that $g_2'(x)$ is strictly decreasing, so
\[
g_2'(x) \ge g_2'(0) = 4 > 0 \text{ for } x \in [-1/4,0].
\]
This means that $g_2(x)$ is strictly increasing on $[-1/4,0]$, and consequently
\[
\min_{x\in[-1/4,0]} g(x,1) = \min_{x\in[-1/4,0]} g_2(x) = g_2(-1/4) = \frac{355}{64}.
\]
Combining both cases we get that
\[
g(x,\alpha) \ge \frac{1419}{256} \text{ for } x \in [-1/4,1/4] \text{ and } \alpha\in[1/2,1].
\]
Consequently, we get
\[
L''(\alpha) \ge \frac{1419}{256}\sum_{i=1}^n x_i^2 = \frac{1419}{256}S.
\]
Hence $L$ is $\mu$-strongly convex on $[1/2,1]$ with
\[
\mu=\frac{1419}{259}S.
\]
The strong convexity of $L(\alpha)$ implies that for all \(\alpha\in[1/2,1]\),
\[
L(\alpha) \ge L(\alpha^*)+\frac{\mu}{2}(\alpha-\alpha^*))^2.
\]
Applying this at $\alpha=\tilde\alpha$ gives
\[
\frac{\mu}{2}(\tilde\alpha - \alpha^*)^2 \le L(\tilde\alpha) - L(\alpha^*).
\]
Since $L(\tilde\alpha) \le (1+\gamma) L(\alpha^*)$,
\[
L(\tilde\alpha) - L(\alpha^*)\le \gamma L(\alpha^*),
\]
and therefore
\[
|\tilde\alpha - \alpha^*| \le \sqrt{\frac{2\gamma L(\alpha^*)}{\mu}} = \sqrt{\frac{512}{1419}}\sqrt{\frac{\gamma L(\alpha^*)}{S}}.
\]
Now, since
\[
h(x,1/2) = x^2\left(\frac{3}{4} + \frac{x}{4}\right),
\]
so
\[
h(x,1/2)^2 = x^4\left(\frac{3}{4}+\frac{x}{4}\right)^2 \le \frac{169}{256}x^4 \text{ for } x\in[-1/4,1/4].
\]
Therefore, by invoking the definition of $\alpha^*$ and $M$ and $S$ we get
\[
L(\alpha^*) \le L(1/2) = \sum_{i=1}^n h(x,1/2)^2 \le \frac{169}{256}\sum_{i=1}^n x_i^4 \le \frac{169}{256}M^2\sum_{i=1}^n x_i^2 = \frac{169}{256}M^2S.
\]
It then follows that
\[
|\tilde\alpha - \alpha^*| \le \sqrt{\frac{512}{1419}}\sqrt{\frac{\gamma L(\alpha^*)}{S}} \le \sqrt{\frac{338}{1419}}\sqrt{\gamma}M < 0.51\sqrt{\gamma}M.
\]
This proves the claim.
\end{proof}

\begin{claim}
Let $n\ge1$ and $x_1, x_2, \ldots, x_n \in [-1/4, 1/4]$, and let $\tilde{\alpha}\in[1/2,1]$ be such that $\sum_{i=1}^n h(x_i, \tilde\alpha)^2 \le (1+\gamma) \sum_{i=1}^n h(x_i, \alpha)^2$ for all $\alpha\in[1/2,1]$, where $\gamma < 1.38$. Then
\[
\max_i |h(x_i, \tilde\alpha)| \le E \max_i x_i^2 \text{ for some constant } E<2.95.
\]
\end{claim}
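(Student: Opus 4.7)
The plan is to deduce this final claim from Claims 3 and 4 of \Cref{lem:polynomial_properties} via a triangle-inequality decomposition together with a mean-value estimate for $\alpha \mapsto h(x, \alpha)$. Write $M := \max_i |x_i|$; if $M = 0$ then $h(x_i, \alpha) = 0$ identically and the bound is vacuous, so assume $M > 0$. Let $\alpha^* = \argmin_{\alpha \in [1/2, 1]} \sum_i h(x_i, \alpha)^2$ denote the exact constrained minimizer, and split
\[
|h(x_i, \tilde\alpha)| \le |h(x_i, \alpha^*)| + |h(x_i, \tilde\alpha) - h(x_i, \alpha^*)|.
\]

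Claim 3 directly bounds the first term by $C M^2$ with $C < 1.71$. For the second term, I would apply the mean-value theorem using the explicit derivative $\partial h/\partial \alpha(x, \xi) = -2x(1-x)(1+\xi x)$. A short case analysis over $x \in [-1/4, 1/4]$ and $\xi \in [1/2, 1]$---checking the endpoints $x \in \{-1/4, 1/4\}$ against the interior critical point $x = (\xi - 1)/(2\xi)$ (which lies inside the domain only for $\xi \ge 2/3$, with value $(\xi+1)^2/(2\xi) \le 25/12$)---shows that $\sup 2(1-x)(1+\xi x) = 35/16$, attained at $(x, \xi) = (-1/4, 1/2)$. Combined with $|x_i| \le M$ and Claim 4's estimate $|\tilde\alpha - \alpha^*| < \sqrt\gamma\, D\, M$ (with $D < 0.51$), this yields
\[
|h(x_i, \tilde\alpha)| \le \Bigl(C + \tfrac{35}{16}\, D\, \sqrt\gamma\Bigr) M^2.
\]

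The last step is a numerical verification: plugging in the sharp constants from the previous proofs, $C = 16(10/\sqrt{3} - 17/3)$ and $D = \sqrt{338/1419}$, together with the hypothesis $\gamma < 1.38$, shows that the bracketed coefficient stays below the target value of $E$. The main (mild) obstacle is that the numerical margin is tight: one must carry the sharp forms of $C$ and $D$ from Claims 3 and 4 through the combination rather than rounding prematurely, and the Lipschitz bound $35/16$---rather than a crude product bound such as $2 \cdot (5/4)^2$---is essential. The conceptual content is otherwise routine once Claims 3 and 4 are in hand.
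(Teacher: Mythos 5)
Your decomposition $|h(x_i,\tilde\alpha)| \le |h(x_i,\alpha^*)| + |h(x_i,\tilde\alpha)-h(x_i,\alpha^*)|$ followed by Claims 3 and 4 and a mean-value estimate is a natural route, and your Lipschitz bound $\sup 2(1-x)(1+\xi x) = 35/16$ at $(x,\xi)=(-1/4,1/2)$ is correct. But the final numerical step does not go through: with $C = 16(10/\sqrt3 - 17/3) \approx 1.7094$, $D = \sqrt{338/1419} \approx 0.48805$, and $\sqrt{1.38} \approx 1.17473$, one gets
\[
C + \tfrac{35}{16} D \sqrt{\gamma} \;\approx\; 1.7094 + 2.1875 \cdot 0.48805 \cdot 1.17473 \;\approx\; 1.7094 + 1.2542 \;=\; 2.964,
\]
which exceeds $2.95$. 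The overshoot is structural, not a matter of carrying more digits: the worst case for $|h(-M,\alpha^*)|/M^2$ in Claim~3 occurs at $\alpha^* = \beta(M)$ (roughly $0.62$ when $M=1/4$), while your uniform Lipschitz constant $35/16$ is attained at $\xi=1/2$. These two cannot happen at the same time, since $\xi$ lies between $\alpha^*$ and $\tilde\alpha$, and the derivative $|\partial h/\partial\alpha| = 2|x|(1-x)(1+\xi x)$ at $x=-M$ is decreasing in $\xi$. Maximizing the two terms of the triangle inequality independently ignores this anti-correlation and loses about $0.01$--$0.02$ in the final constant, exactly the margin you need.

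The paper avoids the split entirely. From the proof of Claim~3 it keeps $\alpha^* \le \beta(M)$, uses Claim~4 to get $\tilde\alpha \le \beta(M) + 0.51\sqrt{1.38}\,M \le \beta(M)+0.6M$, and then exploits that $|h(-M,\alpha)|$ is monotonically increasing in $\alpha$ on $[1/2,1]$ to bound $|h(-M,\tilde\alpha)| \le h(-M,\beta(M)+0.6M)$ directly. Maximizing $h(-M,\beta(M)+0.6M)/M^2$ over $M\in(0,1/4]$ gives $157/\sqrt3 - 84187/960 \approx 2.949 < 2.95$. To salvage your approach you would need to tie the local Lipschitz constant to the actual value of $\alpha^*$ (the product of the two terms, optimized jointly over $\alpha^*$, does stay under $2.95$), but at that point you are essentially reproducing the paper's direct monotonicity argument in a less convenient form.
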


\begin{proof}
The proof of this claim follows from the same line of arguments as used in the proof of Claim~\ref{claim:quadratic_conv_for_small_x}, and then we apply the distance bound of Claim~\ref{claim:dist_between_optimal_alpha} to get the final result. By carefully analyzing the monotone behaviors of $|h(x,\alpha)|$ for $x\in[-1/4,1/4]$ and $\alpha\in[1/2,1]$ as in the proof of Claim~\ref{claim:quadratic_conv_for_small_x}, we get that
\[
\frac{\max_i |h(x_i,\tilde\alpha)|}{\max_i x_i^2} = \max_{M\in(0,1/4]}\frac{|h(-M,\tilde\alpha)|}{M^2} 
\]
where $M = \max_i x_i^2$. Again, as in the proof of Claim~\ref{claim:quadratic_conv_for_small_x}, where we have showed that $a^*\le\beta(M)$, because the function $|h(-M,\alpha)|$ is monotonically increasing with respect to $\alpha$ for $\alpha\in[1/2,1]$, we need to determine how large $\tilde\alpha$ can be. Using the result of Claim~\ref{claim:dist_between_optimal_alpha} and the assumption that $\gamma < 1.38$, we get
\[
\tilde\alpha \le \alpha^* + 0.51\sqrt{1.38}M \le \beta(M) + 0.6M.
\]
Therefore, we maximize the worst-case ratio to get
\begin{align*}
\max_{M\in(0,1/4]}\frac{|h(-M,\tilde\alpha)|}{M^2}  
&\le \max_{M\in(0,1/4]}\frac{h(-M, \beta(M) + 0.6M)}{M^2} \\
&= \frac{h(-1/4, \beta(1/4) + 3/20)}{1/16} \\
&= \frac{157}{\sqrt3} - \frac{84187}{960} < 2.95.
\end{align*}
This finishes the proof.
\end{proof}

\subsection{Proof of \Cref{thm:ns_sign_convergence}}
\label{sec:proof_of_thm1}
Let $\mA \in \mathbb{R}^{n \times n}$ be such that $\|\mA\|_2 \le 1$ and $\mA^2$ is symmetric. Let $\mX_0 = \mA$ and let $\mX_1,\mX_2,\ldots$ be the sequence generated by \Cref{eq:newton_schulz_sign_adaptive} with $d=1$, where $\alpha_k$ is determined by \Cref{eq:alpha} with $\ell = 1/2$ and $u=1$. Denote $\mR_k = \mI - \mX_k^2$. Since $d=1$, \Cref{eq:newton_schulz_sign_adaptive} simplifies to
\[
  \mX_{k+1} = \mX_k(\mI + \alpha_k(\mI - \mX_k^2)) = \mX_k(\mI + \alpha_k \mR_k),
\]
and hence
\[
  \mR_{k+1} = \mI - \mX_{k+1}^2 = \mI - (\mI - \mR_k)(\mI + \alpha_k \mR_k)^2.
\]
Define $h(x,\alpha) = 1 - (1 - x)(1+\alpha x)^2$ so that we can write the above recurrence relation with respect to $\mR_k$ succinctly as
\[
\mR_{k+1} = h(\mR_k,\alpha_k).
\]
In order to see that
\[
  \|\mR_k\|_2 \le \|\mR_0\|_2^{2^{k-2}},
\]
where $\alpha_k$ is computed according to \Cref{eq:alpha}, we rely on the properties of $h$ in \Cref{lem:polynomial_properties}. Because $\mA^2$ is symmetric, $\mR_0 = \mI - \mA^2$ is also symmetric. Since $\mR_{k+1} = h(\mR_k,\alpha_k)$ and $h(x,\alpha)$ is a polynomial in $x$, it follows that $\mR_k$ is symmetric for all $k$. Therefore,
\[
  \|\mR_k\|_2 = \max_i |\lambda_{k,i}| \mbox{ for all } k,
\]
where $\lambda_{k,i}$ denotes the $i$-th eigenvalue of $\mR_k$. We will assume without loss of generality that the eigenvalues are ordered in a way such that $\lambda_{k+1,i} = h(\lambda_{k,i}, \alpha_k)$. For $k=0$, because $\|\mX_0\|_2 \le 1$ and $\mX_0^2$ is symmetric, the eigenvalues of $\mX_0^2$ are all real-valued and lie in the interval $[0,1]$, and therefore $0 \le \lambda_{0,i} < 1$ for all $i$. Using \Cref{lem:polynomial_properties}, we get that
\begin{itemize}
  \item $\|\mR_{k+1}\|_2 \le \|\mR_k\|_2^2$ if $\|\mR_k\|_2 \ge 1/2$;
  \item $\|\mR_{k+1}\|_2 \le 1/4$ if $\|\mR_k\|_2 \le 1/2$;
  \item $\|\mR_{k+1}\|_2 \le 1.71\|\mR_k\|_2^2$ if $\|\mR_k\|_2 \le 1/4$.
\end{itemize}
Let $k_1$ be such that $\|\mR_{k_1}\|_2 \le 1/4 < \|\mR_{k_1-1}\|_2$. Because $\|\mR_{k_1-1}\|_2 > 1/4$, we must have
\[
\|\mR_{k_1-2}\|_2 \ge \sqrt{\|\mR_{k_1-1}\|_2} > \sqrt{1/4} = 1/2 > 1.71\|\mR_{k_1}\|_2.
\]
Then by induction we have that for $k_2 \ge 0$,
\[
  \|\mR_{k_1+k_2}\|_2 
  \le \Big(1.71\|\mR_{k_1}\|_2\Big)^{2^{k_2}} 
  \le \Big(\|\mR_{k_1-2}\|_2\Big)^{2^{k_2}} 
  \le \|\mR_0\|_2^{2^{k_1+k_2-2}}.
\]
Finally, the convergence to $\sign(\mA)$ can be established following the same argument as in the proof of Theorem 3.1 and Theorem 5.2 of \citet{kenney1991rational}. For completeness we repeat the main arguments below. Let
\[
S = \{x : |1-x^2| < 1\}, \; S_+ = \{x \in S: \text{Re}(x) > 0\}, \; S_- = \{x \in S: \text{Re}(x) < 0\}.
\]
Denote
\[
p_{k,d}(x) = xg_d(1-x^2;\alpha_k).
\]
Let $x_{k,i}$ denote the $i$-th eigenvalue of $\mX_k$ and assume without loss of generality that the indices are ordered such that $x_{k+1,i} = p_{k,d}(x_{k,i})$. Then since $\|\mR_k\|_2 < 1$ for all $k$, we get that
\[
1 - x_{k,i}^2 < 1 \text{ and } 1-p_{k,d}(x_{k,i})^2 = 1 - x_{k+1,i}^2 < 1, \text{ for all } k.
\]
So $p_{k,d}$ maps $S$ into $S$ for all $k$. Since $S_+ \cap S_- = \emptyset$ and each is a connected set, $p_{k,d}(S_+)$ must lie entirely in either $S_+$ or $S_-$, because $p_{k,d}$ is a continuous mapping. But since $1 \in S_+$ and $p_{k,d}(1) = 1$, we must have $p_{k,d}(S_+) \subseteq S_+$ for all $k$. Similarly, $p_{k,d}(S_-) \subseteq S_-$ for all $k$. Thus, by induction, we have that if $x_{0,i} \in S_+$ then $x_{k,i} \in S_+$ for all $k$. Since $\lim_{k\rightarrow+\infty}\|\mR_k\|_2 = 0$, which means $\lim_{k\rightarrow+\infty} 1-x_{k,i}^2 = 0$ for all $i$, we must have
\[
\lim_{k\rightarrow+\infty}x_{k,i} = \sign(x_{0,i})
\]
for all i. Then, using Lemma 5.1 of \citet{kenney1991rational} and the definition of matrix sign in terms of the Jordan form, we get $\mX_k \rightarrow \sign(\mX_0)$.

\subsection{Proof of \Cref{thm:ns_sign_convergence_with_sketch}}
\label{sec:proof_of_thm2}

The basic setup is the same as in the proof of \Cref{thm:ns_sign_convergence}. For the reader's convenience, we repeat the same setup here. Let $\mA \in \mathbb{R}^{n \times n}$ be such that $\|\mA\|_2 \le 1$ and $\mA^2$ is symmetric. Let $\mX_0 = \mA$ and let $\mX_1,\mX_2,\ldots$ be the sequence generated by \Cref{eq:newton_schulz_sign_adaptive} with $d=1$, where $\alpha_k$ is determined by \Cref{eq:alpha_sketch} with $\ell = 1/2$ and $u=1$. With a slight abuse of notation, we will use $\ell$ as indices of iteration counter from now on. Fix $k\ge0$ and let $0\le\ell\le k$. Denote $\mR_\ell = \mI - \mX_\ell^2$. Since $d=1$, \Cref{eq:newton_schulz_sign_adaptive} simplifies to
\[
  \mX_{\ell+1} = \mX_\ell(\mI + \alpha_\ell(\mI - \mX_\ell^2)) = \mX_\ell(\mI + \alpha_\ell \mR_\ell),
\]
and hence
\[
  \mR_{\ell+1} = \mI - \mX_{\ell+1}^2 = \mI - (\mI - \mR_\ell)(\mI + \alpha_\ell \mR_\ell)^2.
\]
As in the proof of \Cref{thm:ns_sign_convergence}, define $h(x,\alpha) = 1 - (1 - x)(1+\alpha x)^2$ so that we may write the recurrence relation with respect to $\mR_\ell$ succinctly as
\[
\mR_{\ell+1} = h(\mR_\ell,\alpha_\ell).
\]
In order to see that
\[
  \|\mR_k\|_2 \le \|\mR_0\|_2^{2^{k-3}},
\]
when $\alpha_\ell$, $0\le\ell\le k$, is computed as in \Cref{eq:alpha_sketch}, 
we rely again on the properties of $h$ in \Cref{lem:polynomial_properties}. Because $\mA^2$ is symmetric, $\mR_0 = \mI - \mA^2$ is also symmetric. Since $\mR_{k+1} = h(\mR_k,\alpha_k)$ and $h(x,\alpha)$ is a polynomial in $x$, it follows that $\mR_\ell$ is symmetric for all $\ell$. Therefore,
\[
  \|\mR_\ell\|_2 = \max_i |\lambda_{\ell,i}| \mbox{ for all } \ell,
\]
where $\lambda_{\ell,i}$ denotes the $i$-th eigenvalue of $\mR_\ell$. We will assume without loss of generality that the eigenvalues are ordered in such a way that $\lambda_{\ell+1,i} = h(\lambda_{\ell,i}, \alpha_\ell)$. For $\ell=0$, because $\|\mX_0\|_2 \le 1$ and $\mX_0^2$ is symmetric, the eigenvalues of $\mX_0^2$ are all real-valued and lie in the interval $[0,1]$, and therefore $0 \le \lambda_{0,i} < 1$ for all $i$.

Let $\mS_\ell \in \mathbb{R}^{p \times n}$ be random matrices consisting of i.i.d Gaussian entries $[\mS_\ell]_{i,j}\sim\mathcal{N}(1,1/p)$ with $p \ge 48(\log n + \log(1/\delta) + \log k + 41.4)$. Using standard result in randomized numerical linear algebra, for example, Proposition 3.7 in \citet{balabanov2019randomized}, we know that $\mS_\ell$ is a $(6,\epsilon,\tfrac{\delta}{kn})$-OSE for $\epsilon = 0.405$. Now fix $\ell \in \{0,1,\ldots,k\}$ and $\mR_\ell$. Let $\vr_{\ell}^{(i)}$ denote the $i$-th column of $\mR_{\ell}$, and let $\vr_{\ell+1}^{(i)}$ denote the $i$-th column of $\mR_{\ell+1} = h(\mR_\ell, \alpha)$ for any $\alpha$. Since $\mR_{\ell+1} = h(\mR_\ell, \alpha)$ is a degree-5 polynomial with respect to $\mR_\ell$, we get $\vr_{\ell+1}^{(i)} \in \text{span}\{\ve_i, \vv_1, \vv_2,\ldots,\vv_5\}$ where $\ve_i$ is the $i$-th standard basis vector, and $\vv_j$ is the $j$-th column of $\mR_\ell^j$. This means that each $\vr_{\ell+1}^{(i)}$ lives in a 6-dimensional subspace of $\mathbb{R}^n$. Since we can express the squared Frobenius norm as the sum of squared column $\ell_2$ norms
\[
\|\mR_{\ell+1}\|_F^2 = \sum_{i=1}^n \|\vr_{\ell+1}^{(i)}\|_2^2,
\]
we use the $(6,\epsilon,\tfrac{\delta}{kn})$-OSE property of $\mS_\ell$ and a union bound over $i\in\{1,2,\ldots,n\}$ to get, with probability at least $1-\delta/k$,
\[
(1-\epsilon)\|\mR_{\ell+1}\|_F^2 = \sum_{i=1}^n (1-\epsilon)\|\vr_{\ell+1}^{(i)}\|_2^2 \le \sum_{i=1}^n \|\mS_\ell\vr_{\ell+1}^{(i)}\|_2^2 = \|\mS_\ell\mR_{\ell+1}\|_F^2
\]
and
\[
(1+\epsilon)\|\mR_{\ell+1}\|_F^2 = \sum_{i=1}^n (1+\epsilon)\|\vr_{\ell+1}^{(i)}\|_2^2 \ge \sum_{i=1}^n \|\mS_\ell\vr_{\ell+1}^{(i)}\|_2^2 = \|\mS_\ell\mR_{\ell+1}\|_F^2.
\]
Let $\alpha_\ell^*$ and $\tilde\alpha_\ell$ be computed according to \Cref{eq:alpha} and \Cref{eq:alpha_sketch}, respectively, that is
\[
\alpha_\ell^* = \argmin_{\alpha\in[1/2,1]} \|h(\mR_\ell,\alpha)\|_F^2, \quad \tilde\alpha_\ell = \argmin_{\alpha\in[1/2,1]} \|\mS_\ell h(\mR_\ell,\alpha)\|_F^2,
\]
then we get that, with probability at least $1-\delta/k$,
\[
(1-\epsilon)\|h(\mR_\ell,\tilde\alpha_\ell)\|_F^2 
\le \|\mS_\ell h(\mR_\ell,\tilde\alpha_\ell)\|_F^2 
\le \|\mS_\ell h(\mR_\ell,\alpha_\ell^*)\|_F^2 
\le (1+\epsilon) \|h(\mR_\ell,\alpha_\ell^*)\|_F^2,
\]
and hence
\[
\|h(\mR_\ell,\tilde\alpha_\ell)\|_F^2 \le \left(\frac{1+\epsilon}{1-\epsilon}\right)\|h(\mR_\ell,\alpha_\ell^*)\|_F^2 < (1+\gamma)\|h(\mR_\ell,\alpha_\ell^*)\|_F^2
\]
where $\gamma < 1.37$ for $\epsilon = 0.405$. Since
\[
\|h(\mR_\ell,\tilde\alpha)\|_F^2 = \sum_{i=1}^n h(\lambda_{\ell,i}, \tilde\alpha)^2,
\]
we may apply \Cref{lem:polynomial_properties} and get that, for each $0 \le \ell \le k$,
\begin{itemize}
\item $\|\mR_{\ell+1}\|_2 \le \|\mR_\ell\|_2^2$ if $\|\mR_\ell\|_2 \ge 1/2$;
\item $\|\mR_{\ell+1}\|_2 \le 1/4$ if $\|\mR_\ell\|_2 \le 1/2$;
\item $\|\mR_{\ell+1}\|_2 \le 2.95 \|\mR_\ell\|_2^2$ with probability at least $1-\delta/k$. 
\end{itemize}
Let $\ell_1$ be such that $\|\mR_{\ell_1}\|_2 \le 1/4 < \|\mR_{\ell_1-1}\|_2$. Because
\[
h(h(0.75, 0.5), 0.5) < 0.25 < \|\mR_{\ell_1-1}\|_2,
\]
using the monotone properties of $h(x,\alpha)$ for $x \in [1/2,1]$ and $\alpha\in[1/2,1]$ from the proof of Claim \ref{claim:quadratic_conv_for_large_x}, we must have
\[
\|\mR_{\ell_1-3}\|_2 > 0.75 > 2.95\|\mR_{\ell_1}\|_2.
\]
Then by induction and a union bound for $0 \le \ell_2 \le k$, we get that with probability at least $1-\delta$,
\[
  \|\mR_{\ell_1+\ell_2}\|_2 
  \le \Big(2.95\|\mR_{\ell_1}\|_2\Big)^{2^{\ell_2}} 
  \le \Big(\|\mR_{\ell_1-3}\|_2\Big)^{2^{\ell_2}} 
  \le \|\mR_0\|_2^{2^{\ell_1+\ell_2-3}},
\]
for $1 \le \ell_1 + \ell_2 \le k$. This proves the required result for the rate of convergence. The convergence to $\sign(\mA)$ follows exactly the same way as before.
\section{Details of Numerical Experiments}\label{app:empirical_setup}

The empirical evaluation of polar decomposition algorithms for Gaussian random matrices and HTMP random matrices is run in single precision (i.e., \texttt{torch.float32}) on an Nvidia A100 GPU. For the numerical experiment in \Cref{fig:HTMP_compare}, the matrix $\mA\in\mathbb{R}^{n\times m}$ has size $n=8000$ and $m=4000$.

In the experiment with the Shampoo optimizer, we made the standard ResNet-20 and ResNet-32 slightly larger to demonstrate more clearly the difference between different algorithms for matrix square root when the input matrix has reasonably large size, e.g., larger than 100. For both ResNet-20 and ResNet-32, we kept the stride at one for each convolutional layer; for ResNet-20, we additionally removed the average pooling layer before the final fully connected layer. We set the maximum preconditioner dimension to 2048 in the Distributed Shampoo optimizer~\citep{shi2023distributed}, so the matrices of which we need to compute the inverse square root have dimension at most 2048 x 2048. We also tested setting the maximum preconditioner dimension to 1024 and 4096, respectively, and the results are similar. Generally, we find that the larger dimension the preconditioner has, the better performance PRISM has relative to eigenvalue decomposition and PolarExpress. We set the learning rate to 0.001 and the weight decay to 0.0005. We did not tune hyperparameters for this experiment. In this experiment we used the 5th-order Newton-Schulz iteration accelerated by PRISM. Other variants of Newton-Schulz, such as the one that uses a degree-3 polynomial update at each iteration, can also be accelerated by PRISM. In our experiment, we also tried to use five iterations of this variant to compute the inverse square root of Shampoo's preconditioners, and we got similar results. The PolarExpress algorithm that we use in our experiment is the one that is optimized for $\sigma_{\min} = 10^{-3}$ (i.e. Algorithm 1 of \citet{amsel2025polarexpress}).

In the experiment with Muon optimizer in \Cref{fig:gpt2}, we use PolarExpress (Algorithm 1 of \citet{amsel2025polarexpress}), PRISM-based Newton-Schulz with degree-5 polynomial (PRISM5), and PRISM-based Newton-Schulz with degree-3 polynomial (PRISM3) to compute the polar factor of the gradient matrices. In this experiment, we use five iterations for PolarExpress and PRISM3, and three iterations for PRISM5. In addition, based on the computed $\alpha_k$ in \Cref{fig:HTMP_compare}, we observe that at the initial several iterations, the coefficient $\alpha_k$ is attained at the upper bound $u$ in (\ref{eq:alpha_sketch}). Hence, when we implement PRISM3 and PRISM5 in \Cref{fig:gpt2}, we decide to use the highest value of $\alpha_k$ for the initial three iterations for efficiency. This means that we set $\alpha_k=1$ for PRISM3 and $\alpha_k=29/20$ for PRISM5 for the first three iterations. Note that, by \Cref{lem:polynomial_properties}, setting $\alpha_k$ in this way preserves the initial quadratic convergence of PRISM3. Additionally, we choose weight decay 0.01, momentum parameter 0.95, and initial learning rate $6\times 10^{-3}$, and micro-batch size 4. We also compare these experiments with the baseline using AdamW with initial learning rate $3\times 10^{-4}$ and weight decay 0.1. All experiments are run on NVIDIA A100-SXM4-80GB with global batch size 32.
\section{Additional Empirical Results}\label{app:experiments}

\begin{itemize}
\item \Cref{fig:MP_compare_iterations} and \Cref{fig:HTMP_compare_iterations} show the convergence of degree-5 polynomial methods with respect to the number of iterations. These figures compensate for \Cref{fig:MP_compare} and \Cref{fig:HTMP_compare}, respectively, where we illustrate the convergence of algorithms with respect to the wall-clock time when running on a single Nvidia A100 GPU.

\item \Cref{fig:SQRT_MP_compare} and \Cref{fig:SQRT_HTMP_compare} show the convergence of degree-5 polynomial methods for computing square roots.

\item \Cref{fig:NEWTON_SQRT_compare} shows the accelerated convergence behavior of the PRISM-based DB Newton iteration (see \Cref{sec:DBNewton} for details). We observe that, when accelerated by PRISM, the Newton iteration can converge faster than the PRISM-based Newton-Schulz. However, because of the requirement to perform matrix inversion at every iteration, Newton iteration is generally less stable than the Newton-Schulz variants. An interesting future work is to exploit the faster convergence of PRISM-based Newton iteration while maintaining good numerical stability.

\end{itemize}

\begin{figure}[ht!]
  \centering
  \begin{subfigure}{0.33\textwidth}
    \centering
    \includegraphics[width=\textwidth]{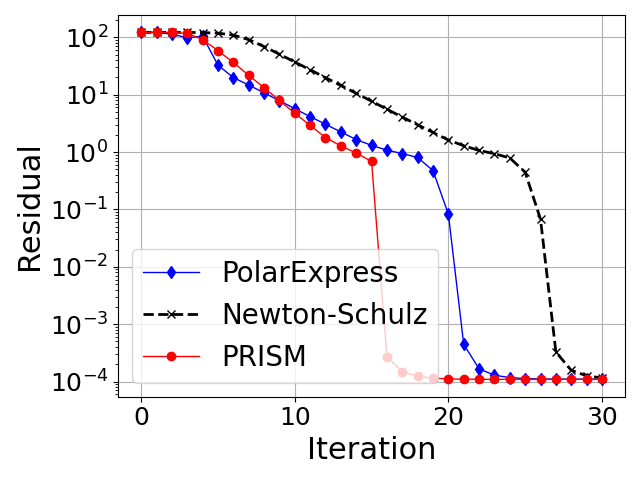}
  \end{subfigure}
  \begin{subfigure}{0.33\textwidth}
    \centering
    \includegraphics[width=\textwidth]{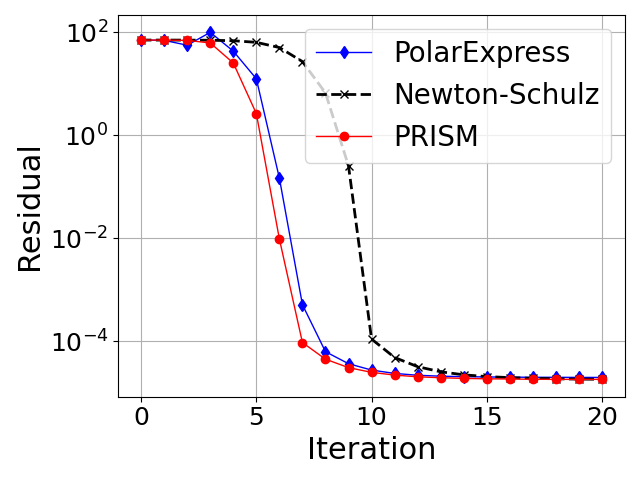}
  \end{subfigure}%
  \begin{subfigure}{0.33\textwidth}
    \centering
    \includegraphics[width=\textwidth]{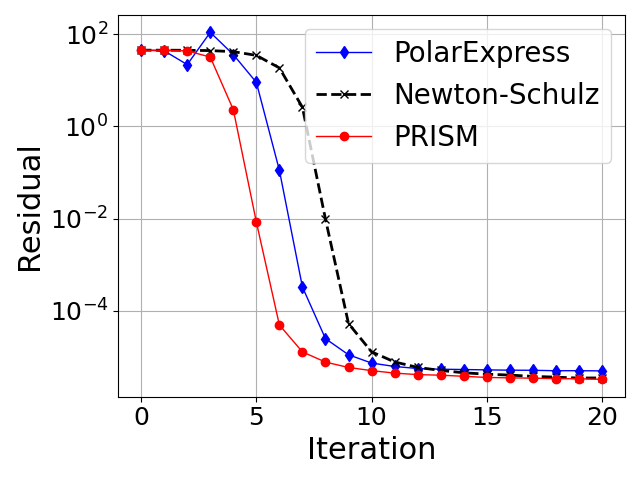}
  \end{subfigure}
  \caption{\small Convergence (with respect to iterations) of degree-5 polynomial methods for othogonalizing a Gaussian random matrix $A \in \mathbb{R}^{n \times m}$ with varying aspect ratio $\gamma = n/m$. The figures show the Frobenius norm error $\|\mI - \mX_k^T\mX_k\|_F$ for $\gamma = 1, 4, 50$, from left to right respectively.
  }
  \label{fig:MP_compare_iterations}
\end{figure}

\begin{figure}[ht!]
  \centering
  \begin{subfigure}{0.33\textwidth}
    \centering
    \includegraphics[width=\textwidth]{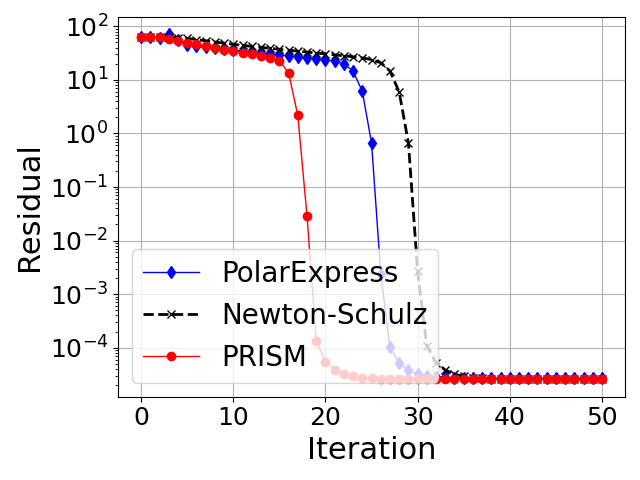}
  \end{subfigure}
  \begin{subfigure}{0.33\textwidth}
    \centering
    \includegraphics[width=\textwidth]{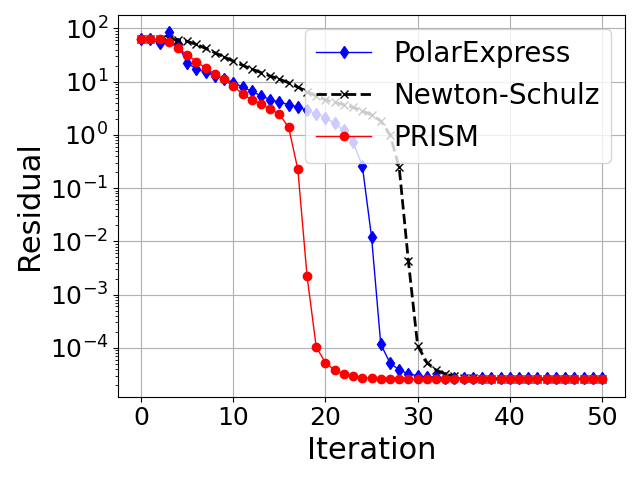}
  \end{subfigure}%
  \begin{subfigure}{0.33\textwidth}
    \centering
    \includegraphics[width=\textwidth]{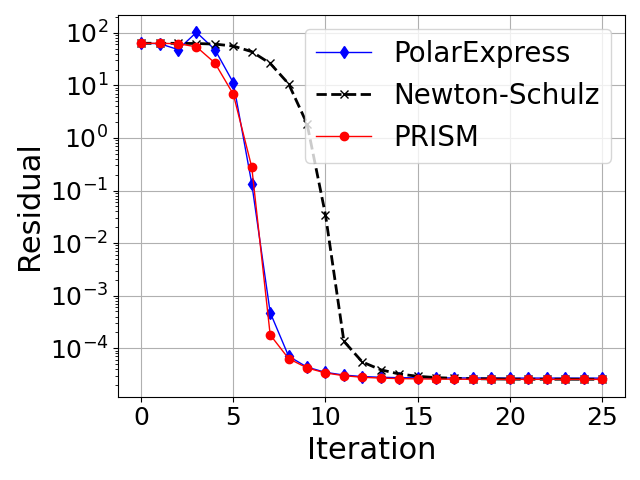}
  \end{subfigure}
  \caption{\small Convergence (with respect to iterations) of degree-5 polynomial methods for othogonalizing random matrices generated by HTMP~\citep{hodgkinson2025models} with different parameter $\kappa$. Smaller $\kappa$ indicates heavier tail in the spectral distribution. The figures show the Frobenius norm error $\|\mI - \mX_k^T\mX_k\|_F$ for $\kappa = 0.1, 0.5, 100$, from left to right respectively.
  }
  \label{fig:HTMP_compare_iterations}
\end{figure}

\begin{figure}[ht!]
  \centering
  \begin{subfigure}{0.245\textwidth}
    \centering
    \includegraphics[width=\textwidth]{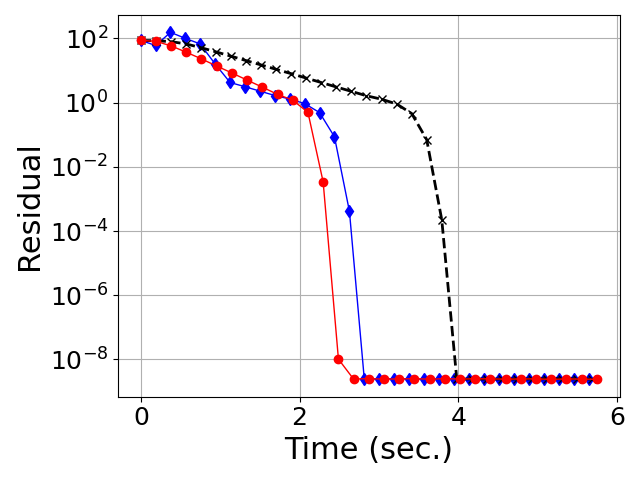}
  \end{subfigure}
  \begin{subfigure}{0.245\textwidth}
    \centering
    \includegraphics[width=\textwidth]{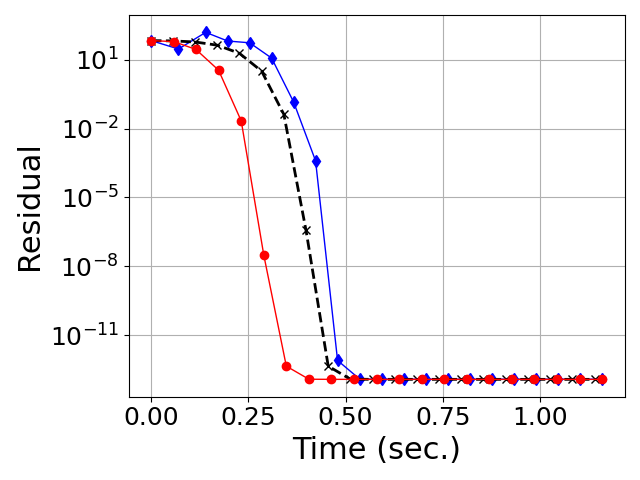}
  \end{subfigure}%
  \begin{subfigure}{0.245\textwidth}
    \centering
    \includegraphics[width=\textwidth]{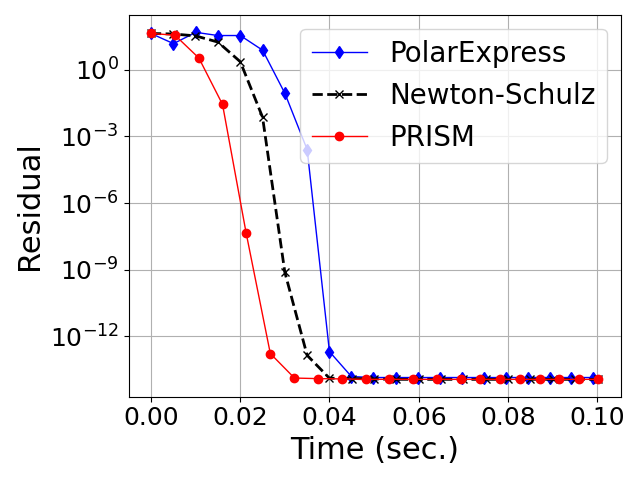}
  \end{subfigure}
  \begin{subfigure}{0.245\textwidth}
    \centering
    \includegraphics[width=\textwidth]{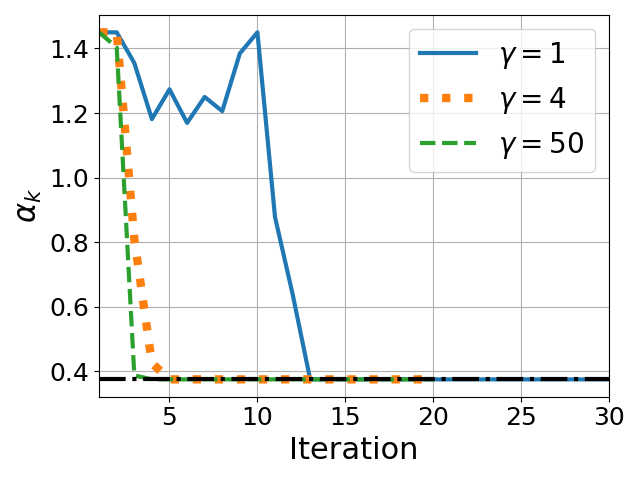}
  \end{subfigure}
  \caption{\small Convergence of degree-5 polynomial methods for computing the square root and inverse square root of $\mA = \mG^T\mG$, where $\mG\in\mathbb{R}^{n \times m}$ is a Gaussian random matrix with varying aspect ratio $\gamma=n/m$. That is, $\mA$ is a Wishart matrix. The figures from left to right show the Frobenius norm error $\|\mI - \mX_k^{-2}\mA\|_F$ for $\gamma = 1, 4, 50$, respectively. The last figure on the right shows the $\alpha_k$'s computed by (\ref{eq:alpha_sketch}) in PRISM for different aspect ratios.
  }
  \label{fig:SQRT_MP_compare}
\end{figure}

\begin{figure}[ht!]
  \centering
  \begin{subfigure}{0.245\textwidth}
    \centering
    \includegraphics[width=\textwidth]{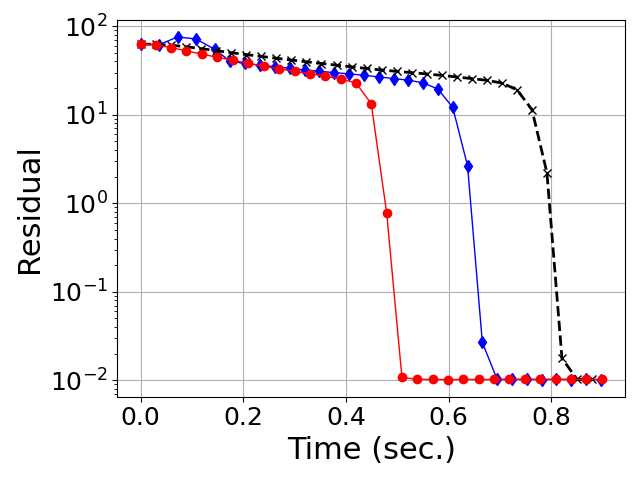}
  \end{subfigure}
  \begin{subfigure}{0.245\textwidth}
    \centering
    \includegraphics[width=\textwidth]{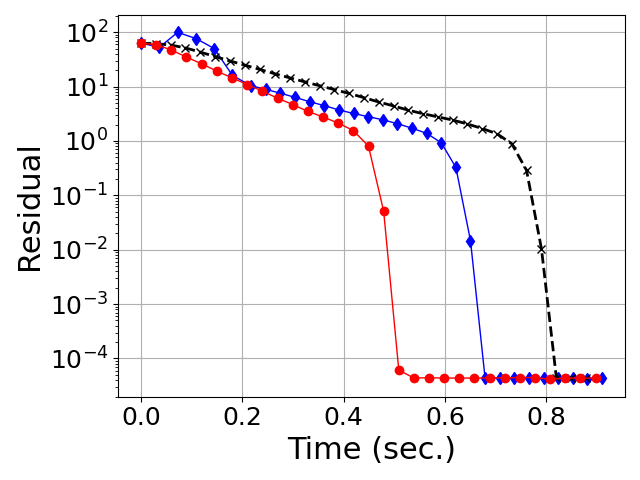}
  \end{subfigure}%
  \begin{subfigure}{0.245\textwidth}
    \centering
    \includegraphics[width=\textwidth]{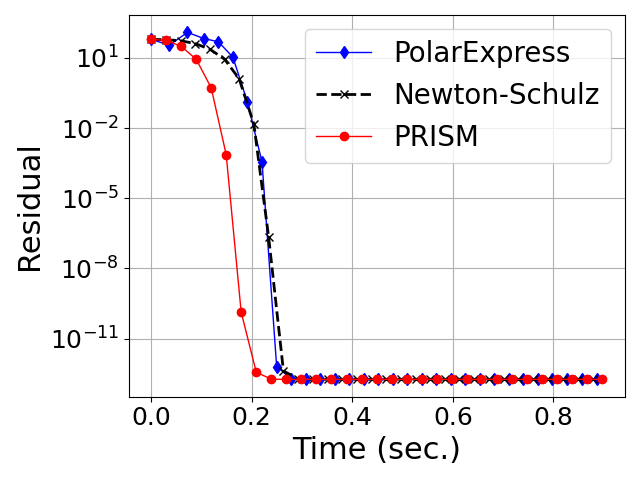}
  \end{subfigure}
  \begin{subfigure}{0.245\textwidth}
    \centering
    \includegraphics[width=\textwidth]{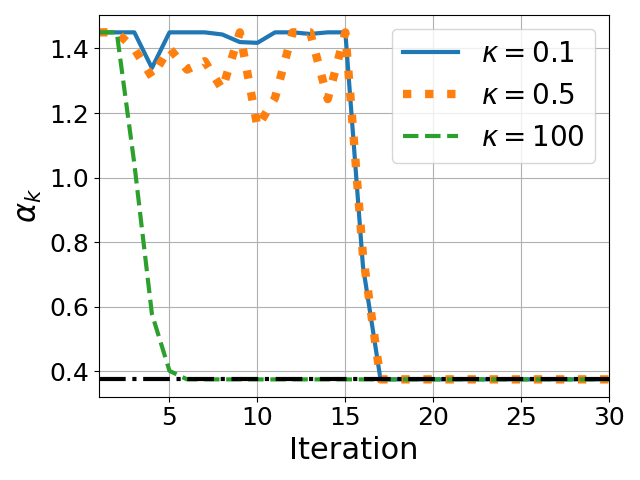}
  \end{subfigure}
  \caption{\small Convergence of degree-5 polynomial methods for computing the square root and inverse square root of $\mA = \mG^T\mG$, where $\mG\in\mathbb{R}^{n \times m}$ is a random matrix generated by HTMP~\citep{hodgkinson2025models} with different parameter $\kappa$. Smaller $\kappa$ indicates heavier tail in the spectral distribution. The figures from left to right show the Frobenius norm error $\|\mI - \mX_k^{-2}\mA\|_F$ for $\kappa = 0.1, 0.5, 100$, respectively. The last figure on the right shows the $\alpha_k$'s computed by (\ref{eq:alpha_sketch}) in PRISM for different $\kappa$'s.
  }
  \label{fig:SQRT_HTMP_compare}
\end{figure}

\makeatletter
\setlength{\@fptop}{0pt}
\makeatother
\begin{figure}[ht!]
  \centering
  \begin{subfigure}{0.33\textwidth}
    \centering
    \includegraphics[width=\textwidth]{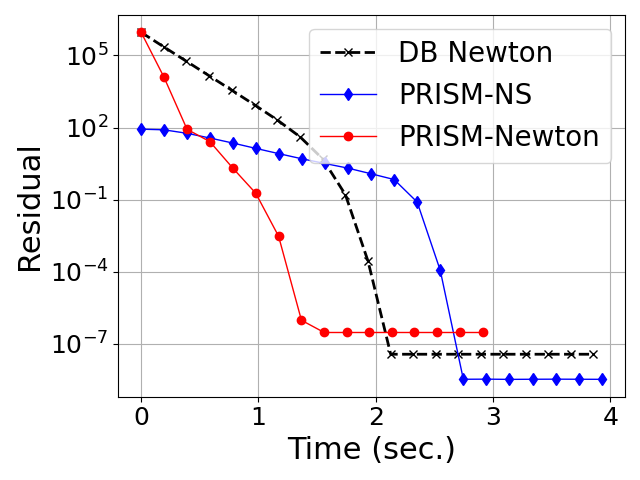}
  \end{subfigure}
  \begin{subfigure}{0.33\textwidth}
    \centering
    \includegraphics[width=\textwidth]{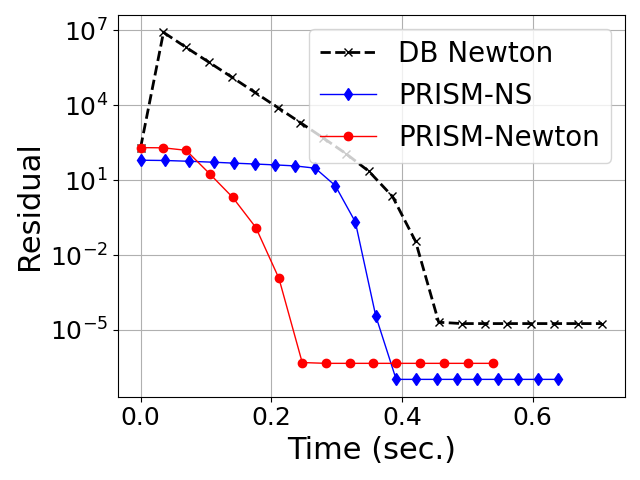}
  \end{subfigure}%
  \begin{subfigure}{0.33\textwidth}
    \centering
    \includegraphics[width=\textwidth]{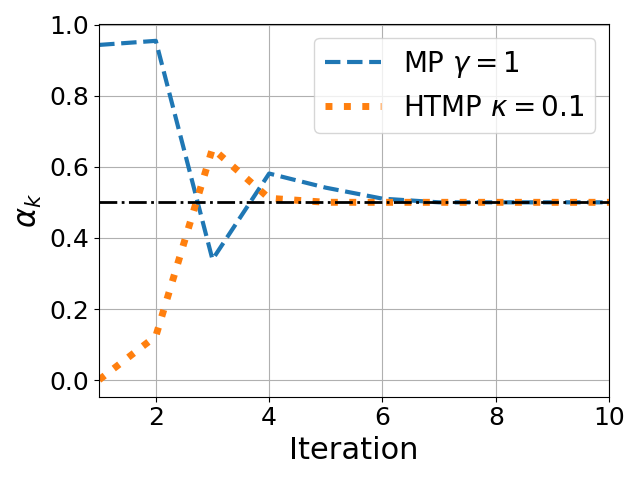}
  \end{subfigure}
  \caption{\small Convergence of PRISM-based DB Newton (PRISM-Newton, cf.~\Cref{tab:methods_for_matrix_functions} and \Cref{sec:DBNewton}) for computing the square root and inverse square root. We compare with the classical DB Newton iteration. For reference we also compare with the PRISM-based Newton-Schulz (PRISM-NS) that we tested in the previous experiment. We select two representative input matrices from the previous experiment: (Left) Wishart matrix with aspect ratio $\gamma=1$; (Middle) random matrix generated by HTMP with $\kappa=0.1$. The rightmost plot shows the coefficient $\alpha_k$ computed by PRISM-Newton. 
  }
  \label{fig:NEWTON_SQRT_compare}
\end{figure}

\end{document}